\definecolor{red}{rgb}{0.7,0.15,0.15}
\definecolor{green}{rgb}{0,0.5,0}
\definecolor{blue}{rgb}{0,0,0.7}
\definecolor{darkcyan}{rgb}{0.0, 0.55, 0.55}
\definecolor{MidnightBlue}{RGB}{25,25,112}
\definecolor{MidnightBlueComplementingGreen}{RGB}{25,112,25}
\definecolor{MidnightBlueComplementingPurple}{RGB}{112,25,112}
\definecolor{MidnightBlueComplementingRed}{RGB}{112,25,69}
\definecolor{WowColor}{rgb}{.75,0,.75}
\definecolor{MildlyAlarming}{rgb}{0.85,0.25,0.1}
\definecolor{SubtleColor}{rgb}{0,0,.50}
\definecolor{antiquefuchsia}{rgb}{0.57, 0.36, 0.51}
\definecolor{fashionfuchsia}{rgb}{0.96, 0.0, 0.63}
\definecolor{jade}{rgb}{0.0, 0.66, 0.42}
\definecolor{caribbeangreen}{rgb}{0.0, 0.8, 0.6}
\definecolor{aquamarine}{rgb}{0.5, 0.8, 0.85}
\definecolor{darkmidnightblue}{rgb}{0.0, 0.2, 0.4}
\definecolor{attentioncolor}{RGB}{152,90,81}
\definecolor{burgred}{RGB}{40,3,22}
\definecolor{AKGreen}{RGB}{17,123,92}
\definecolor{egyptianblue}{rgb}{0.06, 0.2, 0.65}
\definecolor{Turquoise}{RGB}{64,224,208}
\definecolor{darkjade}{RGB}{0,122,84}
\definecolor{Window1}{RGB}{92,150,31}
\definecolor{Window1dark}{RGB}{41,67,13}
\definecolor{Window2}{RGB}{255,168,28}
\definecolor{Window2dark}{RGB}{114,75,12}
\definecolor{Window3}{RGB}{255,96,33}
\definecolor{Window3dark}{RGB}{97,36,12}
\definecolor{InputColor}{RGB}{20,255,177}
\definecolor{InputColorlight}{RGB}{222,237,229}
 \definecolor{richblack}{rgb}{0.0, 0.25, 0.25}
\makeatletter \@addtoreset{equation}{section}
\def\theequation{\thesection.\arabic{equation}}
\newtheorem{theorem}{Theorem}[section]
\newaliascnt{assumption}{theorem}
\newtheorem{assumption}[assumption]{Assumption}
\newaliascnt{proposition}{theorem}
\newtheorem{proposition}[proposition]{Proposition}
\newaliascnt{definition}{theorem}
\newtheorem{definition}[definition]{Definition}
\newaliascnt{lemma}{theorem}
\newtheorem{lemma}[lemma]{Lemma}
\newaliascnt{example}{theorem}
\newtheorem{example}[example]{Example}
\newaliascnt{corollary}{theorem}
\newaliascnt{setting}{theorem}
\newtheorem{setting}[setting]{Setting}
\newaliascnt{remark}{theorem}
\newaliascnt{condition}{theorem}
\crefname{theorem}{theorem}{theorems}
\Crefname{theorem}{Theorem}{Theorems}
\crefname{assumption}{assumption}{assumptions}
\Crefname{assumption}{Assumption}{Assumptions}
\crefname{proposition}{proposition}{propositions}
\Crefname{proposition}{Proposition}{Propositions}
\crefname{definition}{definition}{definitions}
\Crefname{definition}{Definition}{Definitions}
\crefname{lemma}{lemma}{lemmas}
\Crefname{lemma}{Lemma}{Lemmas}
\crefname{example}{example}{examples}
\Crefname{example}{Example}{Examples}
\crefname{corollary}{corollary}{corollaries}
\Crefname{corollary}{corollary}{Corollaries}
\crefname{remark}{remark}{remarks}
\Crefname{remark}{remark}{Remarks}
\crefname{setting}{setting}{settings}
\Crefname{setting}{setting}{Settings}
\crefname{condition}{condition}{conditions}
\Crefname{condition}{Condition}{Conditions}
\def \E{\mathbb{E}}
\def \F{\mathbb{F}}
\def \L{\mathbb{L}}
\def \N{\mathbb{N}}
\def \P{\mathbb{P}}
\def \R{\mathbb{R}}
\def \S{\mathbb{S}}
\def \Z{\mathbb{Z}}
\def\Bc{{\cal B}}
\def\Dc{{\cal D}}
\def\Fc{{\cal F}}
\def\Lc{{\cal L}}
\def\Xc{{\cal X}}
\newcommand{\smallertext}[1]{\text{\fontsize{5}{5}\selectfont$#1$}} 
\newcommand{\smalltext}[1]{\text{\fontsize{4}{4}\selectfont$#1$}}
\newcommand{\tinytext}[1]{\text{\fontsize{3}{3}\selectfont$#1$}}
\def\eps{\varepsilon}
\newcommand{\Diffuse}{
 \Sigma
}
\definecolor{britishracinggreen}{rgb}{0.0, 0.26, 0.15}
\definecolor{darkcyan}{rgb}{0.0, 0.55, 0.55}
\definecolor{MidnightBlue}{RGB}{25,25,112}
\definecolor{MidnightBlueComplementingGreen}{RGB}{25,112,25}
\definecolor{MidnightBlueComplementingPurple}{RGB}{112,25,112}
\definecolor{MidnightBlueComplementingRed}{RGB}{112,25,69}
\definecolor{WowColor}{rgb}{.75,0,.75}
\definecolor{MildlyAlarming}{rgb}{0.85,0.25,0.1}
\definecolor{SubtleColor}{rgb}{0,0,.50}
\definecolor{antiquefuchsia}{rgb}{0.57, 0.36, 0.51}
\definecolor{fashionfuchsia}{rgb}{0.96, 0.0, 0.63}
\definecolor{jade}{rgb}{0.0, 0.66, 0.42}
\definecolor{caribbeangreen}{rgb}{0.0, 0.8, 0.6}
\definecolor{aquamarine}{rgb}{0.5, 0.8, 0.85}
\definecolor{lightseagreen}{rgb}{0.13, 0.7, 0.67}
\definecolor{darkgreen}{rgb}{0.0, 0.2, 0.13}
\definecolor{darkspringgreen}{rgb}{0.09, 0.45, 0.27}
\definecolor{attentioncolor}{RGB}{152,90,81}
\definecolor{burgred}{RGB}{40,3,22}
\definecolor{AnnieGreen}{RGB}{17,123,92}
\definecolor{Turquoise}{RGB}{64,224,208}
\definecolor{darkjade}{RGB}{0,122,84}
\definecolor{Window1}{RGB}{92,150,31}%
    \definecolor{Window1dark}{RGB}{41,67,13}%
\definecolor{Window2}{RGB}{255,168,28}
    \definecolor{Window2dark}{RGB}{114,75,12}
\definecolor{Window3}{RGB}{255,96,33}
    \definecolor{Window3dark}{RGB}{97,36,12}
\definecolor{InputColor}{RGB}{20,255,177}
    \definecolor{InputColorlight}{RGB}{222,237,229}
\definecolor{RedAlizarin}{rgb}{0.82, 0.1, 0.26}
\NewDocumentCommand{\Takashi}{mo}{
    \IfValueF{#2}{
                        {{
                            \textcolor{magenta}{ 
                            [\textbf{Takashi:}
                            \text{\rm {#1}}]
                            }
                        }}
        }
    \IfValueT{#2}{
                        \marginnote{{\scriptsize
                            \textcolor{magenta}{ 
                            \textbf{T:}
                            \textit{{#1}}
                            }
                        }}
        }
                    }
\definecolor{darkmidnightblue}{rgb}{0.0, 0.2, 0.4}
\definecolor{lightturquoise}{RGB}{175, 238, 238} 
\NewDocumentCommand{\AK}{mo}{%
    \IfValueF{#2}{%
        \hfill\\
        \begin{tcolorbox}[colback=lightturquoise!50, colframe=darkmidnightblue, 
                          boxrule=0.5pt, arc=4pt, left=2pt, right=2pt, top=2pt, bottom=2pt,
                          sharp corners=south, enhanced]
            \textcolor{darkmidnightblue}{\textbf{AK:} \textit{#1}}
        \end{tcolorbox}%
    }%
    \IfValueT{#2}{%
        \marginnote{%
            \begin{tcolorbox}[colback=lightturquoise!50, colframe=darkmidnightblue, 
                              boxrule=0.5pt, arc=4pt, left=2pt, right=2pt, top=2pt, bottom=2pt,
                              enhanced]
                \textcolor{darkmidnightblue}{\textbf{AK:} \textit{#1}}
            \end{tcolorbox}%
        }%
    }%
}
\NewDocumentCommand{\add}{m}{{\color{britishracinggreen}{{#1}}}}
\newcommand\numberthis{\addtocounter{equation}{1}\tag{\theequation}}
\newcommand{\uu}{{\textrm{u}}}
\newcommand{\up}{{\textrm{u}^{\prime}}}
\newcommand{\upp}{{\textrm{u}^{\prime \prime}}}
\author{Takashi {\sc Furuya} \footnote{Department of Biomedical Engineering, Doshisha University, Kyōto, Japan, takashi.furuya0101@gmail.com.} \and Anastasis {\sc Kratsios} \footnote{Department of Mathematics, McMaster University, McMaster University, Hamilton, Canada, kratsiosa@mcmaster.ca.} \and Dylan {\sc Possama\"{i}} \footnote{ETH Z\"urich, Department of Mathematics, Zürich, Switzerland, dylan.possamai@math.ethz.ch.} \and Bogdan {\sc Raoni\'{c}} \footnote{ETH Z\"urich, Seminar for Applied Mathematics and ETH AI Center, Zürich, Switzerland, braonic@ethz.ch.}}
\title{One model to solve them all: 2BSDE families via neural operators}
\date{\today}	
\begin{document}

\maketitle

\begin{abstract} 
We introduce a mild generative variant of the classical neural operator model, which leverages Kolmogorov--Arnold networks to solve infinite families of second-order backward stochastic differential equations ($2$BSDEs) on regular bounded Euclidean domains with random terminal time. Our first main result shows that the solution operator associated with a broad range of $2$BSDE families is approximable by appropriate neural operator models. We then identify a structured subclass of (infinite) families of $2$BSDEs whose neural operator approximation requires only a polynomial number of parameters in the reciprocal approximation rate, as opposed to the exponential requirement in general worst-case neural operator guarantees.

\medskip
\noindent{\bf Key words:}  Neural operators, solution operators, backward stochastic differential equations, exponential approximation rates.
\end{abstract}

\section{Introduction}
\label{s:Intro}
Fix a positive integer $d\in\N^\star$. We work on a filtered probability space $\big(\Omega,\Fc,\F\coloneqq (\Fc_t)_{t\in[0,\infty)},\P\big)$ carrying a $d$-dimensional $(\F,\P)$--Brownian motion $W$. Fix a sufficiently regular bounded open domain $\mathcal{D}\subset \mathbb{R}^d$, as well as maps $\mu:\mathbb{R}^d\longrightarrow \mathbb{R}^d$, $\Sigma:\mathbb{R}^d\longrightarrow \mathbb{R}^{d\times d}$, and $f:\mathbb{R}^d\times \mathbb{R}\times \R^d\times\R^{d\times d}\longrightarrow \mathbb{R}$, as well as an initial point $x\in \mathcal{D}$.
We are interested in \textit{simultaneously} approximately solving each 2BSDE in the (non-empty) compact infinite family $\mathcal{B}\subseteq  (X_{\cdot},Y_{\cdot}^{g,f_\smalltext{0}},Z_{\cdot}^{g,f_\smalltext{0}},\Upsilon^{g,f_\smalltext{0}},A^{g,f_\smalltext{0}})\}_{(g,f_\smalltext{0})\in\mathfrak W}$ where $\mathfrak W$ is a suitable subset of the Sobolev spaces $W^{1,p}(\partial \mathcal{D})\times W^{1,p}(\mathcal{D})$. These 2BSDEs are defined through the system
\begin{align}
\tag{SDE}
\label{eq:FBSDE_ForwardProcess}
        X_t 
    & = 
        x+\int_0^t \beta(X_s)\mathrm{d}s + \int_0^t\,
        \Diffuse
        (X_s)
        \mathrm{d}W_s,\; t\geq 0,\; \P\text{\rm--a.s.},\; \tau\coloneqq\inf\big\{t\geq 0: X_t\notin \Dc\big\},
\\
\notag
Y_t^{g,f_\smalltext{0}} & = 
        \underbrace{
            g(X_{\tau})
        }_{\text{\tiny Perturbation}}
    + \int_{t\wedge \tau}^{\tau}
        \bigg(
            \underbrace{
                f\big(X_s,Y_s^{g,f_\smalltext{0}},Z_s^{g,f_\smalltext{0}},\Upsilon_s^{g,f_\smalltext{0}}\big)
            }_{\text{\tiny Reference generator}}
            +
            \underbrace{
                f_0(X_s)
            }_{\text{\tiny Perturbation}}-\frac12\mathrm{Tr}\big[\Sigma(X_s)\Sigma^\top(X_s)\Upsilon_s^{g,f_\smalltext{0}}\big]
        \bigg)
    \mathrm{d}s\\
    \label{eq:FBSDE}
\tag{FBSDE}
    &\quad  - \int_{t\wedge \tau}^{\tau}Z_s^{g,f_\smalltext{0}} 
     \cdot 
    \mathrm{d}X_s,\; t\in[0,\tau),\; \P\text{\rm--a.s.},
\\
\label{eq:FBSDE_Martingale}
\tag{2BSDE}
Z_t^{g,f_\smalltext{0}} & = z_0 + \int_0^t A_s^{g,f_\smalltext{0}} \mathrm{d}s + \int_0^t \Upsilon_s^{g,f_\smalltext{0}} \mathrm{d}X_s, \; t\in[0,\tau),\; \P\text{\rm--a.s.}.
\end{align}

Using a variant (see \Cref{s:MainResults__ss:PDEForm} below for the proof) of the results of~\citeauthor*{cheridito2007second} \cite{cheridito2007second} for 2BSDEs with random terminal time $\tau$, as above, for each pair $(g,f_0)\in\mathfrak W$, if the following elliptic PDE 
\begin{gather}
\label{eq:AssociatePDE_General}
 f\big(x,u(x), \nabla u(x), \nabla^2 u(x)\big)  = -f_0(x),\; x\in \mathcal{D}
\;
u(x) = g(x),\; x\in \partial \mathcal{D},
\end{gather}
admits a smooth enough solution, then the 2BSDE system~\eqref{eq:FBSDE_ForwardProcess}, \eqref{eq:FBSDE}, \eqref{eq:FBSDE_Martingale} admits a solution of the form
\begin{equation}
\label{eq:PDE_to_2BSDE}
\begin{aligned}
    Y_t^{g,f_\smalltext{0}} 
    & = 
    u(X_t),\; Z_t^{g,f_\smalltext{0}}  = 
    \nabla u(X_t),\; \Upsilon_t^{g,f_\smalltext{0}}  = 
    \nabla ^2u(X_t),\; A_t^{g,f_\smalltext{0}} =
    \mathcal{L}\nabla u(X_t),\; t\in[0,\tau),\; \P\text{\rm--a.s.},
\end{aligned}
\end{equation}
where $\mathcal{L}$ denotes the generator associated to the forward process $X$ (without the drift term), defined for any continuous bounded test function $f$ on $\mathbb{R}^d$ by
\[
\mathcal{L}(f)\coloneqq  
\frac{1}{2} \mathrm{Tr}\big[ 
\Diffuse(x)\Diffuse(x)^{\top}
\nabla^2 f(x) \big],\; x\in\R^d,
\]
see~\cite[Equations (2.9) and (2.11)]{cheridito2007second} for a similar result in the parabolic case. 

\medskip
Our first main result, \Cref{thrm:generalapprox}, guarantees that the following solution map is approximable by a neural operator
\begin{equation}
\label{eq:Solution_Map}
\begin{aligned}
    {\Gamma^\smallertext{+}}: W^{1,\infty}(\mathcal{D};\mathbb{R})\times W^{1,\infty}(\mathcal{D};\mathbb{R})
    & \longrightarrow 
    W^{1,\infty}(\mathcal{D};\mathbb{R})
    \\
    (f_0,g)& \longmapsto u
\end{aligned}
\end{equation}
where $f_0$ and $g$ are the source and boundary data of the PDE in~\eqref{eq:AssociatePDE_General}, respectively; which equivalently perturb the generator and the terminal condition of the associated 2BSDEs with random terminal time $\tau$ in~\eqref{eq:FBSDE}.

\medskip
Consequently, the solution map associated to the family of second-order BSDEs is approximable by our stochastic neural operator model (which extends the neural operator model of \citeauthor*{furuya2024simultaneously} in \cite[Definition 4]{furuya2024simultaneously} from the classical BSDE setting to $2$BSDEs). This result thus provides a $2$BSDE analogue of neural operator approximability results, which typically follow a two-step strategy: first, establish a quantitative universal approximation theorem for general H\"{o}lder-continuous functions with the same source and target as the solution map (see \emph{e.g.} \citeauthor*{lu2021learning} \cite{lu2021learning}, \citeauthor*{korolev2022two} \cite{korolev2022two}, \citeauthor*{galimberti2022designing} \cite{galimberti2022designing}, \citeauthor*{yu2021arbitrary} \cite{yu2021arbitrary}, \citeauthor*{lanthaler2022error} \cite{lanthaler2022error}, \citeauthor*{lu2019deeponet} \cite{lu2019deeponet}, \citeauthor*{lanthaler2025nonlocality} \cite{lanthaler2025nonlocality}, \citeauthor*{kratsios2024mixture} \cite{kratsios2024mixture}, \citeauthor*{schwab2025deep} \cite{schwab2025deep} \citeauthor*{godeke2025universal} \cite{godeke2025universal}, \citeauthor*{furuya2025quantitative} \cite{furuya2025quantitative}, and \citeauthor*{adcock2025near} \cite{adcock2025near}); second, show that the solution map is sufficiently continuous, for instance H\"{o}lder-continuous, often via a perturbation analysis, verifying in turn it is in the scope of the main theorem, see \citeauthor*{alvarez2024neural} \cite{alvarez2024neural}, \citeauthor*{horvath2025transformers} \cite{horvath2025transformers}, \citeauthor*{lanthaler2025parametric} \cite{lanthaler2025parametric} or \citeauthor*{firouzi2025neural} \cite{firouzi2025neural}.

\medskip
One may ask if favourable approximation rates are achievable if the reference generator $f$ is simple enough, while still of course having a meaningful structure for several applications in optimal control and mathematical finance. Indeed, in 
\Cref{thm:semilinear}
we show that this is the case when the reference generator is of the {simplified} form
\begin{equation}
\label{eq:semilinear_generator}
f(x,y,z,w) := 
-\mathrm{Tr}\big[\gamma(x) w\big]
-\mathrm{div}(\gamma)(x)\cdot  z
+ \mu(x) \cdot z + \lambda(x) 
y +\tilde f(x,y)
\end{equation}
for some smooth enough maps $\lambda:\R^d\longrightarrow\R$, $\gamma:\R^d\longrightarrow \R^{d\times d}$, and $\mu:\R^d\longrightarrow\mathbb{R}^d$ and where $\tilde{f}:\R^d\times\R\longrightarrow\R$ is still sufficiently smooth. In this setting, we reduce the general fully non-linear elliptic PDE in~\eqref{eq:AssociatePDE_General} to the following semi-linear form
\begin{align}
\label{eq:semilinear}
- \nabla \cdot \gamma \nabla u(x) + \mu(x) \cdot \nabla u(x) + \lambda(x) u(x) +\tilde f(x,u)   = 
\underbrace{-f_0(x)}_{\text{\tiny Perturbation}}
,\;  x\in \mathcal{D},
\;
u(x)  =
\underbrace{g(x)}_{\text{\tiny Perturbation}}
,\;  x\in \partial \mathcal{D}.
\end{align}

\Cref{thm:semilinear} both extends \cite[Theorem 1]{furuya2024simultaneously} by allowing $\mu$ and $\lambda$ to be non-zero and $\Sigma$ to be non-constant and positive-definite, while no longer requiring any \emph{a priori} knowledge of the PDE itself to be hard-coded into our design of the NO. 
This is because the latter authors use explicit knowledge of Green's function associated with the PDE 
$-\nabla \cdot \gamma \nabla u(x) + \mu(x) \cdot \nabla u(x) + \lambda(x) u(x)$ 
to show that it admits a decomposition 
$\Phi(x-y) + \Psi(x,y)$, 
where $\Phi$ is a `difficult to approximate' singular part and $\Psi$ is an `easily approximated' smooth part. 
The convolution with the singular component $\Phi$ is then hard-coded into each of their NO architectures by leveraging the explicit closed form for $\Phi$ obtained in~\cite{cao2022expansion}. 
In contrast, in our approach no such closed-form nor \emph{a priori} knowledge of the PDE is required in our NO build. As should be expected, these extensions also come at the cost of devising an entirely different proof strategy.

\medskip
The PDE in~\eqref{eq:semilinear} can be connected back to the $2$BSDE~\eqref{eq:FBSDE} either when the divergence of $\gamma$ is absorbed into $\mu$, or
in the special case where $\gamma$ is divergence-free, \emph{i.e.} $\mathrm{div}(\gamma)^{\top}=0$, implying that $\nabla \cdot \gamma \nabla u = \mathrm{Tr}[\gamma \nabla u)$. In addition, when $\gamma$ is valued in the set of semi-definite matrices, and we take for $\Sigma$ any matrix square root of $e2\gamma$ (that is to say $\Diffuse \Diffuse^{\top}= 2\gamma$), then \eqref{eq:semilinear} reduces to the more standard Hamilton--Jacobi--Bellman--type semilinear equation
\begin{equation}
\label{eq:semilinear__DiffusionType}
\tilde{f}(x,u) 
+
\lambda(x) u(x) 
+ \mu(x) \cdot \nabla u(x)
- 
\frac{1}{2}\, 
\mathrm{Tr}\big[\Diffuse(x)\Diffuse(x)^{\top}\nabla^2 u(x)\big]
= 
-f_0(x)
,\; x\in \mathcal{D}
.
\end{equation}
In dimension $d\ge 2$, there is a whole zoology of divergence-free $\gamma$; thus this special case completely subsumes the case where $\gamma$ is constant, as considered in \cite{furuya2024simultaneously}.  For example, when $d=2$, if $\gamma$ is positive-definite--valued then there exist a twice continuously differentiable potential $\varphi_{\gamma}:\mathbb{R}^2\longrightarrow \mathbb{R}$ (this is the so-called Airy potential) such that 
$
        \gamma(x) 
    = 
        R^{\top} \big(\nabla^2 \varphi_{\gamma}(x)\big) R
$
for the symplectic matrix $R\coloneqq e_1e_2^{\top}-e_2e_1^{\top}$ (where $(e_1,e_2$ is the canonical basis of $\mathbb{R}^2$).  A simple non-constant example of such an Airy potential $\varphi_{\gamma}$ which additionally yields a positive-definite $\gamma$ is $\varphi_{\gamma}(x,y)\coloneqq (x^2+y^2)^2$.

\medskip
Our first objective is, therefore, to \textit{simultaneously} approximate the solution operator to general families of fully non-linear elliptic problems~\eqref{eq:AssociatePDE_General} and to obtain favourable rates for semi-linear special cases of the form~\eqref{eq:semilinear}.  Our strategy will be to construct a neural operator (NO) model which directly approximates (\Cref{thrm:generalapprox} resp.~\Cref{thm:semilinear}) the coefficient-to-solution operator mapping any $(g,f_0)\in \mathfrak{W}$ to the elliptic PDE it defines via~\eqref{eq:AssociatePDE_General} (resp.~\eqref{eq:semilinear}).  
Then, using the connections between elliptic PDEs and 2BSDEs with random terminal time in~\eqref{eq:PDE_to_2BSDE} formalised by our non-linear Feynman--Kac formula in \Cref{prop:NL_FK}, we construct an adapter transforming the functions output for our NO to tuples of stochastic processes approximating the solution to the family of associated 2BSDEs, see \Cref{thrm:Main_Stochastic}.  

\subsection{Related literature}
\label{s:Prelims__ss:RelatedLiterature}

There is a mature numerical literature on second–order BSDEs (2BSDEs), including weak approximation and time–discretisation schemes by \citeauthor*{possamai2015weak}~\cite{possamai2015weak}, \citeauthor*{ren2015convergence}~\cite{ren2015convergence}, \citeauthor*{yang2019explicit}~\cite{yang2019explicit}, and the recent non-equidistant scheme of \citeauthor*{pak2025nonequidistant}~\cite{pak2025nonequidistant}.
Learning–based approaches have also appeared (\emph{e.g.}, \citeauthor*{beck2019machine}~\cite{beck2019machine}, \citeauthor*{pereira2020feynman}~\cite{pereira2020feynman}, \citeauthor*{duong2023solving}~\cite{duong2023solving}, \citeauthor*{xiao2024numerical}~\cite{xiao2024numerical}),
but these methods are essentially \emph{per–instance}: they must be re–run (or re–trained) whenever coefficients or boundary data change.
By contrast, we learn a \emph{solution operator} that acts on the entire compact family of problems indexed by $(g,f_0)$, so a single trained model simultaneously solves all members of the family, both at the PDE and at the 2BSDE level via the PDE--(2)BSDE correspondence (\citeauthor*{cheridito2007second}~\cite{cheridito2007second}; see also \citeauthor*{pardoux1998backward}~\cite{pardoux1998backward}, \citeauthor*{soner2012wellposedness}~\cite{soner2012wellposedness}).

\medskip
When it comes to finite--dimensional ML for non-linear PDEs, a large body of work trains a finite–dimensional network for each target PDE separately (\emph{e.g.},
\citeauthor*{nusken2021solving}~\cite{nusken2021solving},
\citeauthor*{pham2021neural}~\cite{pham2021neural},
\citeauthor*{germain2022deepsets,germain2022approximation,germain2023neural}~\cite{germain2022deepsets,germain2022approximation,germain2023neural},
\citeauthor*{lefebvre2023differential}~\cite{lefebvre2023differential},
\citeauthor*{zhou2021actor}~\cite{zhou2021actor},
\citeauthor*{hu2024recent}~\cite{hu2024recent},
\citeauthor*{nguwi2024deep}~\cite{nguwi2024deep}).
Provable \emph{exponential} behaviour in this setting typically requires strong structure:
either linear second–order elliptic operators (\citeauthor*{marcati2023exponential}~\cite{marcati2023exponential,marcati2025expression}) or analyticity of the \emph{single} target solution, so that one may invoke classical exponential approximation of analytic functions by neural networks (\citeauthor*{mhaskar1995degree}~\cite{mhaskar1995degree}, \citeauthor*{mhaskar1996neural}~\cite{mhaskar1996neural}, \citeauthor*{e2018exponential}~\cite{e2018exponential}).

\medskip
On the other hand, neural operators (NOs) learn the infinite–dimensional coefficient–to–solution map and hence \emph{simultaneously} solve all PDEs in a parametric class with a single model; see the early universality observation of \citeauthor*{chen1995universal}~\cite{chen1995universal}, the DeepONet/FNO line (\citeauthor*{lu2019deeponet}~\cite{lu2019deeponet}, \citeauthor*{kovachki2023neural}~\cite{kovachki2023neural}), the CNO universality \citeauthor*{raonic2023convolutional}~\cite{raonic2023convolutional}, and a large set of abstract guarantees in Banach/Besov/Sobolev and non–linear metric settings
(\citeauthor*{yu2021arbitrary}~\cite{yu2021arbitrary},
\citeauthor*{lu2021learning}~\cite{lu2021learning},
\citeauthor*{lanthaler2022error}~\cite{lanthaler2022error},
\citeauthor*{adcock2022efficient}~\cite{adcock2022efficient},
\citeauthor*{korolev2022two}~\cite{korolev2022two},
\citeauthor*{cuchiero2023global}~\cite{cuchiero2023global},
\citeauthor*{neufeld2023universal}~\cite{neufeld2023universal},
\citeauthor*{kratsios2024mixture}~\cite{kratsios2024mixture},
\citeauthor*{adcock2025near}~\cite{adcock2025near},
\citeauthor*{godeke2025universal}~\cite{godeke2025universal},
\citeauthor*{lanthaler2025parametric}~\cite{lanthaler2025parametric},
\citeauthor*{schwab2025deep}~\cite{schwab2025deep},
\citeauthor*{de2022deep}~\cite{de2022deep},
\citeauthor*{furuya2025quantitative}~\cite{furuya2025quantitative},
\citeauthor*{kratsios2025deep}~\cite{kratsios2025deep},
\citeauthor*{acciaio2024designing}~\cite{acciaio2024designing},
\citeauthor*{kratsios2023universal}~\cite{kratsios2023universal}).
Within this line, \emph{exponential} (sometimes `exponential–in–depth') expression rates are known for holomorphic operator classes (\citeauthor*{adcock2024optimal}~\cite{adcock2024optimal}), for certain linear elliptic PDEs (including polytopal domains) (\citeauthor*{marcati2023exponential}~\cite{marcati2023exponential,marcati2025expression}), and for specific semilinear elliptic equations on smooth domains (\citeauthor*{furuya2024simultaneously}~\cite{furuya2024simultaneously}).
Other exponential statements rely either on super–expressive activations with effectively infinite pseudo–dimension (\citeauthor*{shen2022deep}~\cite{shen2022deep}, \citeauthor*{pollard1984convergence}~\cite{pollard1984convergence}, \citeauthor*{alvarez2024neural}~\cite{alvarez2024neural}) or on implicit/equilibrium–layer constructions exploiting convex variational structure (\citeauthor*{kratsios2025generative}~\cite{kratsios2025generative}).

\medskip
Our contribution in this landscape is that we design a NO that \emph{simultaneously} $(i)$ approximates the solution operator of a broad class of second–order elliptic PDEs/2BSDEs and $(ii)$ retains \emph{exponential–in–depth} rates in a substantially more general semi-linear regime than in the closest prior work.
Concretely
\begin{enumerate}[label=$(\roman*)$,leftmargin=1.65em]
\item \emph{family–level learning.}
We approximate the coefficient–to–solution map $\Gamma^\smallertext{+}$ on a compact infinite family indexed by $(f_0,g)$, hence a single training phase serves the whole family (PDEs and the associated 2BSDEs).
For fully non–linear elliptic equations we obtain general operator–level approximability (algebraic rates) by combining quantitative NO universality on Besov/Sobolev scales (\citeauthor*{yu2021arbitrary}~\cite{yu2021arbitrary}, \citeauthor*{lu2021learning}~\cite{lu2021learning}, \citeauthor*{lanthaler2022error}~\cite{lanthaler2022error}, \citeauthor*{adcock2022efficient}~\cite{adcock2022efficient}, \citeauthor*{korolev2022two}~\cite{korolev2022two}, \citeauthor*{galimberti2022designing}~\cite{galimberti2022designing}) with stability of the solution map (Krylov–type assumptions; cf.\ \citeauthor*{krylov2018sobolev}~\cite{krylov2018sobolev}).

\item \emph{Exponential rates for semi-linear equations under \emph{general} forward dynamics.}
In the semi-linear case
\[
-\nabla\!\cdot\!\gamma(x)\nabla u+\mu(x)\!\cdot\!\nabla u+\lambda(x)u+\tilde f(x,u)=-f_0(x),\; u|_{\partial\Dc}=g,
\]
with smooth, uniformly elliptic $\gamma$ and smooth $\mu$, $\lambda$, we implement the classical fixed–point map by a non-local NO layer built from (approximated) Green kernels; existence/regularity of Green functions for variable–coefficient operators is standard (\citeauthor*{kim2019green}~\cite{kim2019green}).
This yields accuracy $\varepsilon$ with \emph{logarithmic} depth $L=O(\log(1/\varepsilon))$, \emph{constant} width, and a finite non-local rank that scales polynomially in $1/\varepsilon$.
Compared to \citeauthor*{furuya2024simultaneously}~\cite{furuya2024simultaneously}, which hard–codes the singular part of the Green’s kernel and effectively assumes a driftless, constant–diffusion forward (so that the singular $\Phi$ is known in closed form), our construction does \emph{not} require a closed–form kernel split and therefore covers far more general, state–dependent Itô diffusions in the forward process and variable–coefficient elliptic operators, while preserving exponential depth–rates.

\item \emph{From {\rm PDE} to $(2)${\rm BSDE} at the operator level.}
Because each 2BSDE in the family admits the PDE representation, our NO for the elliptic map transfers directly to a NO for the $(Y,Z,\Upsilon,A)$–processes associated with the \emph{entire} 2BSDE family.
\end{enumerate}

\medskip
Building upon the finite-dimensional lower bounds of~\citeauthor*{yarotsky2017error} \cite{yarotsky2017error} 
, it was recently shown in \citeauthor*{lanthaler2025parametric} \cite{lanthaler2025parametric} that arbitrary continuous---or even several times continuously Fr\'{e}chet differentiable---non-linear operators between Sobolev spaces cannot be uniformly approximated on compact sets by NOs without requiring an exponential number of trainable parameters in the reciprocal approximation error.
Consequently, without additional structure beyond simple smoothness, there are insurmountable obstructions to operator learning.  Thus, even if one could establish H\"{o}lder-continuity of the coefficient-to-solution operator in the fully non-linear setting (\emph{e.g.} using results of \citeauthor*{taylor2023partial} \cite{taylor2023partial}, which we do show in \Cref{s:StabilityEstimate}) the solution operator would still not be regular enough to permit meaningful approximation rates. In such cases, any quantitative result is practically no more meaningful than an existential statement on the approximability of the coefficient-to-solution operator (see \Cref{thrm:generalapprox}), akin to the qualitative (rate-free) universal abstract approximation results of \citeauthor*{chen1993approximations} \cite{chen1993approximations}, \citeauthor*{benth2023neural} \cite{benth2023neural}, or \citeauthor*{bilokopytov2025universal} \cite{bilokopytov2025universal} for other NO architectures.

\medskip
When it comes to the closest exponential–rate results available in the literature, relative to linear/holomorphic NO rates (\citeauthor*{marcati2023exponential}~\cite{marcati2023exponential,marcati2025expression}, \citeauthor*{adcock2024optimal}~\cite{adcock2024optimal}), we require neither analyticity nor specialised domains; and unlike exponential claims relying on super–expressive activations or implicit/equilibrium layers (\citeauthor*{shen2022deep}~\cite{shen2022deep}, \citeauthor*{pollard1984convergence}~\cite{pollard1984convergence}, \citeauthor*{alvarez2024neural}~\cite{alvarez2024neural}, \citeauthor*{kratsios2025generative}~\cite{kratsios2025generative}), our architecture maintains finite capacity per layer with explicit depth/width/rank scaling.
Crucially, compared to \citeauthor*{furuya2024simultaneously}~\cite{furuya2024simultaneously}, our exponential regime permits markedly \emph{more general} forward dynamics and variable–coefficient elliptic operators, because the Green–kernel is learned/approximated rather than injected in closed form.

\section{Preliminaries}
\label{s:Prelim}

\subsection{Notation}
\label{s:Prelim__ss:Notation}
Let $p \in (1, \infty)$. 
We denote by $p^\prime \in (1, \infty)$ the conjugate of $p$ such that $1/p + 1/p^\prime = 1$.  We let $\N$ be set of non-negative integers, $\N^\star$ the set of positive integers, and $\Z$ the set of all negative and non-negative integers. We henceforth fix an ambient dimension\footnote{
In~\cite{furuya2024simultaneously} an explicit expression for the singular part of the Green's function associated to the stopped forward process's induced elliptic PDE was required, which additionally constrained $d\ge 3$ there, but not herein.} %
$d\in \mathbb{N}^\star$; and let $\S_d^\smallertext{+}$ denote the set of $d\times d$ (real) positive-definite matrices.  Recall that, every symmetric positive definite matrix $A\in \S_d^+$ has a unique well-defined square-root given by $\sqrt{A} \coloneqq \log(\exp(A)/2)$ where $\exp$ is the matrix exponential and $\log$ is its (unique) inverse on $\S_d^\smallertext{+}$, see \emph{e.g}. \citeauthor*{arabpour2024low} \cite[Lemma C.5]{arabpour2024low}.  For any $d\in \mathbb{N}^\star$ denote the Fr\"{o}benius norm of any $d\times d$ matrix $A$ by $\|A\|_\smallertext{F}$.  Given any metric space $(\mathcal{X},\rho)$, any $x\in \mathcal{X}$, and any radius $r\ge 0$, we define the open ball $B_{(\mathcal{X},\rho)}(x,r)\coloneqq \{u\in \mathcal{X}: \rho(x,u)<r\}$.  Given any two vector spaces $V$ and $W$, and any $x\in V$ and $y\in W$, we write $x\oplus y\coloneqq (x,y)=V\times W$.  

\medskip
For any $p\geq 1$, we let $\ell^p(\Z)$ be the set of real-valued sequences $(u_n)_{n\in\Z}$ indexed by $\Z$ such that
\[
\sum_{n\in\Z}|u_n|^p<\infty.
\]
We also let $\L^p(\R)$ be the set of $p$-integrable Lebesgue-measurable functions on $\R$.

\medskip
For any $I\in \mathbb{N}$, we use $C^I(\mathbb{R})$ to denote the vector space of real-valued at-least $I$-times continuously differentiable functions on $\mathbb{R}$, and $C_c^I(\mathbb{R})$ for the subset thereof consisting of those compactly supported functions therein.  
For any $(s,d,D)\in (\mathbb{N}^\star)^3$, we write $C^s(\mathcal{D},\mathbb{R}^D)$ (resp. $C^{\infty}(\mathcal{D},\mathbb{R}^D)$) for set of functions from $\mathbb{R}^d$ to $\mathbb{R}^D$ which are at-least $s$-times (resp.\ smooth) continuously differentiable when restricted to $\mathcal{D}$.  We refer the reader to Appendix~\ref{s:Besovbaby} for wavelet-centric definitions of Besov, and thus Sobolev, spaces.  

\medskip
Throughout this paper, $(\Omega,\mathcal{F},\mathbb{F}\coloneqq  (\mathcal{F}_t)_{t\ge 0},\mathbb{P})$ will denote a filtered probability space satisfying the usual conditions.  
For any $T>0$ we use $\mathcal{H}_T^2$ to denote the class of square-integrable predictable processes $X:[0,T]\times \Omega\longrightarrow\mathbb{R}$.

\subsection{Deep learning}
\label{s:Prelim__ss:DL}

Neural operators (NOs) extend deep learning from finite-dimensional vector spaces to infinite-dimensional Banach spaces, with standard NOs specialising in function-to-function mappings. Broadly speaking, there are three types of NO builds between function spaces: the Fourier neural operator--type builds (FNO), which iteratively use finitely parametrised integral-kernel affine transformations between their non-linearities; DeepONet-type architectures (see \citeauthor*{lu2019deeponet} \cite{lu2019deeponet}) which learn to adaptively regress against learnable bases; and encoder–-processor–-decoder-type models, such as PCA--Net (see \citeauthor*{chan2015pcanet} \cite{chan2015pcanet}) which project infinite-dimensional data using a Schauder basis before processing it via a standard finite-dimensional neural network, and then reassembles finite-dimensional basis functions using the network’s outputs as coefficients. 

\medskip
The first and last of these models tend to be more numerically stable, the middle construction can exhibit advantageous approximation rates, and the third model is more readily generalisable to non–-function space settings (see \emph{e.g.} \citeauthor*{galimberti2022designing} \cite{galimberti2022designing})
by directly lifting the approximation guarantees for classical neural networks (see e.g.~\citeauthor*{yarotsky2017error,bolcskei2019optimal,devore2021neural,gribonval2022approximation,kratsios2022do,shen2022deep,hong2024bridging,schneider2025nonlocal}) to infinite dimensions. Our neural-operator build combines the best of the first two models using a two-branch structure: the top branch of an FNO-type, the bottom branch inspired by DeepONets, with coefficients shared between layers. Moreover, we map into non–-function space targets when applying our deep-learning model in the 2BSDE setting by transforming its function space--valued outputs into processes via a `Feynman–-Kac adapter', that is to say a custom non-trainable readout layer encoding our nonlinear Feynman-–Kac representation (\Cref{prop:NL_FK}). Finally, we allow the non-linearities injecting structure at each layer of our NO to be adaptive rather than fixed, as in classical NO builds, thereby maximizing their flexibility, for instance granting them the ability to exactly perform multiplication, a property not shared by classical piecewise-linear ReLU activation functions.

\subsubsection{Residual Kolmogorov--Arnold networks (Res--KANs)}
\label{s:Prelim__ss:DL___sss:KANs}
The key idea behind Kolmogorov--Arnold networks (KANs) is to make the activation function itself trainable. In KANs, one typically focuses on the spline part of the following definition \citeauthor*{liu2025kan} \cite{liu2025kan}, with the role of the remaining part of the activation function being an afterthought, normally taken to some standard non-linearity such as the Swish or Sigmoid functions.  In this paper, we explicitly exploit both parts of KANs activation functions, and as such, we add some basic structural requirements to the `non-spline' part of the activation function (below in~\eqref{eq:KAN_r}) which serves a pointed role in our approximation theory in connection with the multi-resolution analysis (MRA); see \emph{e.g}. \citeauthor*{mallat1989multiresolution} \cite{mallat1989multiresolution}.

\medskip
Specifically, the activation $\sigma_{\beta:\smallertext{I}}:\R\longrightarrow \R$ maps any $x \in \mathbb{R}$ to a mixture of spline basis functions of varying degrees
\begin{equation}
\label{eq:KAN_r}
    \sigma_{\beta:\smallertext{I}}(x)
\coloneqq
        \underbrace{
            \beta_{\smallertext{-}1}
            \sigma_\smallertext{S}(x)
            +
            \beta_0
            \sigma_\smallertext{W}(x)
        }_{\text{\tiny Spectral structure}}
    +
        \underbrace{
            \sum_{i=1}^{\smallertext{I}} 
                \beta_i
                \mathcal{N}_i(x)
        }_{\text{\tiny Local structure}}
\end{equation}
where $I\in\N$, $\beta=(\beta_{\smallertext{-}1},\beta_0,\cdot,\beta_\smallertext{I})^\top \in \mathbb{R}^{\smallertext{I}+2}$ is a trainable vector of coefficients, and where for $i\in\{1,\dots,I\}$, $\mathcal{N}_i:\mathbb{R}\longrightarrow \mathbb{R}$ are the cardinal B-splines which, following \citeauthor*{mhaskar1992approximation} \cite[Equation (4.28)]{mhaskar1992approximation}, can be defined by $\mathcal{N}_0(x)\coloneqq \mathbf{1}_{[0,1)}$ and for any $i\in \mathbb{N}^\star$
\begin{equation}
	\label{lem:MichelliRepresentation}
	   \mathcal{N}_I(x)
	\coloneqq
    	\sum_{j=0}^{\smallertext{I}\smallertext{+}1}
    	\frac{(-1)^j \binom{I+1}{j} }{I!}
    	\mathrm{ReLU}(x-j)^I,\;x\in \mathbb{R}.
\end{equation}
Furthermore, $\sigma_\smallertext{S}:\mathbb{R}\longrightarrow \mathbb{R}$ as well as $\sigma_\smallertext{W}:\mathbb{R}\longrightarrow \mathbb{R}$ and satisfy the spectral properties in \Cref{ass:wavlets} below.  However, before turning to the properties, we elucidate the first few wavelets in \Cref{fig:SplineIllustrations}.
\begin{figure}[H]
	\centering
        \includegraphics[width=.5\linewidth]{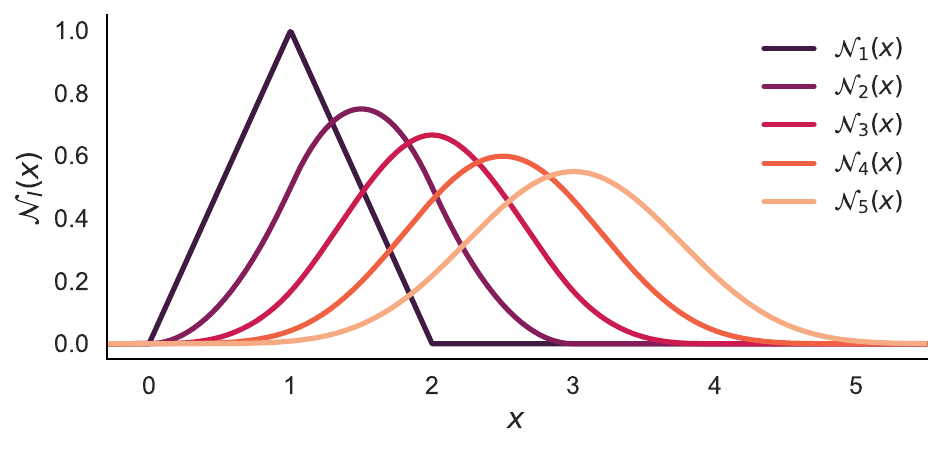}
	\caption{The cardinal $B$-splines of orders $I=0,1$, and $2$.}
	\label{fig:SplineIllustrations}
\end{figure}
\begin{assumption}[Daubechies properties of order $I$]
\label{ass:wavlets}
Fix $I\in \mathbb{N}$.  The {respective} $`$scale' {and $`$wavelet'} activation function $\sigma_\smallertext{S}$
and ${\sigma_\smallertext{W}}$ both belong to $C^I_c(\mathbb{R})$ if $I>0$ {$($resp. $L^2(\mathbb{R})$ when $I=0$ with compact essential support$)$} and satisfy the refinement equation of {\rm \citeauthor*{daubechies1988orthonormal} \cite[Equation (3.47)]{daubechies1988orthonormal}}, that is to say that there is a sequence of \emph{low-pass filters} $(h_k)_{k\in\Z}\in \ell^2(\mathbb{Z})$ summing to $\sqrt{2}$, satisfying the \emph{orthogonality condition}\footnote{See~\cite[Equation (3.18)]{daubechies1988orthonormal}} 
\[
\sum_{k\in \mathbb{Z}}h_{k\smallertext{-}2i}h_{k\smallertext{-}2j}=\mathbf{1}_{\{i=j\}},\; \forall (i,j)\in \mathbb{Z}^2,\]
and such that
\begin{align}
\label{eq:Daubechies_Refinement}
        \sigma_\smallertext{S}(x) 
    &=
        \sqrt{2}
        \sum_{k\in \mathbb{Z}}
            h_k \sigma_\smallertext{S}(2x - k),\; x\in\R,\\
\label{eq:Father_to_Mother}
        \sigma_\smallertext{W}(x) 
    &=
        \sqrt{2}
        \sum_{k\in \mathbb{Z}}
            (-1)^kh_{1\smallertext{-}k} \sigma_\smallertext{S}(2x-k),\; x\in\R
.
\end{align}
\end{assumption}
The existence of such activation functions (called Daubechies father and mother wavelets respectively), for arbitrary $I$, is guaranteed by~\citeauthor*{triebel2006theory} \cite[Theorem 1.61.$(ii)$]{triebel2006theory}, while algorithmic constructions can be found in~\citeauthor*{daubechies1992ten} \cite[Chapter 6.4]{daubechies1992ten}, and are standard in modern signal processing.  Nevertheless, in the very low regularity regime where $I=0$, the Haar system and the indicator function is a transparent example where \Cref{ass:wavlets} holds.
\begin{example}[Haar wavelets and indicator function for discontinuous regularity]
\label{ex:Haar}
If $I=0$ then, the indicator function of the unit interval $\sigma_\smallertext{S}\coloneqq \mathbf{1}_{[0,1)}$ and the Haar wavelet $\sigma_M\coloneqq \mathbf{1}_{[0,1/2)}-\mathbf{1}_{[1/2,1)}$ satisfy {\rm\Cref{ass:wavlets}} with $h_0=h_1=\frac{1}{\sqrt{2}}$ and $h_k=0$ whenever $|k|\ge 2$.  {Thus, $\sigma_M$ and $\sigma_S$ belong to $L^2(\mathbb{R})$ as expected since $I=0$.}
\end{example}

In a KAN, this activation operates component-wise, with parameters tailored to each neuron. That is, for {any integer $k$, any} $x \in \mathbb{R}^k$, and $\mathbf{\beta} \coloneqq  (\beta_1,\dots,\beta_k) \in \mathbb{R}^{(I+2)\times k}$, we define
\begin{align}
\label{eq:componentwise_action}
\nonumber\sigma_{\mathbf{\beta}:\smallertext{I}}\bullet:\R^k&\longrightarrow \R^k\\
        x=(x_1,\dots,x_k)^\top&\longmapsto 
        \big(
            \sigma_{\beta_\smalltext{1}:\smallertext{I}}(x_1),\dots,\sigma_{\beta_\smalltext{k}:\smallertext{I}}(x_k)
        \big)
        ^{\top}
.
\end{align}

We now introduce the core idea of \emph{residual} KAN networks. These networks incorporate an additional \emph{residual connection}, ensuring that signal is preserved during activation. Residual connections, standard in modern deep learning architectures, help stabilise training by preserving gradient flow and regularising the loss landscape, see \citeauthor*{riedi2023singular} \cite{riedi2023singular}. They also mitigate vanishing gradients that can be caused by normalisation layers. Following \citeauthor*{acciaio2024designing} \cite{acciaio2024designing}, we allow for flexible use of these residual paths, potentially modulated by a trainable gating mechanism. 

\medskip
More precisely, we fix positive integers $d_{\mathrm{out}}$ and $d_{\mathrm{in}}$, matrices $(A,G)\in\R^{d_{\smalltext{\rm out}}\times d_{\smalltext{\rm in}}}\times \R^{d_{\smalltext{\rm out}}\times d_{\smalltext{\rm in}}}$, with $G$ being diagonal (\emph{i.e.} $G_{i,j}=0$ for $(i,j)\in\{1,\dots,d_{\rm out}\}\times\{1,\dots,d_{\rm in}\}$ with $i\neq j$), as well as $b\in \mathbb{R}^{d_\smalltext{\mathrm{out}}}$, and $\beta \in \R^{({I}\smallertext{+}2) \times d_{\smalltext{\mathrm{out}}}}$, a matrix of trainable coefficients. We then define for $x\in \R^{d_{\smalltext{\rm in}}}$
\begin{equation}
\label{eq:norm_res_KAN_layers}
    \mathcal{L}(x|A,b,\beta,G:I)
\coloneqq  
    \underbrace{
        \sigma_{\mathbf{\beta}:\smallertext{I}}\bullet (Ax+b)
    }_{\text{\tiny KAN layer}}
    +   
    \underbrace{
            G x
    }_{\text{\tiny Residual connection}}
\end{equation}

Although compositions of such KAN layers define valid functions, these may lack higher-order smoothness—an issue for applications such as PDE solving that require high regularity. There are two ways to address this: (1) enforce that $\beta_i = 0$ for small $i$, or (2) apply a smoothing layer at the output. We adopt the first strategy to ensure that the functions realised by our \emph{smoothed residual {\rm KANs}} are infinitely differentiable.
\begin{definition}[Residual KANs (Res--KANs)]
\label{defn:SmoothedKAns}
Let $D$ and $I$ be positive integers, and let $\alpha>0$. A residual Kolmogorov--Arnold network $(${\rm Res--KAN}$)$ is a function $\widehat{f}:\mathbb{R}^d\longrightarrow \mathbb{R}^D$ with 
representation, for some $L\in\N^\star$
\begin{equation}
\label{eq:DEF_SRKs}
    \widehat{f}  = 
    A^{(L)} f^{(L)}+b^{(L)},
\end{equation}
with
\[
    f^{(0)}(x) = x,\; x\in\R^d,\; f^{(\ell)} = \mathcal{L}\big(f^{(\ell-1)}|A^{(\ell)},b^{(\ell)},\beta^{(\ell)},G^{(\ell)}:I\big),\; \ell\in\{1,\dots,L\},
\]
where, for $\ell\in\{1,\dots,L\}$, $A^{(\ell)}$ and $G^{(\ell)}$ are $d_{\ell+1}\times d_\ell$ matrices with $G^{(\ell)}$ \textit{diagonal}, $\beta^{(\ell)}$ is a $(I+2)\times d_{{\ell}{+}{1}}$ matrix, $b\in \mathbb{R}^{d_{\smalltext{\ell}\smalltext{+}\smalltext{1}}}$, for given positive integers $(d_0,\dots,d_{L+1})$ satisfying $d_0=d$ and $d_{L+1}=D$. In addition, for any $\ell\in\{1,\dots,L\}$, $\beta^{(\ell)}$ satisfies the \emph{sparsity} pattern ensuring smoothness%
\footnote{%
\add{The $\lceil \alpha \rceil$-time continuous differentiability of $\hat{f}$ follows from that of B-splines (see \citeauthor*{devore1993besov} \cite{devore1993besov}), and the chain rule.}%
}
\begin{equation}
\label{eq:DEF_SRKs__sparsity}
       \beta^{(\ell)}_{i,j} = 0, \;
        i < \lceil \alpha \rceil
        {\; \mbox{\rm and},\; j\in\{1,\dots,d_{\ell+1}\}}
.
\end{equation}
We denote the class of all {\rm Res--KANs} with $L$ hidden layers, width $W\coloneqq  \max_{\ell\in\{1,\dots,L+1\}}d_\ell$, adaptivity parameter $I$, and smoothness parameter $\alpha$, by $\operatorname{Res--KAN}_{\smallertext{L},\smallertext{W}}^{\smallertext{I},\alpha}(\mathbb{R}^d,\mathbb{R}^D)$.
\end{definition}

\subsubsection{Neural operator architectures}
\label{s:Prelim__ss:DL___sss:NeuralOperators}
We recall that we have fixed a constant $1<p<\infty$ and $\mathcal{D} \subset \mathbb{R}^d$, a bounded open domain. The classical neural operators are defined in, \emph{e.g.}, \citeauthor*{kovachki2023neural} \cite{kovachki2023neural} or \citeauthor*{lanthaler2025nonlocality} \cite{lanthaler2025nonlocality}.  

\medskip
Importantly, our NO architecture (see \Cref{fig:NOBuild}) contains both encoder--processor--decoder (EPD) type and Fourier neural operator (FNO)-type `branches' at each layer, whereby spectral features and physical features are iteratively processed in parallel, and then combined together using the adaptively activated neurons spearheaded by the KAN paradigm~\cite{kratsios2025kolmogorov}, rather than the static activation strategy of classical MLPs.  The resulting architecture thus exhibits beneficial properties both of FNO-type models and EPD-type models.

\begin{figure}[htp!]
    \centering
    \includegraphics[width=0.95\linewidth]{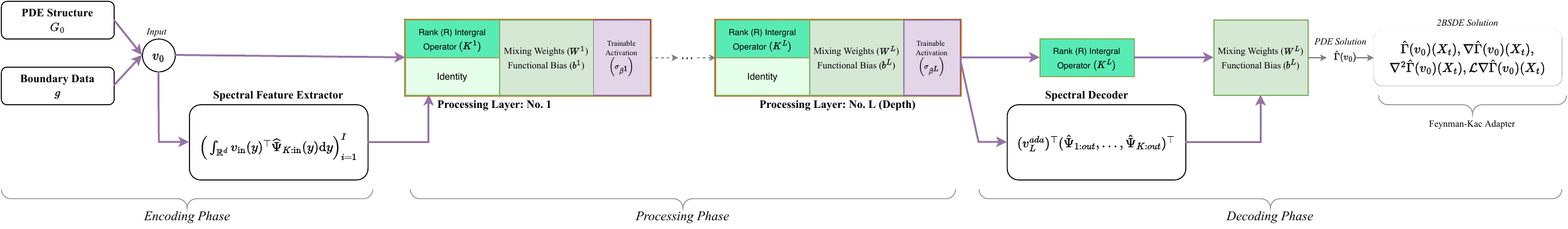}
    \caption{\footnotesize{\footnotesize The KANO (\Cref{def:neural-operator}) pipeline}.}
    \label{fig:NOBuild}
\end{figure}
What is illustrated in \Cref{fig:NOBuild} is as follows. 

\begin{itemize}
\item[$0)$] First boundary data $(g)$ and the PDE structure $(G_0)$ are concatenated into an input $v_0$.  

\medskip
\item[$1)$] Learnable spectral features akin to FNOs are then extracted from $v_0$ and concatenated thereto. 

\medskip
\item[$2)$] At each processing iteration, the top NO branch applies a finite rank (R) integral operator, then all features are mixed and adaptively activated.  

    \medskip
\item[$3)$] Finally the predictions are decoded via two branches: one applying another finite rank integral operator together as with to the FNO and the other leveraging a (trainable) spectral feature decoding akin to EPD, before both branches are mixed together to obtain the final prediction $\hat{\Gamma}$.
\end{itemize}
\medskip
In the 2FBNO variant (\Cref{defn:FBNO}): $\hat{\Gamma}(v_0)$ is passed through the \textit{Feynman--Kac adapter} (see \Cref{prop:NL_FK}).

\medskip
This being said, we can now proceed with the definition.  In the remainder of the paper $d_{\mathrm{in}}=2$, any tuple $v_{\mathrm{out}}\in W^{1,\infty}(\mathcal{D};\mathbb{R})^{d_\smalltext{\mathrm{in}}}$ will correspond to a pair of boundary and source data $(g,f_0)$, and $d_{\mathrm{out}}=1$.  However, since many of these result can be use in more general approximation theory of solutions operators to other PDEs, we keep the definition of our KANO model general enough to accomodate other applications.

\begin{definition}[Kolmogorov--Arnold neural operator (KANO)]\label{def:neural-operator}
Fix positive integers $d_{\mathrm{in}}$, $d_{\mathrm{out}}$, {$L$, $W$, $\widehat{L}$, $\widehat{W}$, $D_{\mathrm{ada}}$, $W_{\mathrm{ada}}$}, as well as smoothness parameters $\alpha>0$ and $I\in\mathbb{N}^\star$ with $3\le \alpha\le I$.
We define a \emph{Kolmogorov--Arnold neural operator (KANO)} {$\widehat{\Gamma} : W^{1,\infty}(\mathcal{D};\mathbb{R})^{d_\smalltext{\mathrm{in}}} \longrightarrow W^{1,\infty}(\mathcal{D};\mathbb{R})^{d_\smalltext{\mathrm{out}}}$} to be any map sending any $v_{\mathrm{in}} \in W^{1,\infty}(\mathcal{D},\mathbb{R})^{d_\smalltext{\mathrm{in}}}$ {to some $v_{L+1}\in W^{1,\infty}(\mathcal{D};\mathbb{R})^{d_\smalltext{\mathrm{out}}}$ where $v_{L+1}$ is defined iteratively} by



\begin{align}
\label{eq:NO_Neurons}
        v_{0}(x)
    & 
    \coloneqq  
        \begin{pmatrix}
                v_{0}^{\mathrm{crs}}(x)
        \\[0.5em]
                v_0^{\mathrm{ada}}(x)
        \end{pmatrix}
    \coloneqq  
    \begin{pmatrix}
        v_{\mathrm{in}}(x)
    \\
        \displaystyle
            \int_{\mathbb{R}^\smalltext{d}}
                v_{\mathrm{in}}(y)^{\top}
                \widehat{\Psi}_{1:\mathrm{in}}(y)
            \mathrm{d}y\\
            \vdots\\
         \displaystyle    
                \int_{\mathbb{R}^\smalltext{d}}v_{\mathrm{in}}(y)^{\top}\widehat{\Psi}_{\smallertext{K}:\mathrm{in}}(y)\mathrm{d}y
    \end{pmatrix},\; x\in\Dc,
\\[0.8em]
v_{\ell\smallertext{+}1}(x) 
&
    \coloneqq  
        \begin{pmatrix}
                v_{\ell+1}^{\mathrm{crs}}(x)
        \\[0.5em]
                v_{\ell+1}^{\mathrm{ada}}(x)
        \end{pmatrix}
\coloneqq 
    \sigma_{\mathbf{\beta_{\ell}}:I}\bullet
    \Bigg(
        W^{\ell}
        \begin{pmatrix}
        v_{\ell}^{\mathrm{crs}}(x) + \big(K^{(\ell)} v_{\ell}\big)(x) 
        \\[0.5em]
        v_{\ell}^{\mathrm{ada}}(x)
        \end{pmatrix}
        +b^{\ell}(x)
    \Bigg),\; \ell\in\{0,\dots,L-1\},\; x\in\Dc,
\\[0.8em]
    v_{\smallertext{L}\smallertext{+}1}(x) 
&\coloneqq  
    W^{(\smallertext{L})}
    \begin{pmatrix}
    v_{\smallertext{L}}^{\mathrm{crs}}(x)+\big(K^{({L})}v_{\smallertext{L}}\big)(x)
    \\[0.5em]
    \big(v_\smallertext{L}^{\mathrm{ada}}\big)^{\top}(x)
    \begin{pmatrix}
            \widehat{\Psi}_{1:\mathrm{out}}(x)\\
            \vdots\\
            \widehat{\Psi}_{\smallertext{K}:\mathrm{out}}(x)
    \end{pmatrix}
    \end{pmatrix}
 +  
    b^{({L})}(x)
    ,\; x\in\Dc,
\end{align}
where $ \sigma_{\mathbf{\beta}:I}$ is as in~{\rm\Cref{eq:componentwise_action} }and acts as in~\eqref{eq:KAN_r}. In particular, $\beta_{\ell}\in \mathbb{R}^{(I+2)\times d_{\ell+1}}$, each $\big(\widehat{\Psi}_{k:\mathrm{in}}\big)_{k\in\{1,\dots,K\}}$ and $\big(\widehat{\Psi}_{k:\mathrm{out}}\big)_{k\in\{1,\dots,K\}}$ are {\rm Res--KANs} of depth $D_{\mathrm{ada}}$ and width $W_{\mathrm{ada}}$, and for any $\ell\in\{0,\dots,L+1\}$, we have $W^{(\ell)} \in \mathbb{R}^{d_{\smalltext{\ell}\smalltext{+}\smalltext{1}}\times d_{\smalltext{\ell}}}$
\begin{align*} 
& 
\big(K^{(\ell)} v\big)(x) \coloneqq  
\int_{\mathcal{D}} k_{\smallertext{N}\smallertext{N}}^{(\ell)}(x,y) v(y) \mathrm{d}y, \; x \in \mathcal{D}, \; v \in L^{p}(\mathcal{D};\mathbb{R})^{d_{\smalltext{\ell}}},\; b^{(\ell)}(x) \coloneqq b_{\smallertext{N}\smallertext{N}}^{(\ell)}(x),\; x\in\Dc,
\end{align*}
where $k_{\smallertext{N}\smallertext{N}}^{(\ell)} 
 \in \text{\rm Res--KAN}_{\hat{\smallertext{L}},\hat{\smallertext{W}}}^{\smallertext{I},\alpha}(\mathbb{R}^{d\times d},\mathbb{R}^{d_{\smalltext{\ell}\smalltext{+}\smalltext{1}}\times d_{\smalltext{\ell}}}) 
$ and $b_{\smallertext{N}\smallertext{N}}^{(\ell)} 
 \in \text{\rm Res--KAN}_{\hat{\smallertext{L}},\hat{\smallertext{W}}}^{\smallertext{I},\alpha}(\mathbb{R}^{d},\mathbb{R}^{d_{\smalltext{\ell}}})$ { are  {\rm Res--KANs} of depth $\widehat{L}$ and width $\widehat{W}$.} 
We denote the above class of {\rm KANOs} by
\[
\mathcal{NO}^{\smallertext{L}, \smallertext{W}, \smallertext{I},\alpha}_{\hat{\smallertext{L}}, \hat{\smallertext{W}}}\big(W^{1,\infty}(\mathcal{D};\mathbb{R})^{d_{\smalltext{\rm in}}}, W^{1,\infty}(\mathcal{D};\mathbb{R})^{d_\smalltext{{\rm out}}}\big),
\]
which we abbreviate to $\mathcal{NO}^{\smallertext{L}, \smallertext{W},\smallertext{I},\alpha}_{\hat{\smallertext{L}}, \hat{\smallertext{W}}}$ when the dimensions and domains are contextually evident.
\end{definition}
For $I\coloneqq  \lceil s\rceil$, we henceforth abbreviate 
\begin{equation}
\label{eq:NO-union}
\mathcal{NO}_{\smallertext{I},\alpha}\coloneqq  \bigcup_{(\smallertext{L},\hat{\smallertext{L}},\smallertext{W},\hat{\smallertext{W}},\alpha)\in (\mathbb{N}^\smalltext{\star})^\smalltext{4}\times(0,1)} \mathcal{NO}^{\smallertext{L}, \smallertext{W},\smallertext{I},\alpha}_{\hat{\smallertext{L}},\hat{\smallertext{W}}},
\end{equation}
Motivated by the PDE representation of the solutions to each member of our family of second-order BSDEs, given in~\eqref{eq:PDE_to_2BSDE}, and due to~\cite{cheridito2007second}, we extend the (semi-)classical class of neural operators above to the following stochastic model as follows.
\begin{definition}[$2$Generative neural operators (2FBNO)]
\label{defn:FBNO}
Fix dimensions $d$, and $d_{\rm in}$, as well as smoothness parameters $3\le \alpha\le I$, with $I\in \mathbb{N}^\star$, and fix depths $L\in\N^\star$, $\widehat{L}\in \mathbb{N}^\star$, and widths $W\in\N^\star$, $\widehat{W}\in \mathbb{N}^\star$.  The class of forward--backward {\rm KANOs} $(2${\rm FBNOs}$)$ $\mathcal{FB}^{\smallertext{L}, \smallertext{W},\smallertext{I},\alpha}_{\hat{\smallertext{L}}, \hat{\smallertext{W}},X}$ consists of all 
\[
\begin{aligned}
    \widehat{\Gamma}:W^{1,\infty}(\mathcal{D},\mathbb{R})^{d_\smalltext{\rm in}} &\longrightarrow  (\mathcal{H}_T^2)^4 \coloneqq \prod_{i=1}^4\, \mathcal{H}_T^2
\\
    f\coloneqq  (f_1,\dots,f_{d_{\rm in}}) & \longmapsto (\widehat{Y}^{f},\widehat{Z}^{f},\widehat{\Upsilon}^{f},\widehat{A}^{f}),
\end{aligned}
\]
for which there is a $\Gamma\in\mathcal{NO}^{\smallertext{L}, \smallertext{W}, \smallertext{I},\alpha}_{\hat{\smallertext{L}}, \hat{\smallertext{W}}}(W^{1,\infty}(\mathcal{D};\mathbb{R})^{d_\smalltext{\rm in}}, W^{1,\infty}(\mathcal{D};\mathbb{R}))$ satisfying the representation
\[
\begin{aligned}
    Y_t^{f}
    = 
    \Gamma(f)(X_t)
    ,\;
    Z_t^{f} 
    = 
    \big(\nabla \Gamma(f)\big) (X_t)
    ,\;
    \Upsilon_t^{f} 
    = 
    \big(\nabla^2 \Gamma(f)\big) (X_t)
    ,\; \mbox{\rm and}\;
    A_t^{f} 
    =
    \big(\mathcal{L}\nabla \Gamma(f)\big) (X_t),
\end{aligned}
\]
where, as before, $\mathcal{L}$ denotes the generator of $X$, without the drift.
\end{definition}

\section{Main results}
\label{s:Main_Results}

\subsection{Elliptic PDE representation of the 2BSDE system}
\label{s:MainResults__ss:PDEForm}
For the reader's convenience, we repeat the PDE in~\eqref{eq:AssociatePDE_General}.
\begin{equation}
\label{eq:AssociatePDE_General__copy}
 f\big(x,u(x), \nabla u(x), \nabla^2 u(x)\big) = -f_0(x),\; x\in \mathcal{D},\; u(x) = g(x),\; x\in \partial \mathcal{D},
\end{equation}

\begin{proposition}[Non-linear Feynman--Kac's formula]
\label{prop:NL_FK}
Let $u$ be a classical solution to the {\rm PDE} \eqref{eq:AssociatePDE_General}, such that all the quantities below are defined and continuous in time 
\[
 Y_t 
     = 
    u(X_t),\; Z_t  = 
    \nabla u(X_t),\; \Upsilon_t  = 
    \nabla ^2u(X_t),\; A_t =
    \mathcal{L}\nabla u(X_t),\; t\in[0,\tau),\; \P\text{\rm--a.s.},
\]
where 
\[
  X_t 
    = 
        x+\int_0^t \beta(X_s)\mathrm{d}s + \int_0^t\gamma(X_s)\mathrm{d}W_s,\; t\geq 0,\; \P\text{\rm--a.s.},\; \tau\coloneqq\inf\big\{t\geq 0: X_t\notin \Dc\big\}.
\]
Then $(Y,Z,\Upsilon,A)$ is a solution to \eqref{eq:FBSDE}--\eqref{eq:FBSDE_Martingale}.
\end{proposition}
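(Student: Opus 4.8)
The plan is a verification argument. Since the candidate tuple $(Y,Z,\Upsilon,A)$ is given \emph{explicitly} as $u$ and its derivatives composed with the forward diffusion $X$ of~\eqref{eq:FBSDE_ForwardProcess}, one only has to apply It\^o's formula to $u(X_\cdot)$ and to $\nabla u(X_\cdot)$ and then substitute the PDE~\eqref{eq:AssociatePDE_General__copy} to identify the resulting drifts with the generators in~\eqref{eq:FBSDE} and~\eqref{eq:FBSDE_Martingale}. The one real preliminary is localisation: $u$ is a classical solution only on the \emph{open} domain $\Dc$ (merely continuous up to $\partial\Dc$, where $u=g$), so It\^o's formula cannot be invoked directly on $[0,\tau]$. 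I would therefore introduce the stopping times $\tau_n\coloneqq\inf\{t\ge 0:\mathrm{dist}(X_t,\partial\Dc)\le 1/n\}\wedge n$; on $[0,\tau_n]$ the path of $X$ stays in a compact subset of $\Dc$ on which $u$ is $C^3$ with bounded derivatives, while $\tau_n\uparrow\tau$ with $\tau<\infty$ $\P$--a.s.\ (the latter from boundedness and regularity of $\Dc$ together with non-degeneracy of the diffusion coefficient, e.g.\ by comparison with the exit time of a ball containing $\Dc$).

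To obtain~\eqref{eq:FBSDE}: applying It\^o's formula to $u\in C^2$ along $X$ on $[0,t\wedge\tau_n]$ gives $\mathrm{d}\big(u(X_r)\big)=\nabla u(X_r)\cdot\mathrm{d}X_r+\mathcal{L}u(X_r)\,\mathrm{d}r$, the drift of $X$ being absorbed into the $\mathrm{d}X$--integral and $\mathcal{L}$ being, by definition, the drift-free It\^o correction of $X$. Evaluating~\eqref{eq:AssociatePDE_General__copy} at $x=X_r$ and inserting $Y_r=u(X_r)$, $Z_r=\nabla u(X_r)$, $\Upsilon_r=\nabla^2 u(X_r)$ yields $f(X_r,Y_r,Z_r,\Upsilon_r)+f_0(X_r)=0$, so the $\mathrm{d}r$--integrand of~\eqref{eq:FBSDE} reduces to $-\tfrac12\mathrm{Tr}\big[\Sigma\Sigma^{\top}(X_r)\Upsilon_r\big]=-\mathcal{L}u(X_r)$, which exactly cancels the It\^o drift. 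Rearranging the expansion between $t$ and $\tau_n$ for $t\in[0,\tau)$, using $u=g$ on $\partial\Dc$ so that $u(X_{\tau_n})\to g(X_\tau)$, and passing to the limit $n\to\infty$ in the (localised) Lebesgue and stochastic integrals, produces~\eqref{eq:FBSDE}.

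To obtain~\eqref{eq:FBSDE_Martingale}: apply It\^o's formula componentwise to $\nabla u\in C^2$ along $X$. Since $\nabla(\partial_i u)$ is the $i$-th row of the Hessian $\nabla^2 u$ and $\mathcal{L}$ acts componentwise, this gives $\mathrm{d}\big(\nabla u(X_r)\big)=\nabla^2 u(X_r)\,\mathrm{d}X_r+\mathcal{L}\nabla u(X_r)\,\mathrm{d}r=\Upsilon_r\,\mathrm{d}X_r+A_r\,\mathrm{d}r$. Integrating from $0$ to $t$ on $[0,t\wedge\tau_n]$ (for $t<\tau$ one has $t<\tau_n$ for all large $n$, since the path $[0,t]\ni r\mapsto X_r$ has compact range inside $\Dc$, so no limit is even needed here) and using $z_0=\nabla u(x)=\nabla u(X_0)$ yields $Z_t=z_0+\int_0^t A_r\,\mathrm{d}r+\int_0^t\Upsilon_r\,\mathrm{d}X_r$, i.e.~\eqref{eq:FBSDE_Martingale}.

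The main obstacle is not the It\^o calculus but the behaviour of the derivatives of $u$ near $\partial\Dc$: a classical solution need not have $\nabla u$ or $\nabla^2 u$ extending continuously to $\overline{\Dc}$, so the stochastic integrals $\int\nabla u(X_r)\cdot\mathrm{d}X_r$, $\int\nabla^2u(X_r)\,\mathrm{d}X_r$ and the drift $\int\mathcal{L}u(X_r)\,\mathrm{d}r$ must be shown to stay finite and to converge as $\tau_n\uparrow\tau$ in~\eqref{eq:FBSDE}. This is precisely what the standing hypothesis that $Y,Z,\Upsilon,A$ are defined and time-continuous on $[0,\tau)$ buys us: it provides the pathwise bounds needed to remove the localisation and to guarantee that the integrals appearing in~\eqref{eq:FBSDE}--\eqref{eq:FBSDE_Martingale} are finite $\P$--a.s. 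The cleanest way to organise the write-up, mirroring the parabolic computation of~\citeauthor*{cheridito2007second}~\cite{cheridito2007second}, is to carry out all PDE substitutions on the intervals $[0,\tau_n]$ first and only afterwards invoke (stochastic) dominated convergence to send $n\to\infty$.
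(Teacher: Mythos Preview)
Your proposal is correct and follows essentially the same approach as the paper's proof: apply It\^o's formula to $u(X_\cdot)$ and to $\nabla u(X_\cdot)$, then substitute the PDE $f+f_0=0$ to recognise the resulting expressions as~\eqref{eq:FBSDE} and~\eqref{eq:FBSDE_Martingale}. The only difference is that you spell out an explicit localisation via $\tau_n\uparrow\tau$, whereas the paper simply invokes ``$u$ is smooth enough'' and works directly on $[t,\tau)$; your added care is sound but not strictly required under the standing regularity hypotheses of the proposition.
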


\begin{proof}
Since $u$ is smooth enough, we can apply It\^o's formula to obtain for any $t\in[0,\tau)$
\begin{align*}
u(X_t)&=u(X_\tau)-\int_t^\tau\frac12\mathrm{Tr}\big[\gamma(X_s)\gamma^\top(X_s)\nabla^2 u(X_s)\big]\mathrm{d}s-\int_t^\tau\nabla u(X_s)\cdot\mathrm{d}X_s,
\end{align*}
as well as
\begin{align*}
\nabla u(X_t)=\nabla u(x)+\int_0^t\nabla^2u(X_s)\mathrm{d}X_s+ \int_0^t\Lc\nabla u(X_s)\mathrm{d}s=\nabla u(x)+\int_0^t\Upsilon_s\mathrm{d}X_s+\int_0^tA_s\mathrm{d}s.
\end{align*}
it follows by the PDE satisfied by $u$ that
\[
u(X_t)=g(X_\tau)+\int_t^\tau\bigg(f(X_s,Y_s,Z_s,\Upsilon_s)+f_0(X_s)-\frac12\mathrm{Tr}\big[\gamma(X_s)\gamma^\top(X_s)\Upsilon_s\big]\bigg)\mathrm{d}s-\int_t^\tau Z_s^\top\mathrm{d}X_s,
\]
as desired.
\end{proof}

\subsection{General approximability guarantee}
\label{s:MainResults__ss:General}
Let $0<\delta \le 1$ and let $\S_d^{\delta}$ denote the subset of $\S^\smalltext{+}_d$ consisting of matrices satisfying the following near--norm preserving property: for every $x\in \mathbb{R}^d$ 
\[
		\delta \|x\|^2
	\le 
		xA^{\top}x
	\le 
		\frac{1}{\delta}\|x\|^2
.
\]
 We write generically $\up$ for $(x_0,\dots,x_d) \in \mathbb{R}^{1+d}$, $\upp$ for any element of $\S_d^{\delta}$, and $\uu\coloneqq  (\up,\upp)$.

\begin{setting}
\label{setting:General_PDE}
 and let $\bar{G}:\mathbb{R}^d\longrightarrow [0,\infty)$ be Borel measurable.  
Fix constants $K_0,K_F\ge 0$,
$L_F,C_g\ge 0$,
 and $0<\delta \le 1$.  We require the following of the domain $\mathcal{D}$.
\end{setting}
\begin{assumption}[Domain Regularity]
\label{ass:Regularity_GeneralCase__DomainRegularity}
The domain $\mathcal{D}\subseteq \mathbb{R}^d$ is a non-empty bounded domain with $C^{1,1}$-boundary
 satisfying the exterior ball condition.
\end{assumption}
Our general approximability result, for which favourable rates cannot generally be guaranteed, considers families of \textit{fully non-linear elliptic} PDEs
\begin{equation*}
    f\big(x,u(x),\nabla u(x),\nabla^2 u(x)\big) =0,  \; x\in \mathcal{D},\; 
    u(x) = g(x), \; x\in \partial \mathcal{D},
\end{equation*}
where the boundary data $g\in W^{k,p}(\partial \mathcal{D})$ is assumed to be sufficiently smooth, \emph{i.e.}\ $k\ge 2$.

\medskip
Following \citeauthor*{krylov2018sobolev} \cite[Chapter 14]{krylov2018sobolev}, our PDEs will have sufficiently regular solutions under the following conditions.
\begin{assumption}
\label{ass:Regularity_GeneralCase__EllipticPDEFunctionalStructure}
Assume that $p>d$, and fix constants $( c_1, c_2, R_0)\in(0, 1]^3$, $L_\smallertext{F} \ge 0$, a function $\omega_\smallertext{F}:[0,\infty) \longrightarrow [0,\infty)$ with $\omega_\smallertext{F}(0) = 0$, a Borel measurable function $\bar{G}:\mathbb{R}^d \longrightarrow [0,\infty)$, and Borel measurable functions $F$ and $G$ of the variables $(u_0, \up, x)$ and $(u, x)$ respectively. We have

\begin{itemize}
\item[$(i)$] $f = F + G$, and for all $\upp \in \S_d^\smallertext{+}$, $\up \in \mathbb{R}^{1\smallertext{+}d}$, and $x \in \mathbb{R}^d$, we have
\begin{equation}
\label{eq:structure_GrowthOfG}
    \big| G(\uu, x) \big| \le 
    c_1 \|\upp\|_\smallertext{F} + c_2 \|\up\| + \bar{G}(x),\; F(0,x) = 0;
\end{equation}
\item[$(ii)$] $F$ is $L_F$--Lipschitz continuous with respect to $\upp;$

\item[$(iii)$] for any $v \in \mathbb{R}$, $0 < r \le R_0$, and $x \in \mathcal{D}$, there exists a convex function $\bar{F}:\S_d \longrightarrow [0,\infty)$ such that
\begin{enumerate}
\item[$(a)$] $\bar{F}(0,x) = 0$, and $\nabla_{u^{\smalltext{\prime}\smalltext{\prime}}} \bar{F}$ has range in $\S_d^{\delta}$ at every point of twice differentiability of $\bar{F};$

\smallskip
\item[$(b)$] for every $\upp \in \S_d^\smallertext{+}$ with $\|\upp\|_F = 1$, we have
\begin{equation}
\label{eq:ConvexityconditionF}
    \inf_{B(r,x) \cap \mathcal{D}} 
    \sup_{\bar{r} > 0}
    \frac{
        \big| \bar{F}(\up_0, r \upp, u) - \bar{F}(\tau \upp) \big|
    }{r}
    \le 
    c_2 \mathrm{Vol} \big( \mathcal{D} \cap B(r,x) \big),
\end{equation}
where $\mathrm{Vol}(A)$ denotes the $d$-dimensional Lebesgue measure of a Lebesgue-measurable set $A \subseteq \mathbb{R}^d;$

\smallskip
\item[$(c)$] for any $(u,v) \in \mathbb{R}^2$, $x \in \mathcal{D}$, and $\up \in \S_d^\smallertext{+}$, we have
\begin{equation}
\label{eq:ModulosconditionF}
    \big| F(u, \upp, x) - F(v, \upp, x) \big|
    \le 
    \omega_\smallertext{F}(|u - v|) \|\upp\|_{\smallertext{F}}.
\end{equation}
\end{enumerate}
\end{itemize}
\end{assumption}

The next definition introduces appropriate perturbations of the original PDE we consider, and uses notations from \Cref{ass:Regularity_GeneralCase__EllipticPDEFunctionalStructure}.
\begin{definition}[PDE perturbation space $\mathcal{X}_k(r)$]
\label{def:pertrubation_set}
Fix $r>0$, $k\in \mathbb{N}^\star$ and let $\mathcal{X}_k(r)$ consist of all pairs $(\bar{G}_0,g)\in
W^{2,p}(\mathcal{D})\times W^{k,p}(\mathcal{D})$ with $\|g\|_{W^{\smalltext{k}\smalltext{,}\smalltext{p}}(\mathcal{D})}\le r$.  
Define $G_0\coloneqq  G+\bar{G}_0$ and, for every pair $(G_0,g)\in \mathcal{X}_k(r)$ denote the solution to the following associated fully non-linear elliptic {\rm PDE} by $u_{\bar{G}_\smalltext{0},g}$
\begin{equation}
\label{eq:EllipticPDE_FullyNonLinear}
\begin{aligned}
    \bigg(
    \underbrace{F+G}_{\mbox{\tiny \rm Structure \normalsize}}
    +
    \underbrace{\bar{G}_0}_{\mbox{\tiny \rm Perturbation}}
    \bigg)\big(x,u(x),\nabla u(x),\nabla^2 u(x)\big) =0,  \; \forall x \in \mathcal{D},
\;
    u(x) = 
    \underbrace{g(x)}_{\mbox{\tiny \rm Perturbation}},
        \; \forall x \in\partial \mathcal{D}.
        \end{aligned}
\end{equation}
\end{definition}
\begin{example}[Source perturbations only]
\label{source_perturbations}
We can, of course, restrict ourselves to perturbations of the source condition itself only, in which case we may restrict our attention to $\bar{G}_0$ which are constant in their first argument; \emph{i.e.} $\bar{G}_0(u,x)=f_0(x)$ for some $f_0\in W^{k,p}(\mathcal{D})$, similarly to the special case in \eqref{eq:semilinear}.
\end{example}

\begin{theorem}[Approximability of the perturbation-to-solution map]
\label{thrm:generalapprox}
Fix $q\in[1,+\infty)$, let $\mathcal{D}$ be a bounded exterior-thick domain in $\mathbb{R}^d$ with $C^{1,1}$-boundary, let $r>0$, $k>1+\max\big\{1,\tfrac{d}{p}\big\}$, and $\mathcal{X}\subseteq \mathcal{X}_k(r)$ be compact.  

\medskip
Suppose {\rm \Cref{ass:Regularity_GeneralCase__DomainRegularity,ass:Regularity_GeneralCase__EllipticPDEFunctionalStructure}} hold
and that both $\sigma_\smallertext{S}$ and $\sigma_\smallertext{W}$ satisfy {\rm \Cref{ass:wavlets}}.  
Then, for every approximation error $\varepsilon>0$, there exists some neural operator $\hat{\Gamma}\in \mathcal{NO}_{\lceil k\rceil,1}$, cf.~\eqref{eq:NO-union}, satisfying the uniform estimate
\begin{equation}
\label{eq:uniform_estimate}
    \sup_{(\bar{G}_\smalltext{0},g)\in\mathcal{X}}\,
    \big\|
            u_{\bar{G}_\smalltext{0},g}
        -
            \hat{\Gamma}(\bar{G}_0,g)
    \big\|_{W^{\smalltext{2}\smalltext{,}\smalltext{p}}(\mathcal{D})}
<
    \varepsilon
.
\end{equation}
\end{theorem}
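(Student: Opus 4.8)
The plan is to decompose the statement into two essentially independent pieces: (i) a regularity/stability analysis of the perturbation-to-solution map $\Gamma\colon (\bar G_0,g)\mapsto u_{\bar G_0,g}$ with values in $W^{2,p}(\mathcal D)$, and (ii) a quantitative universality result for the $\mathcal{NO}$ class on a compact subset of a Sobolev space, which I then apply to $\Gamma$ restricted to the compact set $\mathcal X$. The two pieces are glued in the usual NO fashion: first show $\Gamma$ is continuous (ideally uniformly continuous / Hölder) on $\mathcal X$, then invoke the abstract approximation theorem to get $\hat\Gamma$ within $\varepsilon$ in operator norm.

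For step (i), I would work under Assumptions~\ref{ass:Regularity_GeneralCase__DomainRegularity} and \ref{ass:Regularity_GeneralCase__EllipticPDEFunctionalStructure}, which are exactly the hypotheses of the Krylov $W^{2,p}$-solvability theory \cite[Chapter 14]{krylov2018sobolev}. Using $p>d$ and the Morrey embedding $W^{k,p}(\mathcal D)\hookrightarrow C^{1,\gamma}(\overline{\mathcal D})$ for $k>1+d/p$, the boundary data $g\in W^{k,p}$ has a $C^{1,\gamma}$ trace, so the Dirichlet problem \eqref{eq:EllipticPDE_FullyNonLinear} is well posed with a unique solution $u_{\bar G_0,g}\in W^{2,p}(\mathcal D)$; the $C^{1,1}$-boundary plus exterior ball condition give the required barrier/boundary regularity. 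The key quantitative input is an \emph{a priori} estimate of the form $\|u_{\bar G_0,g}-u_{\bar G_0',g'}\|_{W^{2,p}(\mathcal D)}\le C\big(\|\bar G_0-\bar G_0'\|_{W^{2,p}}+\|g-g'\|_{W^{k,p}}\big)$, uniformly over $\mathcal X$. I would obtain this by subtracting the two PDEs, using the Lipschitz hypotheses (ii)--(iii) on $F$ in $u''$, the modulus-of-continuity bound \eqref{eq:ModulosconditionF}, and the linear-growth control \eqref{eq:structure_GrowthOfG} on $G$, to write the difference as a linear elliptic equation for $w\coloneqq u-u'$ with coefficients frozen along a segment (a standard "$F$ is concave/Lipschitz $\Rightarrow$ difference solves a uniformly elliptic linear equation" argument), then apply the Alexandrov--Bakelman--Pucci estimate and Krylov's $W^{2,p}$-bound to $w$. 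The compactness of $\mathcal X$ makes the constant $C$ uniform. (The excerpt points at \Cref{s:StabilityEstimate} and \cite{taylor2023partial} for exactly such a stability estimate, so I would cite that.)

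For step (ii), the target is: any continuous map from a compact subset $\mathcal X$ of $W^{2,p}(\mathcal D)\times W^{k,p}(\mathcal D)$ into $W^{2,p}(\mathcal D)$ is approximable in sup-over-$\mathcal X$ operator norm by an element of $\mathcal{NO}_{\lceil k\rceil,1}$. I would prove this by the encoder--processor--decoder route that the KANO architecture is built to support: use a wavelet (Daubechies, order $I=\lceil k\rceil$, from \Cref{ass:wavlets}) multiresolution projection $P_N$ to compress the input pair into finitely many coefficients; this is where the Res--KAN spectral branch of \Cref{def:neural-operator} does the work, since the $\sigma_S,\sigma_W$ activations let the inner integrals $\int v_{\mathrm{in}}^\top\widehat\Psi_{k:\mathrm{in}}$ reproduce wavelet coefficients. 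On the finite-dimensional coefficient space, compactness of $\mathcal X$ plus continuity of $\Gamma$ (from step (i)) gives uniform continuity, so the induced finite-dimensional map $(\text{coeffs})\mapsto(\text{coeffs of }u)$ is continuous on a compact set and hence approximable by a Res--KAN (classical universal approximation for the KAN class, via the B-spline representation \eqref{lem:MichelliRepresentation}); this is the FNO-type processor branch. Finally reconstruct $u$ from its wavelet coefficients via the synthesis operator, which the output spectral branch $v_L^{\mathrm{ada}\top}(\widehat\Psi_{k:\mathrm{out}})$ implements. The approximation in $W^{2,p}$ (rather than just $L^p$) is handled because the Daubechies wavelets of order $\lceil k\rceil\ge 3$ form an unconditional basis of the relevant Besov/Sobolev scale (Appendix~\ref{s:Besovbaby}), so truncation error $\|u-P_N u\|_{W^{2,p}}\to 0$ and the Res--KAN smoothness pattern \eqref{eq:DEF_SRKs__sparsity} keeps all outputs in $W^{2,p}$. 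Summing: $\|u_{\bar G_0,g}-\hat\Gamma(\bar G_0,g)\|_{W^{2,p}}\le \|u-P_Nu\|_{W^{2,p}}+\|P_Nu-\text{synth}(\text{Res--KAN}(P_N\text{input}))\|_{W^{2,p}}<\varepsilon$ for $N$ large and the Res--KAN accurate enough.

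\textbf{Main obstacle.} I expect the genuine difficulty to be the uniform $W^{2,p}$ stability estimate in step (i): the PDE is \emph{fully nonlinear} and only $W^{2,p}$-regular (not classical), so the "freeze coefficients along a segment" trick must be carried out carefully for a.e.-solutions, controlling the linear operator's ellipticity constants uniformly via $\delta$ and $L_F$, and then applying Krylov's interior-plus-boundary $W^{2,p}$ estimate — whose constant depends on the domain geometry (hence the $C^{1,1}$ and exterior-ball hypotheses are used in an essential way). Making that constant uniform over the compact family $\mathcal X$, and ensuring it does not blow up as $\bar G_0,g$ range over $\mathcal X$, is the crux; the rest of the argument is either standard (wavelet MRA, KAN universality) or already packaged in the architecture. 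A secondary technical point is verifying that $\lceil k\rceil$-smoothness of the wavelets suffices to get convergence in $W^{2,p}$ rather than merely $L^p$, which reduces to the Besov-space characterisation recalled in Appendix~\ref{s:Besovbaby} and the condition $k>1+\max\{1,d/p\}$ guaranteeing $W^{2,p}\subset B$ for the appropriate Besov scale on which the wavelets are unconditional.
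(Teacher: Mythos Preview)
Your two-step plan---(i) a Lipschitz/H\"older stability estimate for $(\bar G_0,g)\mapsto u_{\bar G_0,g}$ in $W^{2,p}$, then (ii) a universal approximation theorem for the KANO class on compact sets in the Besov/Sobolev scale, applied to the (now continuous) solution map---is exactly the route the paper takes; the paper packages (i) as \Cref{lem:HolderRegularity} and (ii) as \Cref{prop:Universality}, then concludes in two lines. Your description of the encoder--processor--decoder mechanism for (ii) via the wavelet Schauder basis matches the paper's proof of \Cref{prop:Universality} closely.

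One small point of difference worth noting on step (i): you propose to derive the stability estimate by the ``freeze coefficients along a segment'' linearisation plus ABP, whereas the paper instead invokes \cite[Theorem 14.1.3]{krylov2018sobolev} as a black box; that theorem yields an estimate with an extra $\|u_{\bar G_0,g}-u_{\bar G_0',g'}\|_{C(\mathcal D)}$ term on the right-hand side, which the paper then controls via a comparison/maximum-principle lemma \cite[Lemma 6.6.10]{krylov2018sobolev} bounding the $C$-norm of the difference by $\|g-g'\|_{C(\partial\mathcal D)}$, followed by Sobolev embedding (this is where $k>1+\max\{1,d/p\}$ is actually used). Your segment-freezing route would also work in principle, but the absorption of the lower-order $C$-norm term is a genuine step either way and is worth flagging explicitly.
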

 The proof of \Cref{thrm:generalapprox} is based on two ingredients.  First, we establish the local--Lipschitz regularity of the coefficient-to-solution map associated to our family of fully non-linear elliptic PDEs (\Cref{lem:HolderRegularity}) verifying the only necessary condition for approximability by continuous models classes; such as our NO, namely continuity---a property which need not be immediate for arbitrary coefficient-to-solution maps.  
 Next, we rely on \Cref{prop:Universality} which establishes a general universal approximation theorem for operators between Besov spaces. 
 
 \medskip
 In this sense, \Cref{prop:Universality} for our NO architecture which, among other things, can be regarded as a generalisation of \citeauthor*{kovachki2023neural} \cite[Theorem 11]{kovachki2023neural}, which does not cover Besov spaces $B_{q,r}^s(\mathcal{D})$ for finite values of $q$ and $r$ ( recall that $W^{s,p}(\mathcal{D})=B_{q,r}^s(\mathcal{D})$ \cite[Remark 1.2]{triebel2008function}).  We emphasise that here, the case of finite $q$ (and $r$) is necessary since $W^{s,\infty}(\mathcal{D})$-spaces are automatically excluded from both \Cref{prop:Universality} and \cite[Theorem 11]{kovachki2023neural}, as well as any encoder-decoder-type model using basis expansions (\emph{e.g.}~\cite{galimberti2022designing}), since $W^{s,\infty}$ is not separable and thus cannot admit a Schauder basis.  Additionally, since this space is non-separable and any realistic NO model must be parameterised by finitely many parameters and depend continuously on them, any realistic NO model defines a separable space, As such, it cannot be dense/universal in spaces of continuous functions between non-separable spaces---again by elementary topological considerations.  

\medskip
We now consider the approximation of a specialized family of elliptic PDEs, whose solution operator exhibits enough structure so that \textit{it} (not all continuous functions) can be approximated on non-separable space $W^{1, \infty}(\mathcal{D})$.

\subsection{Feasible rates}
\label{s:MainResults__ss:FeasibleRates___sss:PDE}
\subsubsection{Semi-linear elliptic PDE}

In what follows, we will make use of the map $S_{\gamma,\mu,\lambda}: W^{(d+3)/2,2}(\partial \mathcal{D}; \mathbb{R}) \longrightarrow W^{1, \infty}(\mathcal{D}; \mathbb{R})$ sending boundary data to domain data, and defined for each $g \in W^{(d+3)/2,2}(\partial \mathcal{D}; \mathbb{R})$ by 
\begin{equation}
\label{eq:Sgamma_boundary_to_domain_operator}
S_{\gamma,\mu,\lambda}(g)  \coloneqq   w_{g},
\end{equation}
where $w_{g} \in W^{(d + 4)/2,2} (\mathcal{D}; \mathbb{R}) \subset W^{1, \infty}(\mathcal{D}; \mathbb{R})$. 
is the unique solution of 
\[
-\nabla \cdot \gamma \nabla w_{g} + \mu \cdot \nabla w_{g} + \lambda w_{g} = 0 \ \mathrm{in} \ \mathcal{D}, \; 
w_{g}=g \ \mathrm{on} \ \partial \mathcal{D}.
\]
We assume the following on the maps $\gamma$, $\mu$ and $\lambda$.
\begin{assumption}
\label{ass:gamma}
The maps $\gamma : \mathcal{D} \longrightarrow \mathbb{R}^{d \times d}$, $\mu : \mathcal{D} \longrightarrow \mathbb{R}^d$, and $\lambda : \mathcal{D} \longrightarrow \mathbb{R}$ satisfy the following conditions
\begin{itemize}
    \item[$(i)$] $\gamma \in C^{\infty}(\bar{\mathcal{D}}; \mathbb{R}^{d\times d})$, $\mu \in C^{\infty}(\bar{\mathcal{D}}; \mathbb{R}^d)$, and $\lambda \in C^{\infty}(\bar{\mathcal{D}}; \mathbb{R})$ where $C^{\infty}(\bar{\mathcal{D}}; \mathbb{R}^d)$ and $C^{\infty}(\bar{\mathcal{D}}; \mathbb{R}^{d\times d})$ denote the spaces of all $d$-dimensional vector-valued and $d\times d$ matrix-valued functions that are infinitely differentiable on $\mathcal{D}$ and whose derivatives admit continuous extensions to the closure $\bar{\mathcal{D}}$;
    \item[$(ii)$] $\gamma$ is uniformly elliptic and bounded in the sense that there are positive constants $\gamma_0$ and $\gamma_1$ such that
\[
\gamma_0 \|\xi\|^2 \leq \xi^\top \gamma(x)\xi \leq \gamma_1 \|\xi\|^2, \; \forall (x,\xi) \in \mathcal{D}\times\mathbb{R}^d;
\]
    \item[$(iii)$] $\mu$ and $\lambda$ are such that
\[
\lambda \geq 0,\; \text{\rm and}\; \lambda \geq \nabla \cdot \mu \sum_{i=1}^{d} \partial_{x_i} \mu
.
\]
\end{itemize}
\end{assumption}
Next, we summaries our main assumptions on $\tilde{f}$.
\begin{assumption}
\label{ass:semilinear-term}
The map $\tilde{f}: \mathcal{D} \times \mathbb{R} \longrightarrow \mathbb{R}$ satisfies
\begin{itemize}
\item[$(i)$] 
there exists $\delta_0 >0$ and $H \in \mathbb{N}^\star\setminus\{1,2\}$ such that 
\[
\tilde{f}(x,z)=\sum_{h=0}^{H}\frac{\partial_{z}^{h}\tilde{f}(x,0)}{h!}z^h,\; \text{\rm for}\; \|z\|<\delta_0,\; \text{\rm and}\; x \in \mathcal{D};
\]
\item[$(ii)$] $\tilde{f}(\cdot,0)=\partial_{z}^{1}\tilde{f}(\cdot,0)=0;$
\item[$(iii)$] $ \partial_{z}^{h}\tilde{f}(\cdot,0) \in C^{\infty}(\bar{\mathcal{D}};\mathbb{R})$ for all $h \in\{2,\dots,H\}$. 
\end{itemize}
\end{assumption}
Assumption (i) posits that $\tilde f(x,z)$ is analytic at $z=0$ and represented by a finite power series truncated at order $H$. Assumption (ii) removes the zeroth- and first-order terms, which are already captured by $f_0(x)$ and $\lambda(x)u(x)$ in \eqref{eq:semilinear}. Assumption (iii) requires all coefficient functions to be smooth, ensuring a well-posed setting for the subsequent analysis.

Finally, we formulate a smallness assumption.
\begin{assumption}
\label{ass:choice-delta-p}
We take $0<\delta< \delta_0$ $($where $\delta_0$ comes from {\rm \Cref{ass:semilinear-term}.$(i)$}$)$ so that 
\begin{align*}
C_1 \delta < 1,
\;
\rho\coloneqq C_2 \delta < 1,
\;
C_3 \delta < 1,
\end{align*}
where the positive constants $C_1$, $C_2$, $C_3$ will appear in \eqref{eq:C-1}, \eqref{eq:C-2}, and \eqref{eq:C-7}, and depend only $p$, $d$, $\mathcal{D}$, $\tilde{f}$, $\gamma$, and $\mu$.
\end{assumption}

Under the above assumptions, we have the following approximation guarantee for the \textit{solution operator} of the PDE associated with our randomly stopped second-order BSDE system~\eqref{eq:FBSDE_ForwardProcess}, \eqref{eq:FBSDE}, \eqref{eq:FBSDE_Martingale}.
\begin{theorem}[{Exponential approximation rates: solution operator to the elliptic problem}]\label{thm:semilinear}
Let\footnote{This is need as our proof relies on the approximation results of~\cite{kim2019green} for the relevant Green's function associated to our PDEs.} $d \geq 3$.
Let {\rm \Cref{ass:gamma,ass:semilinear-term,ass:choice-delta-p}} hold.
Suppose that $\mathcal{D}$ is a bounded open set with Lipschitz boundary in $\mathbb{R}^d$.
Let $1<s<2$ and $1\leq p < \frac{d}{d-1}$.
Then, for any $0<\varepsilon <1$, there are positive integers $L$, $W$, $\widehat{L}$, $\widehat{W}$, and $\Gamma \in { \mathcal{NO}^{\smallertext{L}, \smallertext{W}, \smallertext{I},\alpha}_{\hat{\smallertext{L}}, \hat{\smallertext{W}}} }(W^{1,\infty}(\mathcal{D};\mathbb{R})^{2}, W^{1,\infty}(\mathcal{D};\mathbb{R}))$ such that
\[
\sup_{(f_0,g)\in\Bc} \big\|\Gamma^{\smallertext{+}}(f_0,g) - \Gamma(f_0, S_{\gamma,\mu,\lambda}(g)) \big\|_{W^{\smalltext{1}\smalltext{,} \smalltext{\infty}}(\mathcal{D};\mathbb{R})} \leq \varepsilon.
\]
where the supremum is taken over the set
\[
\Bc\coloneqq B_{W^{\smalltext{1}\smalltext{,}\smalltext{\infty}}(\mathcal{D};\mathbb{R})}(0, \delta^2) 
\times 
B_{W^{ \smalltext{(}\smalltext{d}\smalltext{+}\smalltext{3}\smalltext{)}\smalltext{/}\smalltext{2},2}(\partial \mathcal{D}; \mathbb{R})}(0, \delta^2).
\]
Moreover, we have the following estimates for parameters $L=L(\Gamma)$, $W=W(\Gamma)$, $\widehat{L}=\widehat{L}(\Gamma)$, and $\widehat{W} = \widehat{W}(\Gamma)$,
\[
L \leq C \log (\varepsilon^{-1}), \; W \leq C,\; \widehat{L} \leq C, \; \widehat{W} \leq C \varepsilon^{\smallertext{-} \frac{1}{(s-1)p}},
\]
where $C>0$ depends only on $s$, $p$, $d$, $\mathcal{D}$, $\tilde{f}$, $\gamma$, and $\mu$.
\end{theorem}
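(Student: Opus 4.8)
The plan is to realize the classical Banach fixed-point construction for the semilinear Dirichlet problem \eqref{eq:semilinear} inside a KANO, and then to control the approximation of each building block — the Green operator of the linear part, the boundary-lifting operator $S_{\gamma,\mu,\lambda}$, the polynomial nonlinearity $\tilde f$ — by a Res--KAN of the advertised size. First I would reduce the problem to a homogeneous-boundary one: writing $u = w_g + v$ with $w_g = S_{\gamma,\mu,\lambda}(g)$, the unknown $v$ solves $-\nabla\!\cdot\!\gamma\nabla v + \mu\!\cdot\!\nabla v + \lambda v = -f_0 - \tilde f(x, w_g + v)$ in $\mathcal D$ with $v=0$ on $\partial\mathcal D$. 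Let $\mathcal G$ be the solution operator of the linear elliptic part with zero boundary data, which by \cite{kim2019green} is given by an integral kernel (the Green's function $\mathcal G(x,y)$) that is smooth off the diagonal with a controlled singularity; then $v$ is the fixed point of the map $\Phi(v) := \mathcal G\big(-f_0 - \tilde f(\cdot, w_g + v)\big)$. The smallness \Cref{ass:choice-delta-p} together with the radius $\delta^2$ of the balls in $\Bc$ guarantees, via the constants $C_1, C_2, C_3$, that $\Phi$ is a $\rho$-contraction on a small ball of $W^{1,\infty}(\mathcal D)$ (here \Cref{ass:semilinear-term} is used so that $\tilde f(\cdot, w_g+v)$ stays inside the radius-of-analyticity window $\|z\|<\delta_0$ and is a genuine polynomial of degree $H$ in its second argument), so Picard iteration $v_{n+1} = \Phi(v_n)$, $v_0 = 0$, converges geometrically: $\|v - v_n\|_{W^{1,\infty}} \le \rho^n/(1-\rho)\cdot\|\Phi(0)\|$.

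The second step is the architectural realization. I would truncate Picard iteration at $n = O(\log(1/\varepsilon))$ steps so that the truncation error is $\le \varepsilon/3$; this is exactly what produces the depth bound $L \le C\log(\varepsilon^{-1})$, since each iteration is implemented by a fixed number of KANO layers (one nonlocal integral-kernel layer to apply an approximation of $\mathcal G$, a constant number of KAN layers to evaluate the degree-$H$ polynomial $v \mapsto \tilde f(\cdot, w_g + v)$ and to form the affine combination $-f_0 - \tilde f$). The polynomial $\tilde f$ can be represented \emph{exactly} up to the approximation of its smooth $x$-dependent coefficients because the adaptive KAN activations in \eqref{eq:KAN_r} can perform multiplication; the smooth coefficients $\partial_z^h \tilde f(\cdot,0)$, together with the smooth-in-$x$ ingredients of $\mathcal G$ and of $S_{\gamma,\mu,\lambda}$, are approximated by Res--KANs — this is where the wavelet/B-spline machinery of \Cref{ass:wavlets} and the universality of our NO (\Cref{prop:Universality}) enter. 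Because these targets are $C^\infty$ on $\bar{\mathcal D}$ (hence in every Besov class), the Res--KAN needed to hit accuracy $\varepsilon/3$ in the relevant norm has width $\widehat W \le C\varepsilon^{-1/((s-1)p)}$ and constant depth $\widehat L \le C$; the finite rank of the nonlocal kernel layer is absorbed into $W \le C$. A third $\varepsilon/3$ comes from using the approximate Green kernel of \cite{kim2019green} in place of the true one, whose error in the operator norm $L^{p} \to W^{1,\infty}$ (valid for $1\le p < d/(d-1)$ and $1<s<2$, which dictate the exponent $1/((s-1)p)$) is made small at the cost of the stated $\widehat W$.

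The third step is the error bookkeeping: one shows that the composition of the $n$ approximate layers accumulates error at most geometrically in $n$ — because $\mathcal G$ is a bounded operator and $v\mapsto\tilde f(\cdot,w_g+v)$ is Lipschitz on the small ball with constant $<1/\rho'$ by \Cref{ass:choice-delta-p} — so a per-layer tolerance of $\varepsilon/(3n)$ suffices, which only changes $\widehat W$ by a $\log$ factor absorbed into $C$. Finally, since the construction depends on $g$ only through $w_g = S_{\gamma,\mu,\lambda}(g)$, the output is naturally a function of $(f_0, S_{\gamma,\mu,\lambda}(g))$, matching the statement; the supremum over $\Bc$ is handled uniformly because all constants above depend only on $\delta$, the coefficient data, and $\mathcal D$, not on the particular $(f_0,g)$.

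I expect the main obstacle to be the third step in a \emph{quantitative} form: propagating the three sources of error (Picard truncation, Green-kernel approximation, Res--KAN approximation of smooth coefficients) through the iterated nonlinear composition while keeping the parameter counts at the claimed polynomial-in-$\varepsilon^{-1}$ and logarithmic-in-$\varepsilon^{-1}$ orders. In particular, verifying that evaluating the approximate Green integral operator against a Res--KAN kernel, composed with the adaptive-activation realization of the degree-$H$ polynomial, incurs no hidden blow-up in the $W^{1,\infty}$ norm — and that the contraction constant $\rho$ genuinely dominates the amplification of errors layer-to-layer — is the delicate bookkeeping that the smallness \Cref{ass:choice-delta-p} is precisely engineered to make work, but it must be checked carefully with the constants $C_1,C_2,C_3$ tracked explicitly.
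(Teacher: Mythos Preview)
Your proposal follows essentially the same route as the paper: Green-function reformulation, contraction mapping on the $W^{1,\infty}$ ball of radius $\delta$, Picard iteration truncated at $J=O(\log(1/\varepsilon))$ steps, Res--KAN approximation of the kernels $G_{\gamma,\mu,\lambda}(x,y)\partial_z^h\tilde f(y,0)/h!$ in $W^{1,p}$ to tolerance $\varepsilon$, and exact implementation of the polynomial powers $u\mapsto u^h$ via the multiplication capability of KAN activations (the paper uses ReQU, which embeds into KAN).

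One sharpening: your error bookkeeping in the third step is slightly pessimistic. You do \emph{not} need per-layer tolerance $\varepsilon/(3n)$. The paper uses the telescoping identity
\[
T^{[J]}(0)-T_{NN}^{[J]}(0)=\sum_{h=1}^{J}\Big(T^{[J-h]}\big(T(u_h)\big)-T^{[J-h]}\big(T_{NN}(u_h)\big)\Big),\quad u_h:=T_{NN}^{[h-1]}(0),
\]
and applies the $\rho$-contraction of the \emph{exact} map $T$ to the outer composition, giving $\sum_{h}\rho^{J-h}\cdot C_4\varepsilon\le C_4\varepsilon/(1-\rho)$. So a \emph{fixed} kernel tolerance $\varepsilon$ (independent of $J$) suffices, and no stray $\log$ factor appears in $\widehat W$. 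Also note that $S_{\gamma,\mu,\lambda}$ is not approximated inside the network: $w_g=S_{\gamma,\mu,\lambda}(g)$ is supplied as the second \emph{input channel} to $\Gamma$, which is why the theorem reads $\Gamma(f_0,S_{\gamma,\mu,\lambda}(g))$; the Res--KAN budget goes entirely into the Green kernels (whose $W^{s,p}$ membership, not their approximation, is what \cite{kim2019green} supplies).
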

Our quantitative approximation rates are available because the family of elliptic PDEs considered here is well structured. In the fully general setting, however, since our NOs are continuous, one should not expect rates, as the solution operator should not even be expected to be continuous (let alone locally--Lipschitz continuous) which is necessary for approximability by the elementary uniform limit theorem from point-set topology, see \citeauthor*{munkres2000topology} \cite[Theorem 21.6]{munkres2000topology}. 
In that case—even if the solution operator is only continuous for general fully non-linear families—the best achievable rates are no better than worst-case bounds for approximating non-linear locally--Lipschitz continuous operators, see \cite{lanthaler2025parametric}, which require an exponential increase in trainable neurons to achieve a linear decrease in error. Thus, even when approximability holds, any such `rate' would be scarcely more informative than a simple existence statement.

\medskip
Consequently, the principal obstacle is approximability, which is twofold:

\medskip
$(i)$ the relevant solution operator in the fully non-linear elliptic case must be regular enough to be approximable by some universal deep-learning class;

\medskip
$(ii)$ our models must be universal on the specific function spaces on which this solution map acts.

\medskip
$(i)$ requires a stability analysis of our PDE family under coefficient perturbations, while $(ii)$ calls for a universal approximation theorem for our architecture, proved via basis-expansion techniques as in \Cref{prop:Universality}, akin in spirit to \cite[Theorem~11]{kovachki2023neural}, that holds on more general Besov spaces over regular Euclidean domains. This two-step scheme was introduced for deep learning in stochastic filtering \cite{horvath2023deep} and refined for differential games in \cite{alvarez2024neural,firoozi2025simultaneously}.

\subsubsection{Solutions to the family of second-order BSDEs}
\label{s:MainResults__ss:FeasibleRates___sss:2FBSDE}
We now derive the stochastic version of the above (deterministic) approximation theorem.
We additionally require the following regularity conditions.
\begin{assumption}[Regularity of forward process]
\label{ass:X_reg}
There is some $x_0\in \mathcal{D}$ such that for each $R>0$
\begin{enumerate}
    \item[$(i)$] \textit{$($local smoothness$)$}: $(\beta, \gamma) \in C_b^\infty(B_{\mathbb{R}^d}(x_0,5R); \mathbb{R}^d\times \mathbb{S}_d^+)^2$;
    \item[$(ii)$] \textit{$($local ellipticity$)$}: $\gamma(x)\gamma(x)^{\top} \geq c_{x_0,R} \mathrm{I}_d$, for every $x \in B_{\mathbb{R}^d}(x_0,3R)$, for some $0 < c_{x_0,R} < 1;$
    \item[$(iii)$] there exists a unique strong solution to~\eqref{eq:FBSDE_ForwardProcess}.
\end{enumerate}
\end{assumption}

\begin{theorem}
\label{thrm:Main_Stochastic}
Let {\rm\Cref{ass:gamma,ass:semilinear-term,ass:choice-delta-p,ass:X_reg}} hold, then, for any $0<\varepsilon <1$ and any time-window $0<T_\smallertext{-}<T_\smallertext{+}$, there are integers $L$, $W$, $\Delta$, $H$, and  $\widehat{\Gamma} \in \mathcal{FB}_{\hat{\smallertext{L}}, \hat{\smallertext{W}}, \hat{\smallertext{\sigma}}}^{\smallertext{L},\smallertext{W}, \smallertext{\rm ReQU}}$ satisfying
\[
        \sup_{(f,g)\in\Bc}\,
            \mathbb{E}^\P\biggl[
                \sup_{\tau\wedge T_{\smalltext{-}}\le t \le T_{\smalltext{+}}\wedge\tau }\,
                    \Big|
                        \widehat{\Gamma}(f,g)_t 
                        - 
                        (Y^x_t,Z^x_t)
                    \Big|
            \biggr]
    \lesssim 
        \varepsilon,
\]
where the supremum is taken over the set
\[
\Bc\coloneqq B_{W^{\smalltext{1}\smalltext{,}\smalltext{\infty}}_\smalltext{0}(\mathcal{D};\mathbb{R})}(0, \delta^2) 
\times 
B_{H^{\smalltext{1} \smalltext{+} \smalltext{(}\smalltext{d}\smalltext{+}\smalltext{1}\smalltext{)}\smalltext{/}\smalltext{2}}(\partial \mathcal{D}; \mathbb{R})}(0, \delta^2).
\]
We have the same estimates for the parameters $L=L(\Gamma)$, $W=W(\Gamma)$, $\widehat{L}=\widehat{L}(\Gamma)$, and $\widehat{W} = \widehat{W}(\Gamma)$ as in {\rm\Cref{thm:semilinear}}. 
\end{theorem}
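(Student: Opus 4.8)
The plan is to bootstrap \Cref{thrm:Main_Stochastic} from the deterministic rate in \Cref{thm:semilinear} by post-composing the approximating {\rm KANO} with the non-trainable Feynman--Kac adapter of \Cref{prop:NL_FK}, and then to observe that --- once the elliptic solution is represented stochastically --- the time-uniform \emph{pathwise} error of the induced $2$FBNO is bounded \emph{deterministically} by the $W^{1,\infty}(\mathcal{D})$-error of the underlying {\rm KANO}. To start, note that $H^{1+(d+1)/2}(\partial\mathcal{D})=W^{(d+3)/2,2}(\partial\mathcal{D})$ and $W^{1,\infty}_0(\mathcal{D})\subseteq W^{1,\infty}(\mathcal{D})$, so the ball $\Bc$ appearing in \Cref{thrm:Main_Stochastic} is contained in the ball $\Bc$ of \Cref{thm:semilinear}; applying the latter with target accuracy $\varepsilon$ furnishes a {\rm KANO} $\Gamma\in\mathcal{NO}^{\smallertext{L},\smallertext{W},\smallertext{I},\alpha}_{\hat{\smallertext{L}},\hat{\smallertext{W}}}(W^{1,\infty}(\mathcal{D};\mathbb{R})^2,W^{1,\infty}(\mathcal{D};\mathbb{R}))$, with the advertised depth/width/rank bounds, such that, writing $u_{f_\smalltext{0},g}\coloneqq\Gamma^{\smallertext{+}}(f_0,g)$ and $\widehat u_{f_\smalltext{0},g}\coloneqq\Gamma(f_0,S_{\gamma,\mu,\lambda}(g))$,
\[
\sup_{(f_0,g)\in\Bc}\big\|u_{f_\smalltext{0},g}-\widehat u_{f_\smalltext{0},g}\big\|_{W^{\smalltext{1}\smalltext{,}\smalltext{\infty}}(\mathcal{D};\mathbb{R})}\le\varepsilon .
\]
Here $\widehat u_{f_\smalltext{0},g}$ is a {\rm Res--KAN}, hence globally of class $C^{\lceil\alpha\rceil}\subseteq C^3$, while $u_{f_\smalltext{0},g}$, by the regularity argument inside the proof of \Cref{thm:semilinear} (elliptic/Schauder bootstrapping and Morrey's embedding, using $d\ge3$), has enough interior regularity for \Cref{prop:NL_FK} and is, together with $\nabla u_{f_\smalltext{0},g}$, continuous up to $\bar{\mathcal{D}}$.

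Next, under \Cref{ass:gamma,ass:semilinear-term,ass:choice-delta-p} the semi-linear operator in \eqref{eq:semilinear} coincides with the generator attached to the prescribed forward SDE \eqref{eq:FBSDE_ForwardProcess}, via the Hamilton--Jacobi--Bellman reduction \eqref{eq:semilinear__DiffusionType} (after the divergence-free / drift-absorption matching discussed there), so \Cref{prop:NL_FK} --- in its It\^o(--Krylov) form, legitimate thanks to the local non-degeneracy of \Cref{ass:X_reg}$(ii)$ --- applies to $u_{f_\smalltext{0},g}$ and yields $Y^x_t=u_{f_\smalltext{0},g}(X_t)$ and $Z^x_t=\nabla u_{f_\smalltext{0},g}(X_t)$ for $t\in[0,\tau)$, $\P\text{--a.s.}$, for every $(f_0,g)\in\Bc$. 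I would then \emph{define} $\widehat\Gamma$ as the Feynman--Kac adapter applied to $\Gamma$, i.e.
\[
\widehat\Gamma(f_0,g)_t\coloneqq\big(\widehat u_{f_\smalltext{0},g}(X_t),\ \nabla\widehat u_{f_\smalltext{0},g}(X_t),\ \nabla^2\widehat u_{f_\smalltext{0},g}(X_t),\ \mathcal{L}\nabla\widehat u_{f_\smalltext{0},g}(X_t)\big),\qquad t\in[0,\tau);
\]
this is a member of the $2$FBNO class $\mathcal{FB}$ of \Cref{defn:FBNO} built from $\Gamma$ (the boundary datum being fed through the fixed operator $S_{\gamma,\mu,\lambda}$, which adds no trainable parameters), and since $\widehat u_{f_\smalltext{0},g}$ and its first three derivatives are continuous while $X_{\cdot\wedge\tau}$ is valued in the bounded set $\bar{\mathcal{D}}$, each of the four components is a continuous, bounded, adapted process, so $\widehat\Gamma(f_0,g)\in(\mathcal{H}_T^2)^4$.

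It remains to transfer the error. For $\P$--a.e.\ $\omega$ and every $t\in[\tau\wedge T_\smallertext{-},T_\smallertext{+}\wedge\tau]$ one has $X_t\in\bar{\mathcal{D}}$, and since $u_{f_\smalltext{0},g}-\widehat u_{f_\smalltext{0},g}$ together with its gradient are continuous on $\bar{\mathcal{D}}$, their suprema over $\bar{\mathcal{D}}$ equal their $L^\infty(\mathcal{D})$-norms; hence $|Y^x_t-\widehat u_{f_\smalltext{0},g}(X_t)|$ and $|Z^x_t-\nabla\widehat u_{f_\smalltext{0},g}(X_t)|$ are both at most $\|u_{f_\smalltext{0},g}-\widehat u_{f_\smalltext{0},g}\|_{W^{\smalltext{1}\smalltext{,}\smalltext{\infty}}(\mathcal{D};\mathbb{R})}\le\varepsilon$, a bound that is deterministic and uniform in $t$ and in $(f_0,g)\in\Bc$. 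Passing to the supremum over the window and then to the expectation therefore gives
\[
\sup_{(f_0,g)\in\Bc}\mathbb{E}^{\P}\Big[\sup_{\tau\wedge T_\smallertext{-}\le t\le T_\smallertext{+}\wedge\tau}\big|\widehat\Gamma(f_0,g)_t-(Y^x_t,Z^x_t)\big|\Big]\ \lesssim\ \varepsilon ,
\]
where only the first two components of $\widehat\Gamma(f_0,g)_t$ enter, and the parameter estimates are inherited verbatim from \Cref{thm:semilinear}.

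The only genuinely delicate step is the passage through \Cref{prop:NL_FK}: one must certify that the solutions $u_{f_\smalltext{0},g}$ produced by \Cref{thm:semilinear} possess \emph{precisely} the regularity that the (It\^o--Krylov form of the) non-linear Feynman--Kac formula requires --- time-continuity of $u_{f_\smalltext{0},g}(X_\cdot)$ and $\nabla u_{f_\smalltext{0},g}(X_\cdot)$, and of $\nabla^2 u_{f_\smalltext{0},g}(X_\cdot)$ for the $\Upsilon^x,A^x$ components, \emph{uniformly} over the compact family $\Bc$ --- and that the identification of \eqref{eq:semilinear} with the generator of \eqref{eq:FBSDE_ForwardProcess} is valid under \Cref{ass:gamma,ass:X_reg}. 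Everything else is soft: the heavy analytic lifting already sits inside \Cref{thm:semilinear}, and the error estimate itself uses no stochastic calculus, precisely because the stochastic targets $(Y^x,Z^x)$ are literally the PDE data $(u_{f_\smalltext{0},g},\nabla u_{f_\smalltext{0},g})$ evaluated along the path $X$.
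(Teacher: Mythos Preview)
Your proposal is correct and follows essentially the same route as the paper: apply \Cref{thm:semilinear} to obtain a $W^{1,\infty}(\mathcal{D})$-close {\rm KANO}, post-compose with the Feynman--Kac adapter of \Cref{defn:FBNO}/\Cref{prop:NL_FK}, and then observe that since $X_{t}\in\bar{\mathcal{D}}$ for $t\le\tau$, the $W^{1,\infty}$-bound transfers \emph{deterministically} to a pathwise, time-uniform bound on $(Y,Z)$---which is precisely the $r=\infty$ case of the paper's transfer lemma (\Cref{lem:transfer_lemma}). Your identification of the one genuinely delicate step---verifying that the $u_{f_0,g}$ produced by \Cref{thm:semilinear} has enough interior regularity for (an It\^o--Krylov version of) \Cref{prop:NL_FK} to apply uniformly over $\Bc$---is accurate and is a point the paper leaves implicit.
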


\section{Experimental results}
In this section, we empirically validate our theoretical findings on two canonical benchmarks in the 2BSDE literature: the periodic semi-linear example of \citeauthor*{chassagneux2023learning} \cite{chassagneux2023learning} and the linear--quadratic control example of \citeauthor*{pham2021neural} \cite{pham2021neural}. We deploy the KANO architecture with a slight modification in the kernel layer (see \ref{sec:architecture} for details). Specifically, rather than jointly learning both the kernel basis and its coefficients, we fix the basis to a Fourier system, obtained via uniform discretisation of the spatial domain, while retaining trainable, Res--KAN--parametrised coefficients. Furthermore, skip connections parametrised by additional Res--KAN layers are introduced on top of the learnable Fourier kernel coefficients. The resulting spectral layer follows the kernel introduced in \citeauthor*{li2021fourier} \cite{li2021fourier}.

\subsection{Periodic semi-linear case}
In this experiment, we study the periodic semi-linear benchmark of \cite{chassagneux2023learning} in dimension $d=5$. This benchmark consists of trigonometric drift--diffusion and has a closed-form solution \(u(t,x)\) depending on \(\sum_{i=1}^5 x_i\). This enables exact supervision of \(u,\nabla u,\nabla^2u\) and pathwise validation under periodic boundary conditions. The forward–-backward SDE system and its closed-form solution are detailed in \Cref{sec:periodic_semilinear}. 

\medskip
A KANO model is trained on $4096$ samples drawn according to the procedure in \Cref{sec:training_pipeline}, and subsequently evaluated along independently generated trajectories using the Euler-–Maruyama sampler described in \Cref{sec:inference_pipeline}. \Cref{fig:semilinear_samples_a,fig:semilinear_samples_b} display the projections of two randomly selected trajectories onto the \((x_1, x_2)\)-plane, together with the corresponding ground-truth solutions \(u\), first and second partial derivatives \(\partial u / \partial x_1\) and \(\partial^2 u / \partial x_1^2\), and the respective predictions produced by the trained model along these trajectories. We observe that the model is generally able to accurately capture the solution, as well as the first and second partial derivatives along the entire trajectories, with only minor discrepancies in the second derivatives.

\subsection{Linear--quadratic case}

We next consider the LQ/Hamilton--Jacobi=-Bellman benchmark proposed in \cite{pham2021neural} in $d = 5$ (see \Cref{sec:linear_quadratic} for details). It represents a HJB-type problem with quadratic cost, whose value function remains quadratic \(u(t,x)=x^\top K(t)x\), and where $K(t)$ satisfies a Riccati ODE. It offers analytic targets for \(u,\nabla u,\nabla^2u\) and a clean test of learning constant-in-space Hessians and optimal-feedback structure. 

\medskip
The same training and inference pipeline as described in the semi-linear case is used, with a KANO network trained on $4096$ samples. \Cref{fig:lq_samples} presents two random trajectories projected onto the $(x_1, x_2)$-plane. The figure also compares the analytic solution $u$, its gradient components $\partial u / \partial x_1$, and the diagonal Hessian entries $\partial^2 u / \partial x_1^2$ with the corresponding model predictions along these paths. The predicted values of $u$ closely follow the analytical solution. The derivatives are recovered with satisfactory accuracy, and the Hessian, which is expected to remain constant in space, is also well captured. Although the estimated derivatives show some deviations from the smooth exact values, their overall accuracy remains high. In summary, the network effectively learns and reproduces the solution $u$ and its derivatives along the generated trajectories.

\begin{figure}[ht!]
\centering
\begin{subfigure}[b]{0.24\textwidth}
   \includegraphics[width=\linewidth]{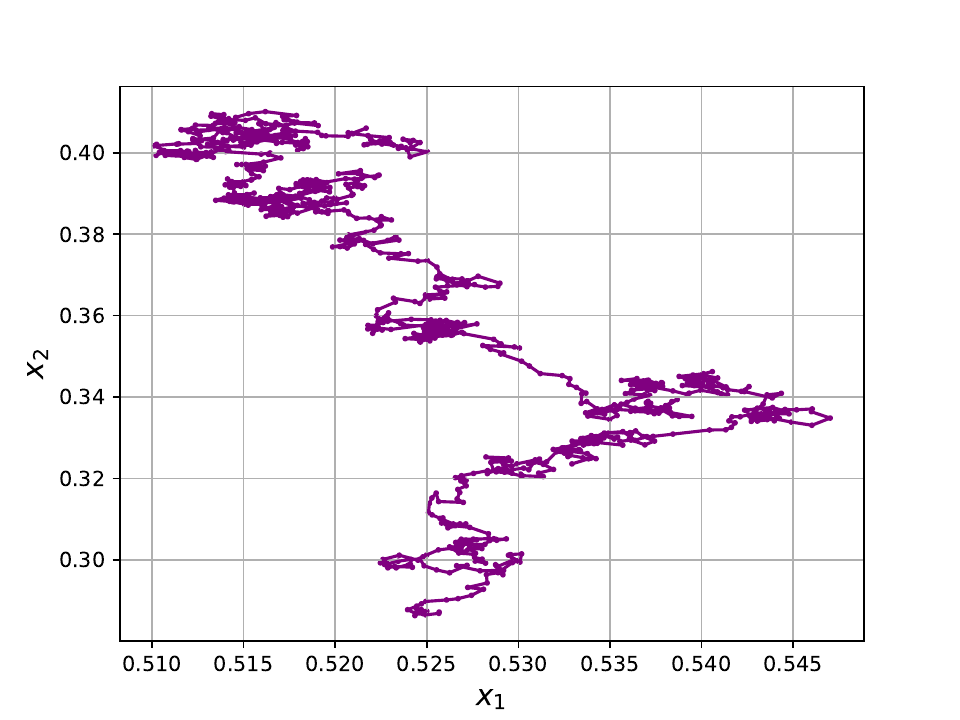}
   \caption{\footnotesize Random paths}
\end{subfigure}
\begin{subfigure}[b]{0.24\textwidth}
   \includegraphics[width=\linewidth]{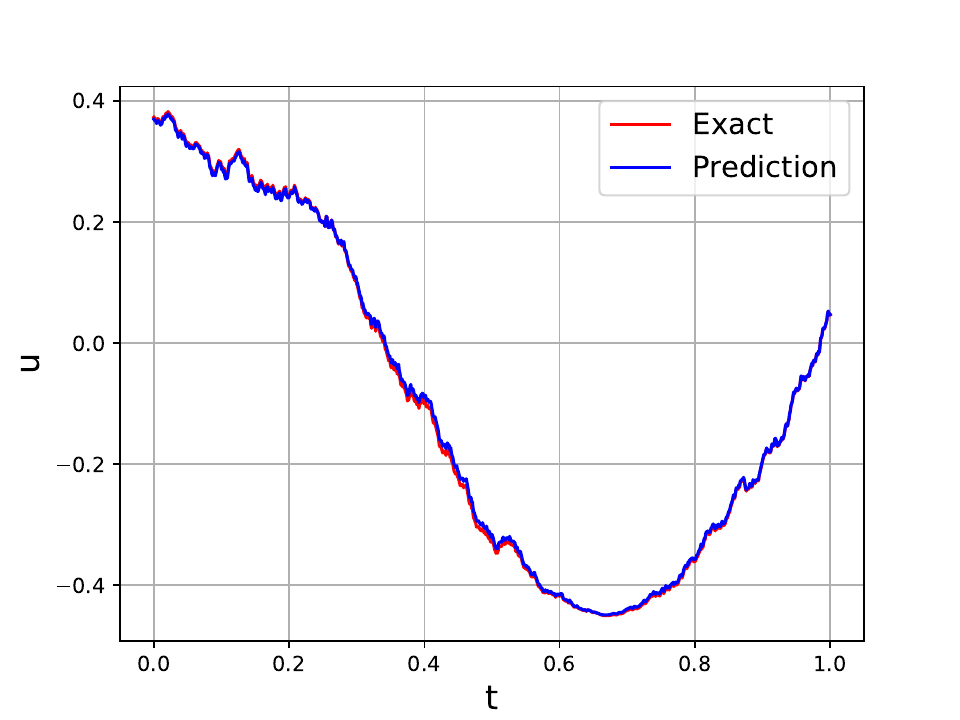}
   \caption{\footnotesize Solutions}
\end{subfigure}
\begin{subfigure}[b]{0.24\textwidth}
   \includegraphics[width=\linewidth]{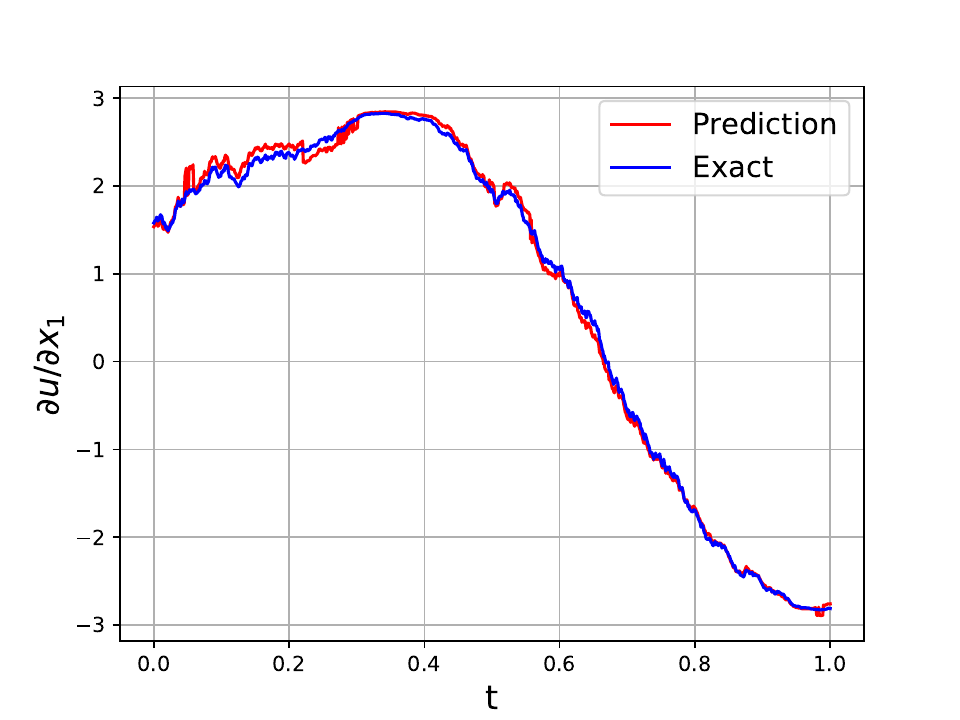}
   \caption{\footnotesize First derivatives}
\end{subfigure}
\begin{subfigure}[b]{0.24\textwidth}
   \includegraphics[width=\linewidth]{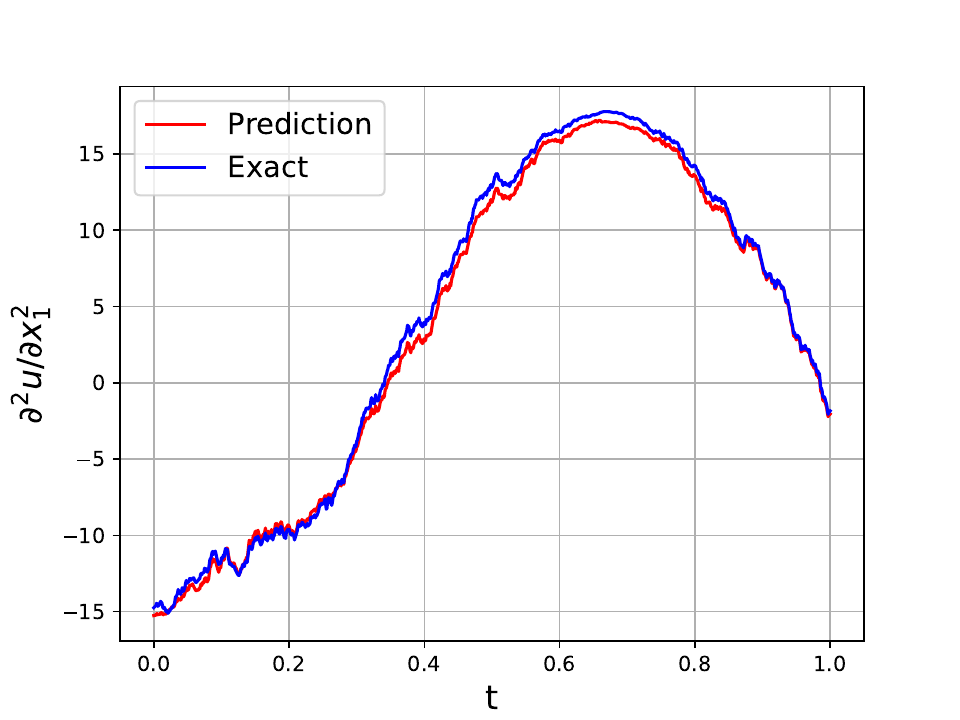}
   \caption{\footnotesize Second derivatives}
\end{subfigure}

\caption{\footnotesize Ground-truth and KANO-predicted solutions for the first randomly selected trajectory of the periodic semilinear example from \cite{chassagneux2023learning}. Each panel shows the projection onto the $(x_1, x_2)$-plane with $u$, $\partial u / \partial x_1$, and $\partial^2 u / \partial x_1^2$ along this path.}
\label{fig:semilinear_samples_a}
\end{figure}

\begin{figure}[ht!]
\centering
\begin{subfigure}[b]{0.24\textwidth}
   \includegraphics[width=\linewidth]{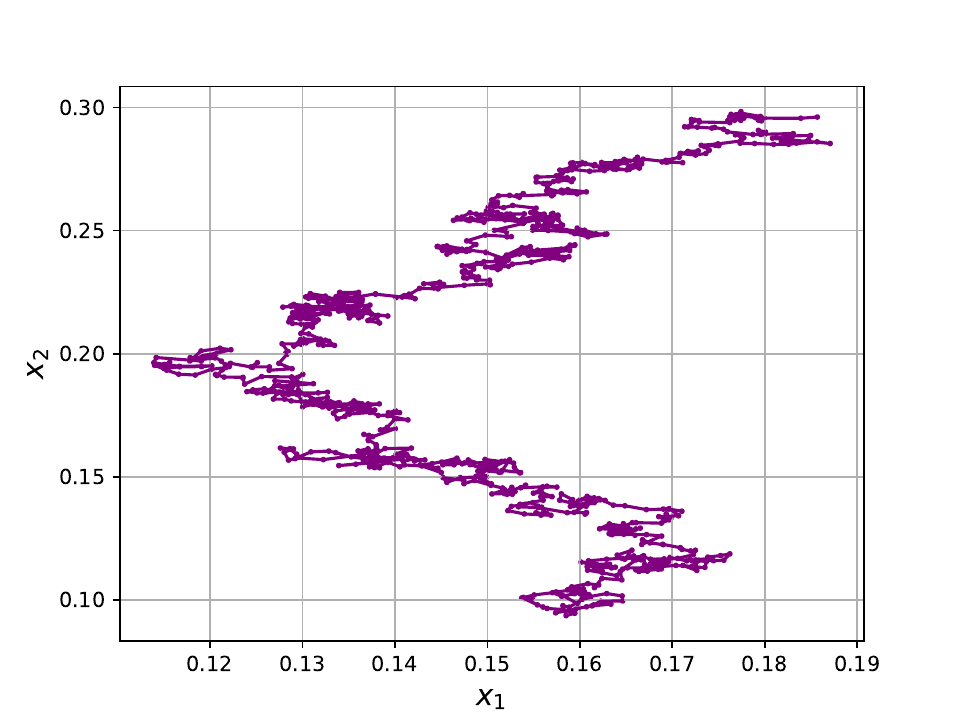}
   \caption{\footnotesize Random paths}
\end{subfigure}
\begin{subfigure}[b]{0.24\textwidth}
   \includegraphics[width=\linewidth]{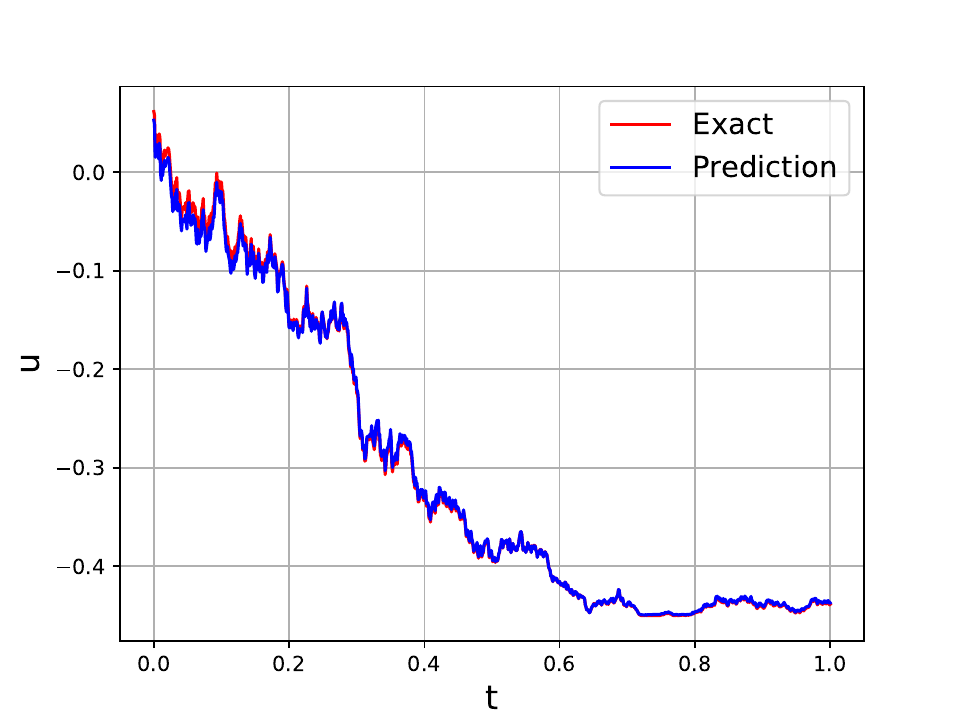}
   \caption{\footnotesize Solutions}
\end{subfigure}
\begin{subfigure}[b]{0.24\textwidth}
   \includegraphics[width=\linewidth]{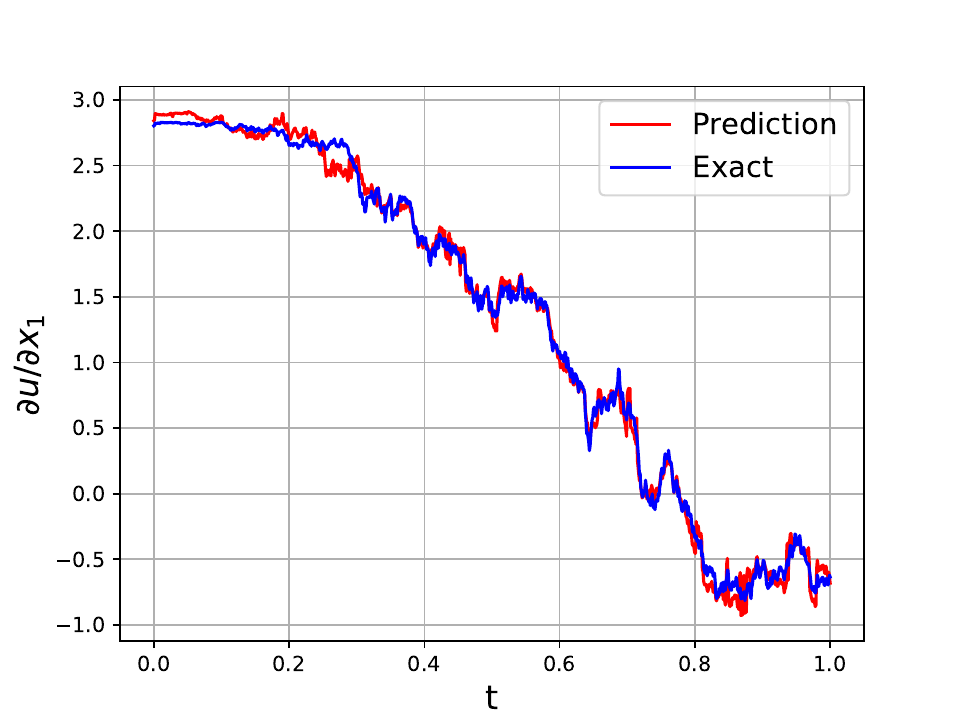}
   \caption{\footnotesize First derivatives}
\end{subfigure}
\begin{subfigure}[b]{0.24\textwidth}
   \includegraphics[width=\linewidth]{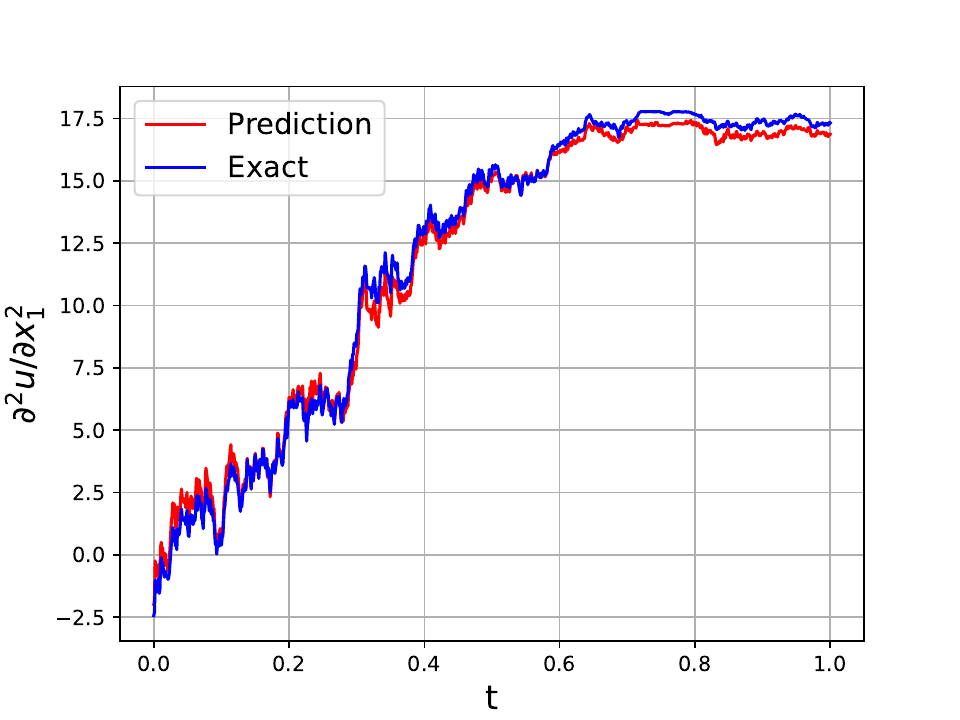}
   \caption{\footnotesize Second derivatives}
\end{subfigure}

\caption{\footnotesize Continuation of \Cref{fig:semilinear_samples_a}, showing the second randomly selected trajectory for the same semi-linear example.}
\label{fig:semilinear_samples_b}
\end{figure}

\begin{figure}[ht!]
\centering
\begin{subfigure}[b]{0.24\textwidth}
   \includegraphics[width=\linewidth]{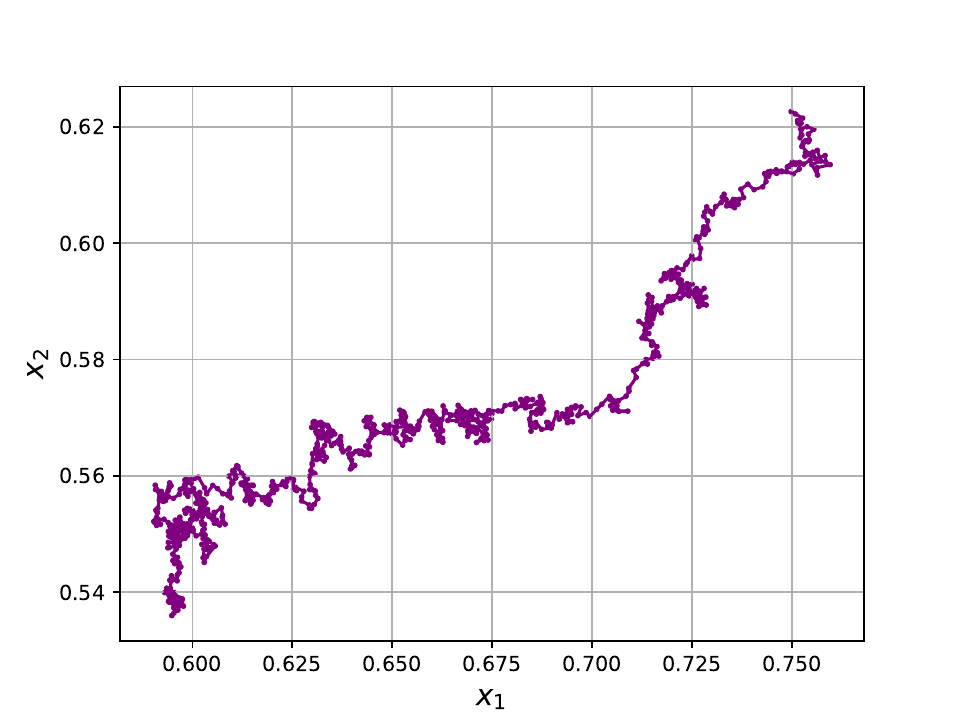}
\end{subfigure}
\begin{subfigure}[b]{0.24\textwidth}
   \includegraphics[width=\linewidth]{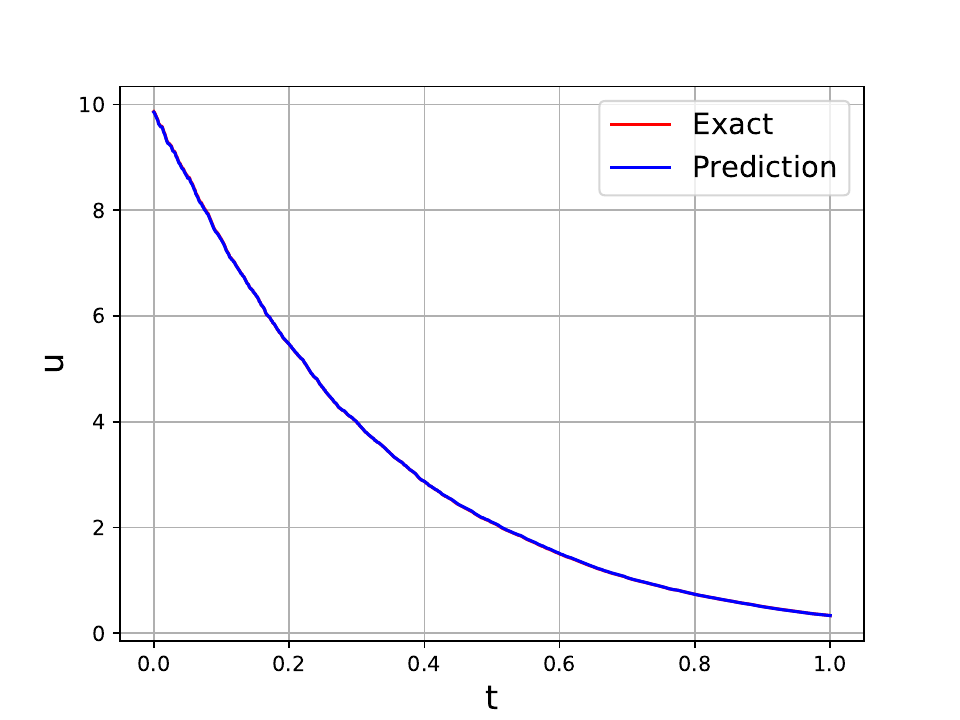}
\end{subfigure}
\begin{subfigure}[b]{0.24\textwidth}
   \includegraphics[width=\linewidth]{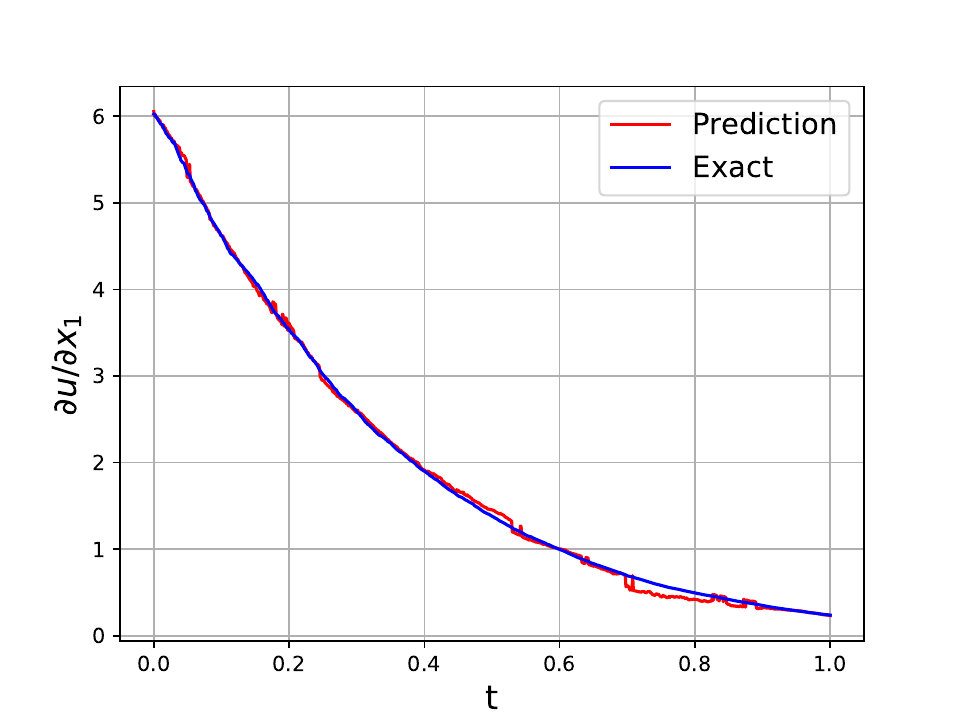}
\end{subfigure}
\begin{subfigure}[b]{0.24\textwidth}
   \includegraphics[width=\linewidth]{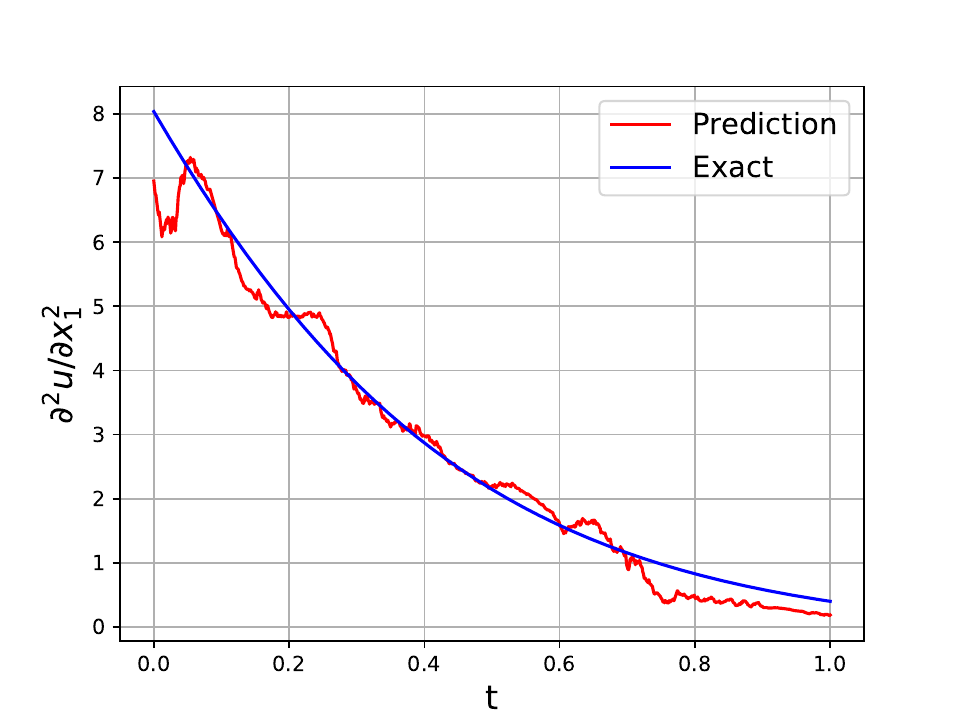}
\end{subfigure}

\begin{subfigure}[b]{0.24\textwidth}
   \includegraphics[width=\linewidth]{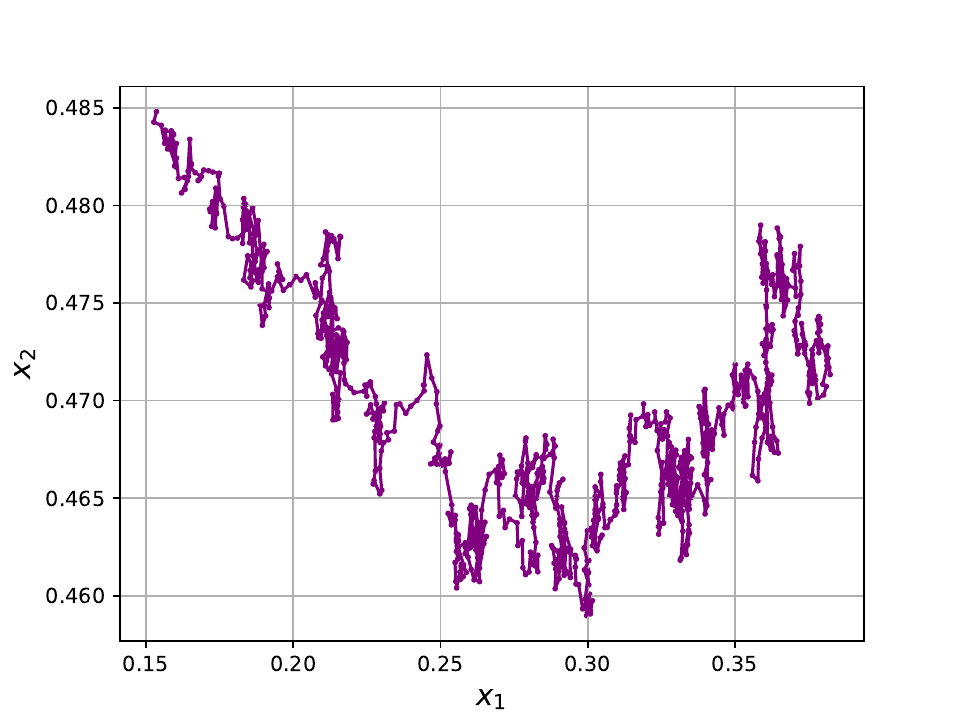}
   \caption{\footnotesize Random paths}
\end{subfigure}
\begin{subfigure}[b]{0.24\textwidth}
   \includegraphics[width=\linewidth]{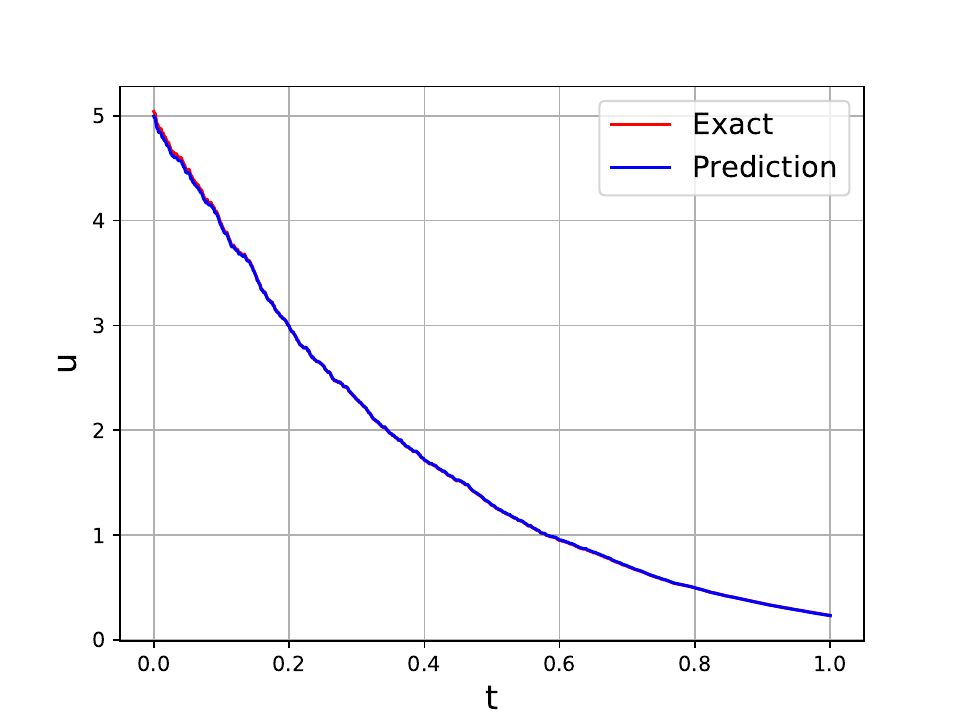}
   \caption{Solutions}
\end{subfigure}
\begin{subfigure}[b]{0.24\textwidth}
   \includegraphics[width=\linewidth]{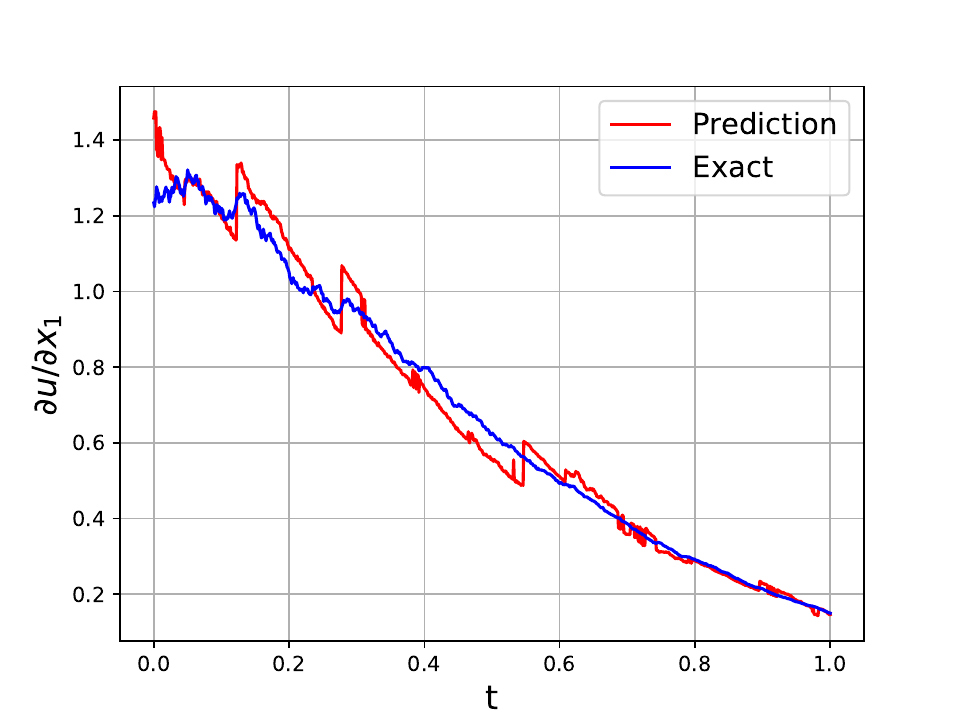}
   \caption{\footnotesize First derivatives}
\end{subfigure}
\begin{subfigure}[b]{0.24\textwidth}
   \includegraphics[width=\linewidth]{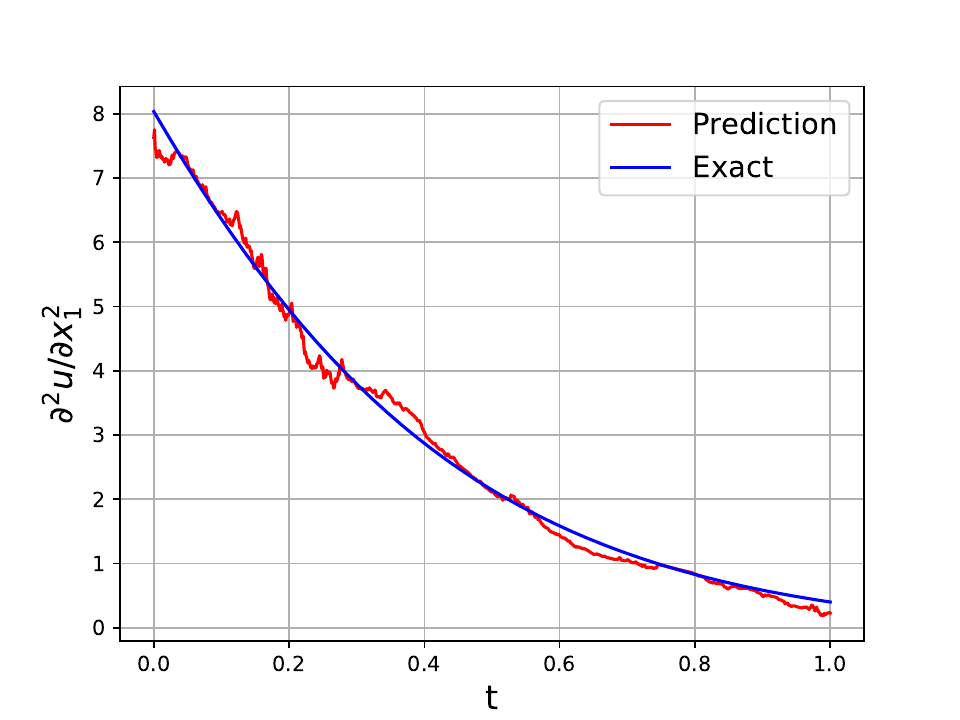}
   \caption{\footnotesize Second derivatives}
\end{subfigure}

\caption{\footnotesize Comparison between the ground-truth and KANO-predicted solutions for the periodic linear--quadratic example of \cite{pham2021neural}. The figure shows two randomly selected trajectories projected onto the \((x_1, x_2)\)-plane, together with the corresponding values of \(u\), \(\partial u / \partial x_1\), and \(\partial^2 u / \partial x_1^2\) along these paths.}
\label{fig:lq_samples}
\end{figure}

\subsubsection{Ablation on the sample size}
We next train a model using eight times fewer training samples than before \emph{i.e.}, 512 samples) and evaluate it following the same procedure as in previous experiments. The resulting quantities of interest are shown in \Cref{fig:lq_low_samples}. We observe that in the vicinity of $t = 0$, the solution $u$ is not well approximated, which in turn affects the accuracy of its first- and second-order partial derivatives. This behaviour is consistent with the theoretical discussion presented earlier: a sufficient number of training samples is required in the high-dimensional space $\mathbb{R}^d$ for the model to accurately capture the solution near $t = 0$.

\begin{figure}[ht!]
\centering
\begin{subfigure}[b]{0.24\textwidth}
   \includegraphics[width=\linewidth]{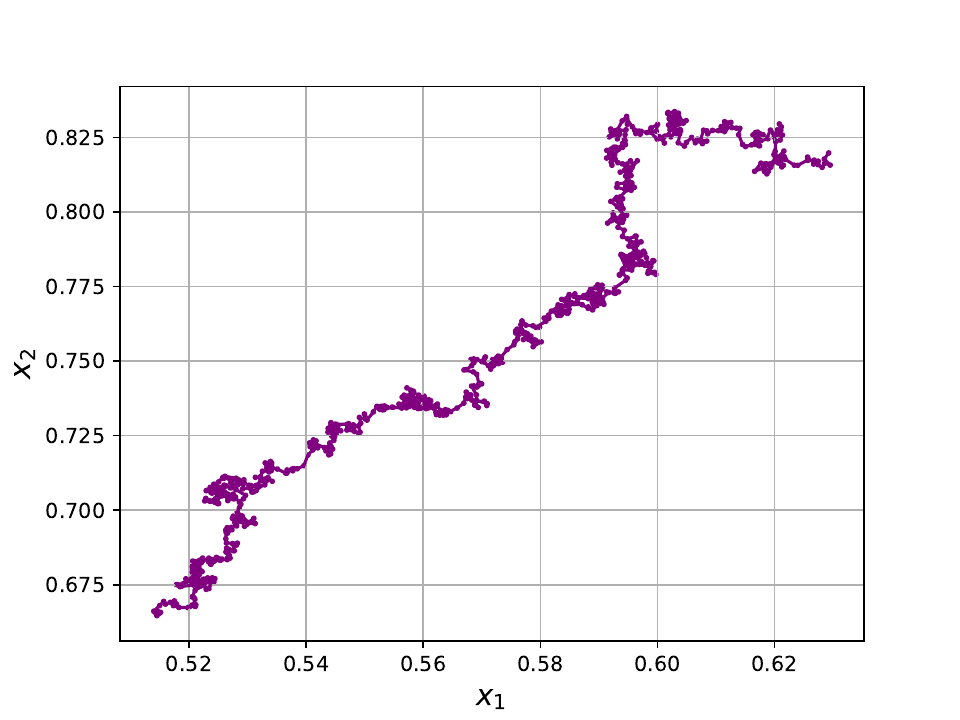}
   \caption{\footnotesize Random path}
\end{subfigure}
\begin{subfigure}[b]{0.24\textwidth}
   \includegraphics[width=\linewidth]{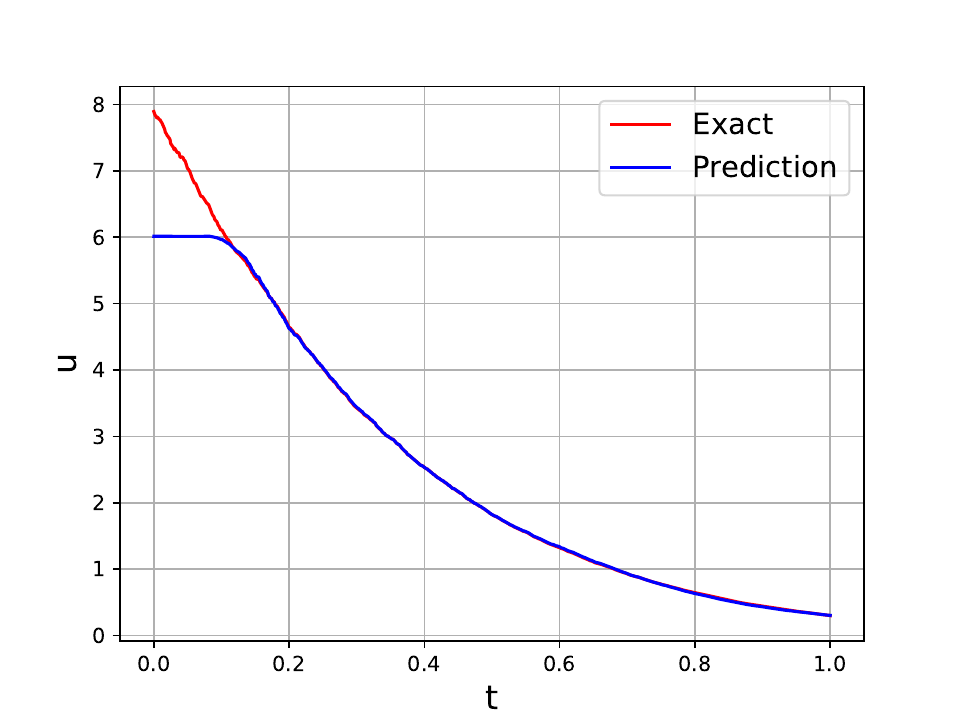}
   \caption{\footnotesize Solution}
\end{subfigure}
\begin{subfigure}[b]{0.24\textwidth}
   \includegraphics[width=\linewidth]{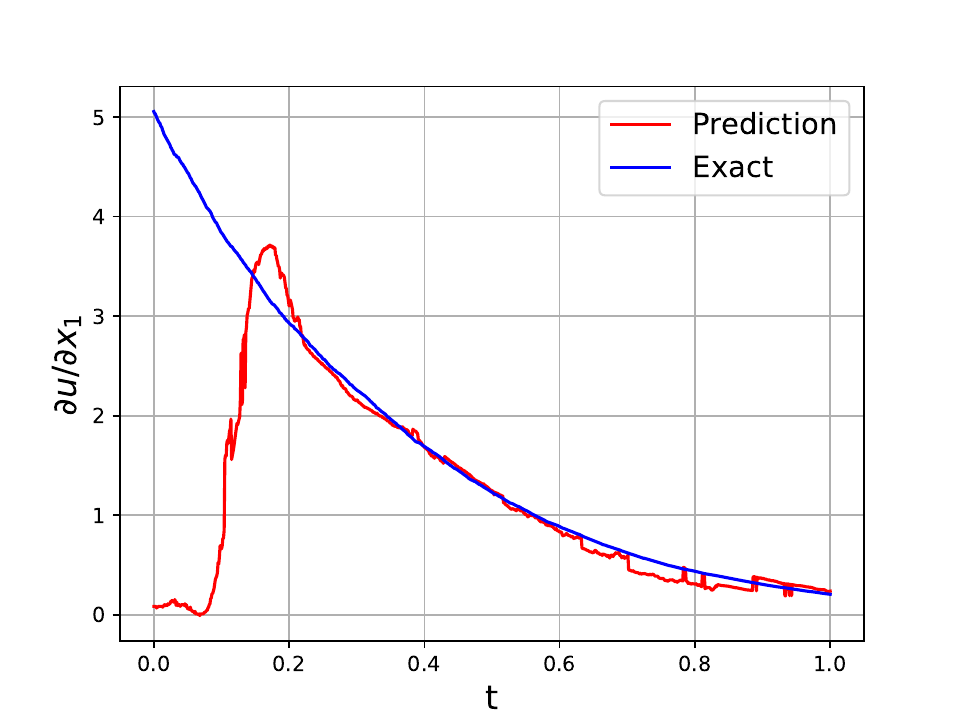}
   \caption{\footnotesize First derivative}
\end{subfigure}
\begin{subfigure}[b]{0.24\textwidth}
   \includegraphics[width=\linewidth]{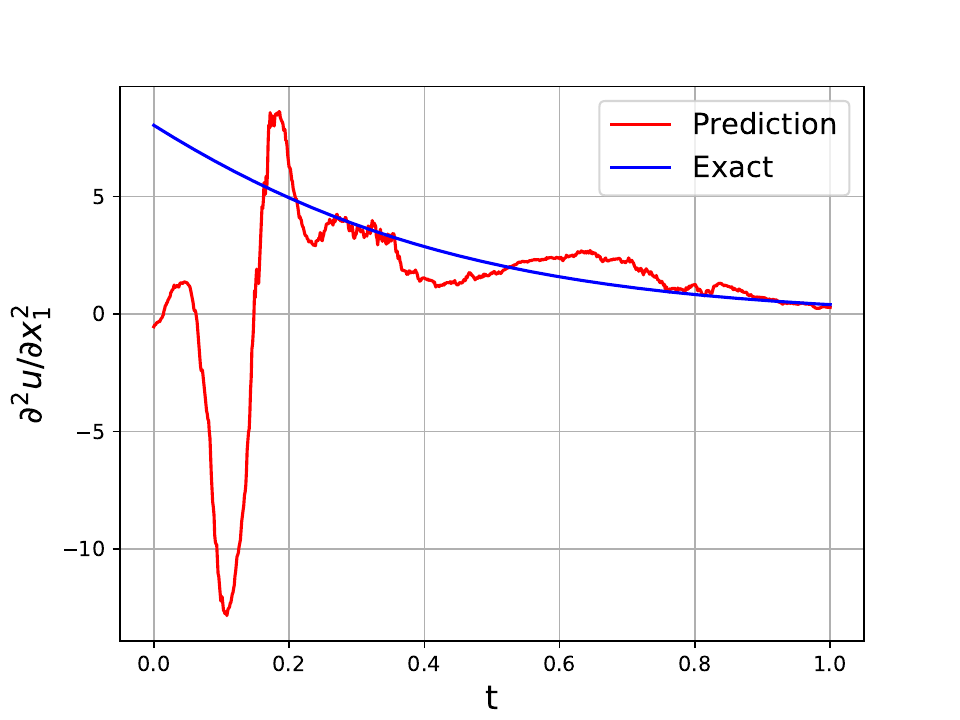}
   \caption{\footnotesize Second derivative}
\end{subfigure}

\caption{\footnotesize Comparison between the ground-truth and KANO-predicted solutions for the periodic linear--quadratic example of \cite{pham2021neural} in \emph{low training data regime}. The figure shows two randomly selected trajectories projected onto the \((x_1, x_2)\)-plane, together with the corresponding values of \(u\), \(\partial u / \partial x_1\), and \(\partial^2 u / \partial x_1^2\) along these paths.}
\label{fig:lq_low_samples}
\end{figure}

\section*{Acknowledgements}
Takashi Furuya was supported by JSPS KAKENHI Grant Number JP24K16949, 25H01453, JST CREST JPMJCR24Q5, JST ASPIRE JPMJAP2329.
Anastasis Kratsios acknowledges financial support from an NSERC Discovery Grant No.\ RGPIN-2023-04482 and No.\ DGECR-2023-00230, and they acknowledge that resources used in preparing this research were provided, in part, by the Province of Ontario, the Government of Canada through CIFAR, and companies sponsoring the Vector Institute\footnote{\href{https://vectorinstitute.ai/partnerships/current-partners/}{https://vectorinstitute.ai/partnerships/current-partners/}}; they would also like to thank Behnoosh Zamanlooy for her support.
Dylan Possama\"{i} gratefully acknowledges partial support by the SNF project MINT 205121-219818.


\appendix
\section{Proof of PDE results}
\label{s:Proofs__PDEResults}
\subsection{Proof of Theorem~\ref{thm:semilinear}}
\label{s:Proofs__PDEResults__ss:semilinear}
This appendix contains the proofs of our paper's main theoretical guarantees.
\subsubsection{Well-posedness}
\label{s:Proof__ss:WellPosedness}

Let  $G_{\gamma,\mu,\lambda} (x,y)$ be a (real-valued) Green’s function for $-\nabla \cdot \gamma \nabla + \mu \cdot \nabla +\lambda$ with a Dirichlet boundary condition, i.e., { for $y \in \mathcal{D}$,}
\[
-\nabla \cdot \gamma \nabla G_{\gamma,\mu,\lambda} (\cdot, y) + \mu \cdot \nabla G_{\gamma,\mu,\lambda} (\cdot, y)
+ 
\lambda
G_{\gamma,\mu,\lambda} (\cdot, y)
= -\delta(\cdot-y)\ \mathrm{in} \ \mathcal{D},
\]
\[
G_{\gamma,\mu,\lambda} (\cdot, y) = 0  \ \mathrm{on} \ \partial \mathcal{D}.
\]
\begin{lemma}
\label{lem:Linfty-Lp-Green}
Let \rm\Cref{ass:gamma} hold.
Then, we have
\[
G_{\gamma,\mu,\lambda} \in W^{s,p}(\mathcal{D} \times \mathcal{D};\mathbb{R}).
\]
where $1 \leq p < \frac{d}{d-1}$ and $1 \le s < 2$.
\end{lemma}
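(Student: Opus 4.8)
## Proof plan for Lemma~\ref{lem:Linfty-Lp-Green}

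The goal is to show that the Green's function $G_{\gamma,\mu,\lambda}$ of the variable-coefficient operator $-\nabla\cdot\gamma\nabla+\mu\cdot\nabla+\lambda$ with Dirichlet data lies in the Sobolev space $W^{s,p}(\mathcal{D}\times\mathcal{D})$ for $1\le p<\frac{d}{d-1}$ and $1\le s<2$. The plan is to combine the known pointwise and gradient bounds on Green's functions for uniformly elliptic operators with smooth coefficients — which are available from \citeauthor*{kim2019green}~\cite{kim2019green} under \Cref{ass:gamma} — with a direct integration over the product domain $\mathcal{D}\times\mathcal{D}$, carefully tracking the integrability of the singularity along the diagonal $\{x=y\}$.

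First I would recall the standard estimates: under \Cref{ass:gamma}, for $x\neq y$ one has $|G_{\gamma,\mu,\lambda}(x,y)|\lesssim |x-y|^{2-d}$ and $|\nabla_x G_{\gamma,\mu,\lambda}(x,y)|+|\nabla_y G_{\gamma,\mu,\lambda}(x,y)|\lesssim |x-y|^{1-d}$, with constants depending only on $d$, $\mathcal{D}$, $\gamma_0$, $\gamma_1$, and the $C^\infty(\bar{\mathcal{D}})$-norms of the coefficients; near the boundary these are complemented by the corresponding Dirichlet decay. Since $d\ge 3$, I would then estimate the $L^p(\mathcal{D}\times\mathcal{D})$-norm of $G$ and of its first-order spatial derivatives by passing to the variable $z=x-y$ and using that, for any exponent $a>0$, $\int_{\mathcal{D}\times\mathcal{D}}|x-y|^{-a}\,\mathrm{d}x\,\mathrm{d}y<\infty$ whenever $a<d$ (integrating first in $z$ over a ball, then in $y$ over the bounded set $\mathcal{D}$). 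Thus $\|G\|_{L^p}^p\lesssim\int|x-y|^{(2-d)p}<\infty$ iff $(d-2)p<d$, and $\|\nabla G\|_{L^p}^p\lesssim\int|x-y|^{(1-d)p}<\infty$ iff $(d-1)p<d$, i.e. $p<\frac{d}{d-1}$ — exactly the stated range, which also forces the weaker condition $(d-2)p<d$. This shows $G\in W^{1,p}(\mathcal{D}\times\mathcal{D})$ for $p<\frac{d}{d-1}$.

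To upgrade from $W^{1,p}$ to $W^{s,p}$ with $s<2$, I would use the fractional interpolation/embedding characterisation of Sobolev–Slobodeckij spaces: since $1<s<2$, write $s=1+\theta$ with $\theta\in(0,1)$ and estimate the Gagliardo seminorm of $\nabla G$ (in both variables) over $\mathcal{D}\times\mathcal{D}$,
\[
[\nabla G]_{W^{\theta,p}(\mathcal{D}\times\mathcal{D})}^p=\iint\iint\frac{|\nabla G(\xi)-\nabla G(\eta)|^p}{|\xi-\eta|^{2d+\theta p}}\,\mathrm{d}\xi\,\mathrm{d}\eta,
\]
using the second-derivative bound $|\nabla^2 G(x,y)|\lesssim|x-y|^{-d}$ (again from \cite{kim2019green}, valid off the diagonal for smooth coefficients) together with the mean value inequality on the "close" regime and the $|x-y|^{1-d}$ bound on the "far" regime. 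The finiteness of this quadruple integral reduces, after the usual splitting, to checking $\int_{\mathcal{D}\times\mathcal{D}}|x-y|^{-(d-1)p-\theta p}\,\mathrm{d}x\,\mathrm{d}y<\infty$, i.e. $(d-1+\theta)p<d$, which is implied (for $\theta$ small enough) by the strict inequality $p<\frac{d}{d-1}$; since we only need \emph{some} $s\in(1,2)$, choosing $\theta>0$ sufficiently small makes this hold. Combining the $L^p$, $W^{1,p}$, and $W^{\theta,p}$-seminorm bounds gives $G\in W^{1+\theta,p}=W^{s,p}(\mathcal{D}\times\mathcal{D})$.

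The main obstacle I anticipate is bookkeeping the singularity of the \emph{difference quotients} of $\nabla G$ uniformly in all four integration variables — in particular handling the interplay between the two diagonal-type singularities ($x=y$ inside $G$, and $\xi=\eta$ in the Gagliardo kernel) and the boundary behaviour near $\partial\mathcal{D}$ — and making sure the exponent thresholds all collapse to the single clean condition $p<\frac{d}{d-1}$. A cleaner alternative, which I would present if the direct Gagliardo computation becomes unwieldy, is to invoke the sharp $L^p$-regularity theory: $G(\cdot,y)$ solves $-\nabla\cdot\gamma\nabla G(\cdot,y)=-\delta_y+\text{lower order}$, and $\delta_y\in W^{-d/p'-\epsilon,p}$ locally, so elliptic regularity with $C^\infty$ coefficients gives $G(\cdot,y)\in W^{2-d/p'-\epsilon,p}_{\mathrm{loc}}$ uniformly in $y$, and $2-d/p'=2-d(1-1/p)=2-d+d/p$, which exceeds $1$ precisely when $p<\frac{d}{d-1}$; integrating the resulting bound in $y$ and symmetrising in $(x,y)$ yields the claim. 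Either route reduces to the same arithmetic threshold, so the substance of the proof is the extraction of the scale-invariant singularity estimates from \cite{kim2019green} and the elementary fact that $|x-y|^{-a}$ is integrable over a bounded product domain iff $a<d$.
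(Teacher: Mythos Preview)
Your core strategy---pull the pointwise bound $|\nabla_{x,y} G(x,y)|\lesssim|x-y|^{1-d}$ from \cite{kim2019green} and integrate over $\mathcal{D}\times\mathcal{D}$ via the change of variable $z=x-y$ to land on the threshold $(d-1)p<d$---is exactly what the paper does. One small presentational difference worth noting: the paper does not cite the $\nabla_y$ estimate directly but instead applies the same Kim bound to the Green function $g$ of the \emph{adjoint} operator $L^\top u=-\nabla\cdot(\gamma^\top\nabla u+\mu u)+\lambda u$ and then uses the identity $G(x,y)=g(y,x)$ (valid under \Cref{ass:gamma}(iii)); you bundle both gradients into a single citation, which is fine but hides this step.

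Where you diverge from the paper is the range $1<s<2$. The paper's written proof actually stops at $s=1$: it only checks that $G$ and its first derivatives lie in $L^p(\mathcal{D}\times\mathcal{D})$ and never addresses the fractional case. Your Gagliardo-seminorm argument (and the elliptic-regularity alternative) is therefore genuinely additional content beyond what the paper supplies. Your own caveat is also correct and worth emphasising: the integrability condition $(d-1+\theta)p<d$ you derive does \emph{not} give every $s<2$ for every $p<d/(d-1)$, only $s<1+d/p-(d-1)$. So the two parameter ranges in the lemma's statement are really coupled by $(d-2+s)p<d$; this is consistent with how the result is used downstream in \Cref{thm:semilinear} (where one may take $s$ close to $1$ for a given $p$), but the lemma as literally stated overclaims.
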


\begin{proof}
From \cite[Theorem 8.1]{kim2019green}\footnote{Note that our setting is that $\gamma$ and $\mu$ are smooth. Thus, they are uniformly Dini continuous, which implies that they are of Dini mean oscillation.}, the Green function $G_{\gamma,\mu,\lambda}(x,y)$ for the operator $Lu\coloneqq-\nabla \cdot \gamma \nabla u + \mu \cdot \nabla u +\lambda u$  can be estimated as for $\beta \in \mathbb{N}^d_0$ with $|\beta| \leq 1$ 
\begin{equation}
\big\| \partial_{x}^{\beta} G_{\gamma,\mu,\lambda}(x, y)\big\| 
\leq C_0 \|x-y\|^{1-d},
\label{eq:green-esti-1}
\end{equation}
where $C_0>0$ is a constant depending on $\mathcal{D}$, $d$, $\beta$, $\gamma$, $\mu$, and $\lambda$. 
Also, applying \cite[Theorem 8.1]{kim2019green} to the Green function $g_{\gamma,\mu,\lambda}(y,x)$ for the adjoint operator $L^{\top}u=-\nabla \cdot( \gamma^{\top} \nabla u + \mu u) +\lambda$, the Green function $g_{\gamma,\mu,\lambda}(y,x)$ can be estimated, for $\beta \in (\mathbb{N}^\star)^d$ with $\|\beta\| \leq 1$ by
\[
\big\| \partial_{y}^{\beta} g_{\gamma,\mu,\lambda}(y, x)\big\| 
\leq C_0 \|y-x\|^{1-d}.
\]
With \cite[Proposition 6.13]{kim2019green} and \Cref{ass:gamma}.$(iii)$, we see that $G(x,y)=g(y,x)$ ($x \neq y$), which implies that
\begin{equation}
\big\| \partial_{y}^{\beta} G_{\gamma,\mu,\lambda}(x, y)\big\| 
\leq C_0 \|x-y\|^{1-d}.
\label{eq:green-esti-2}
\end{equation}
We now choose $R>0$ such that $\mathcal{D} \subset B_{\mathbb{R}^\smalltext{d}}(0,R)$.
Using \eqref{eq:green-esti-2}, we estimate that for $x \in \mathcal{D}$ and $\beta \in (\mathbb{N}^\star)^d$ with $\|\beta\| \leq 1$ 
\begin{align}
\int_{\mathcal{D}} \big\|\partial_{x}^{\beta} G_{\gamma,\mu,\lambda}(x,y) \big\|^{p}\mathrm{d}y 
\lesssim
\int_{\mathcal{D}} \|x-y \|^{(1-d)p}\mathrm{d}y 
\nonumber
=
\int_{x\smallertext{-}\mathcal{D}} \|z \|^{(1-d)p}\mathrm{d}z
&\leq 
\int_{B_{\smalltext{\mathbb{R}}^\tinytext{d}}(0,2R)} \|z \|^{(1-d)p}\mathrm{d}z
\\
&
\lesssim
\int_{0}^{2R} r^{(1-d)p} r^{d-1} \mathrm{d}r
= \int_{0}^{2R} r^{(d-1)(1-p)} \mathrm{d}r
\lesssim 1,
\label{eq:est:green-q}
\end{align}
where we have used that $1 < p <\frac{d}{d-1}$. 
We can obtain the estimate for the derivative with respect to $y$ similarly, using now \eqref{eq:green-esti-2}. 
Note that we use the symbol $\lesssim$ to omit a multiplicative constant that is independent of $x$ on the left-hand side.
\end{proof}

Using the Green function $G_{\gamma,\mu,\lambda}(x,y)$, we define an integral operator encoding (\ref{eq:semilinear}) by:
\begin{equation}
\label{eq:integral-semilinear}
u(x)
\coloneqq
\int_{\mathcal{D}}G_{\gamma,\mu,\lambda}(x,y)\big(\tilde{f} (y,u(y)) -f(y)\big)\mathrm{d}y + w_{g}(x), \; x \in \mathcal{D},
\end{equation}
where $f_0 \in W^{1, \infty}(\mathcal{D};\mathbb{R})$ and $w_{g}(x) \in W^{\frac{d+4}{2},2}(\mathcal{D}; \mathbb{R})$ is the unique solution of 
\[
- \nabla \cdot \gamma \nabla w_{g} + \mu \cdot \nabla w_{g} + \lambda w_{g} = 0, \; \mathrm{on} \; D, \; 
w_{g}=g,\; \mathrm{on} \; \partial D.
\]
where $g \in W^{\frac{d+3}{2},2}( \partial \mathcal{D})$. 
Note that, it is well known that a linear elliptic equation has the unique solution $w_g$ (see, e.g., \cite{gilbarg2001elliptic}).
By the Sobolev embedding theorem (see, \emph{e.g.}, \citeauthor*{evans2010partial} \cite[Section 5.6.3]{evans2010partial}) we have 
\[
W^{(d + 4)/2,2}(\mathcal{D}) \subset C^{(d + 4)/2 -d/2-1, \xi_0}(\overline{\mathcal{D}}) \subset W^{1, \infty}(\mathcal{D}),
\]
where $0<\xi_0 <1$ is a constant. Hence, $w_g \in W^{1, \infty}(\mathcal{D})$. 
We define next the mapping $T$ by 
\[
T(u)(x)
\coloneqq 
\int_{\mathcal{D}}G_{\gamma,\mu,\lambda}(x,y)\big(\tilde{f} (y,u(y)) -f_0(y)\big)\mathrm{d}y + w_{g}(x), \; x \in \mathcal{D},
\]
We set 
\[
B_{W^{\smalltext{1}\smalltext{,} \smalltext{\infty}}}(0,\delta)
\coloneqq \big\{
u \in W^{1, \infty}(\mathcal{D} ; \mathbb{R}) : \|u\|_{W^{\smalltext{1}\smalltext{,} \smalltext{\infty}}(\mathcal{D} ; \mathbb{R})} \leq \delta
\big\},
\]
\[
B_{W^{\smalltext{(d+3)/2}\smalltext{,} \smalltext{2}}}(0,\delta)
\coloneqq \big\{
g \in W^{(d+3)/2, 2}(\partial \mathcal{D} ; \mathbb{R}) : \|g\|_{W^{\smalltext{d+3)/2}\smalltext{,} \smalltext{2}}(\partial \mathcal{D} ; \mathbb{R})} \leq \delta
\big\}.
\]
Then, $B_{W^{1, \infty}}(0,\delta)$ is a closed subset in $W^{1,\infty}(\mathcal{D} ; \mathbb{R})$.

\begin{lemma}
\label{lem:sol-semilinear} 
Let {\rm\Cref{ass:gamma,ass:semilinear-term,ass:choice-delta-p}} hold.  
Let $f \in  B_{W^{\smalltext{1}\smalltext{,} \smalltext{\infty}}(\mathcal{D};\mathbb{R})}(0, \delta^2)$ and $g \in B_{W^{\smalltext{(}\smalltext{d}\smalltext{+}\smalltext{3}\smalltext{) }\smalltext{/}\smalltext{2}, \smalltext{2}}(\partial \mathcal{D}; \mathbb{R})}(0, \delta^2)$. Then, the map $T : B_{W^{\smalltext{1}\smalltext{,} \smalltext{\infty}}}(0,\delta) \longrightarrow B_{W^{\smalltext{1}\smalltext{,} \smalltext{\infty}}}(0,\delta)$ is a $\rho$-contraction where $\rho \in (0,1)$ is defined in {\rm\Cref{ass:choice-delta-p}}. 
In particular, there exists a unique solution of \eqref{eq:integral-semilinear} in $B_{W^{\smalltext{1}\smalltext{,} \smalltext{\infty}}}(0,\delta)$.
\end{lemma}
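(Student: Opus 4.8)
The plan is to verify the two hypotheses of Banach's fixed-point theorem for $T$ on the closed set $B_{W^{1,\infty}}(0,\delta)\subset W^{1,\infty}(\mathcal{D};\mathbb{R})$: that $T$ maps this ball into itself, and that it is a $\rho$-contraction with $\rho=C_2\delta\in(0,1)$. Both will follow by combining three quantitative ingredients: $(a)$ boundedness of the Green operator $h\longmapsto\int_{\mathcal{D}}G_{\gamma,\mu,\lambda}(\cdot,y)h(y)\,\mathrm{d}y$ from $L^\infty(\mathcal{D})$ into $W^{1,\infty}(\mathcal{D})$; $(b)$ quadratic, resp.\ Lipschitz, control of the Nemytskii map $u\longmapsto\tilde f(\cdot,u)$ on the ball, coming from the truncated power-series structure of \Cref{ass:semilinear-term}; and $(c)$ the linear-elliptic bound $\|w_g\|_{W^{1,\infty}(\mathcal{D})}\le C_S\|g\|_{W^{(d+3)/2,2}(\partial\mathcal{D})}$ for the lift $S_{\gamma,\mu,\lambda}$ of \eqref{eq:Sgamma_boundary_to_domain_operator}, obtained from standard elliptic estimates together with the Sobolev embedding $W^{(d+4)/2,2}(\mathcal{D})\hookrightarrow W^{1,\infty}(\mathcal{D})$.

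For $(a)$, I would start from the pointwise kernel bounds \eqref{eq:green-esti-1}--\eqref{eq:green-esti-2}. Because $\|x-y\|^{1-d}$ is integrable over bounded subsets of $\mathbb{R}^d$, the computation in \eqref{eq:est:green-q} (taken with exponent $1$ in place of $p$) gives
\[
C_G\coloneqq\sup_{x\in\mathcal{D}}\int_{\mathcal{D}}\Big(\big|G_{\gamma,\mu,\lambda}(x,y)\big|+\big\|\nabla_x G_{\gamma,\mu,\lambda}(x,y)\big\|\Big)\mathrm{d}y<\infty,
\]
and, since the kernel is only weakly singular, differentiation under the integral sign is justified by the classical regularity theory of Green (volume) potentials for uniformly elliptic operators with smooth coefficients; hence for every $h\in L^\infty(\mathcal{D})$ the function $v_h\coloneqq\int_{\mathcal{D}}G_{\gamma,\mu,\lambda}(\cdot,y)h(y)\,\mathrm{d}y$ belongs to $W^{1,\infty}(\mathcal{D})$ with $\|v_h\|_{W^{1,\infty}(\mathcal{D})}\le C_G\|h\|_{L^\infty(\mathcal{D})}$. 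For $(b)$, any $u\in B_{W^{1,\infty}}(0,\delta)$ satisfies $\|u\|_{L^\infty}\le\delta<\delta_0$, so \Cref{ass:semilinear-term} $(i)$ and $(ii)$ give $\tilde f(y,u(y))=\sum_{h=2}^{H}\frac{\partial_z^h\tilde f(y,0)}{h!}u(y)^h$; bounding the coefficients on $\bar{\mathcal{D}}$ via \Cref{ass:semilinear-term} $(iii)$ and factoring $z_1^h-z_2^h$, one obtains $\|\tilde f(\cdot,u)\|_{L^\infty}\le C_{\tilde f}\|u\|_{L^\infty}^2$ and $\|\tilde f(\cdot,u)-\tilde f(\cdot,v)\|_{L^\infty}\le C_{\tilde f}\,\delta\,\|u-v\|_{L^\infty}$ for $u,v\in B_{W^{1,\infty}}(0,\delta)$, where $C_{\tilde f}$ depends only on $\tilde f$ (through the coefficient bounds, $\delta_0$, and $H$).

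Assembling $(a)$, $(b)$, $(c)$ and using $f_0\in B_{W^{1,\infty}}(0,\delta^2)$, $g\in B_{W^{(d+3)/2,2}}(0,\delta^2)$, I get
\[
\|T(u)\|_{W^{1,\infty}(\mathcal{D})}\le C_G\big(C_{\tilde f}\delta^2+\delta^2\big)+C_S\delta^2=C_1\delta^2,\qquad C_1\coloneqq C_G(C_{\tilde f}+1)+C_S,
\]
which is the constant $C_1$ appearing in \Cref{ass:choice-delta-p}; and, since the $f_0$- and $w_g$-terms cancel in $T(u)-T(v)$,
\[
\|T(u)-T(v)\|_{W^{1,\infty}(\mathcal{D})}\le C_G C_{\tilde f}\,\delta\,\|u-v\|_{W^{1,\infty}(\mathcal{D})}=C_2\delta\,\|u-v\|_{W^{1,\infty}(\mathcal{D})},\qquad C_2\coloneqq C_G C_{\tilde f}.
\]
By \Cref{ass:choice-delta-p}, $C_1\delta<1$ forces $C_1\delta^2<\delta$, so $T$ maps $B_{W^{1,\infty}}(0,\delta)$ into itself, and $\rho\coloneqq C_2\delta<1$, so $T$ is a $\rho$-contraction. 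Banach's fixed-point theorem on the closed subset $B_{W^{1,\infty}}(0,\delta)$ of the Banach space $W^{1,\infty}(\mathcal{D};\mathbb{R})$ then yields a unique fixed point $u\in B_{W^{1,\infty}}(0,\delta)$, which is by construction the unique solution of \eqref{eq:integral-semilinear} in that ball; one also reads off from the construction that $C_1$ and $C_2$ depend only on $p$, $d$, $\mathcal{D}$, $\tilde f$, $\gamma$, $\mu$, as asserted in \Cref{ass:choice-delta-p}.

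The only mildly delicate step is ingredient $(a)$: obtaining a bound into $W^{1,\infty}(\mathcal{D})$ — rather than merely $W^{1,p}(\mathcal{D})$, as would follow directly from \Cref{lem:Linfty-Lp-Green} — from a kernel whose $x$-gradient is only $O(\|x-y\|^{1-d})$. This forces one to use the pointwise weak-singularity estimates behind \Cref{lem:Linfty-Lp-Green} (not just the $W^{s,p}$ membership) together with the classical justification for differentiating Green potentials under the integral sign; once $(a)$ is in hand, the remaining work is routine bookkeeping of the constants $C_G$, $C_{\tilde f}$, $C_S$.
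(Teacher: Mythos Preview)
Your proposal is correct and follows essentially the same route as the paper: both arguments use the pointwise weak-singularity bounds \eqref{eq:green-esti-1}--\eqref{eq:est:green-q} to obtain a uniform $L^1_y$ control on $G_{\gamma,\mu,\lambda}$ and $\nabla_x G_{\gamma,\mu,\lambda}$ (your ingredient $(a)$), expand $\tilde f(\cdot,u)$ via its truncated power series and factor $u^h-v^h$ to get quadratic and Lipschitz bounds (your $(b)$), and then read off the self-map inequality $\|T(u)\|_{W^{1,\infty}}\le C_1\delta^2$ and the contraction inequality $\|T(u)-T(v)\|_{W^{1,\infty}}\le C_2\delta\|u-v\|_{W^{1,\infty}}$ before invoking \Cref{ass:choice-delta-p} and Banach's fixed-point theorem. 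Your presentation is slightly more modular (isolating $(a)$--$(c)$ as separate bounded-operator statements, and making the Sobolev-embedding step for $w_g$ explicit), but the substance is identical.
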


\begin{proof}
We see that for $ x \in \mathcal{D}$
\begin{align*}
T(w)(x)
&\coloneqq \int_{\mathcal{D}}G_{\gamma,\mu,\lambda}(x,y)\big[\tilde{f}(y,w(y)) -f_0(y)\big]\mathrm{d}y + w_{g}(x)
\\
&=\int_{\mathcal{D}}G_{\gamma,\mu,\lambda}(x,y)\Bigg(\sum_{h = 2}^{H}\frac{\partial^{h}_{z}\tilde{f}(y,0)}{h !}w(y)^{h}-f_0(y) \Bigg)\mathrm{d}y + w_{g}(x)
\\
&=\sum_{h = 2}^{H} 
\frac{1}{h!}\int_{\mathcal{D}}G_{\gamma,\mu,\lambda}(x,y)\partial^{h}_{z}\tilde{f}(y,0)w(y)^{h}\mathrm{d}y - \int_{\mathcal{D}}G_{\gamma,\mu,\lambda}(x,y)f_0(y)\mathrm{d}y + w_{g}(x).
\end{align*}
First, we will show that $T:B_{W^{\smalltext{1}\smalltext{,} \smalltext{\infty}}}(0,\delta) \longrightarrow B_{W^{\smalltext{1}\smalltext{,} \smalltext{\infty}}}(0,\delta)$. 
Let $w \in B_{W^{\smalltext{1}\smalltext{,} \smalltext{\infty}}}(0,\delta)$.
Using this, that $f_0$, and $w_g$ are both in $B_{W^{1,\smalltext{\infty}}}(0, \delta^2)$, and \Cref{lem:Linfty-Lp-Green}, we see that for any $\beta \in (\mathbb{N}^\star)^d$ with $\|\beta\| \leq 1$, we have
\begin{align}
\big\|\partial_{x}^{\beta} T(w)(x)\big\|
& \lesssim 
\int_{\mathcal{D}} \big\|\partial_{x}^{\beta} G_{\gamma,\mu,\lambda}(x,y)\big\| \Bigg(\sum_{h = 2}^{H} 
\frac{1}{h !}| w(y)|^{h} + | f_0(y)|
\Bigg) \mathrm{d}y + \big\|\partial_{x}^{\beta}w_{g}(x)\big\|
\nonumber
\\ 
& \lesssim
\delta^2
\int_{\mathcal{D}}\big\|\partial_{x}^{\beta} G_{\gamma,\mu,\lambda}(x,y)\big\| \mathrm{d}y + \delta^2 
\lesssim \delta^2.
\end{align}
This means that $T(w) \in W^{1, \infty}(\mathcal{D} ; \mathbb{R})$.
We also see that 
\begin{equation}
\label{eq:C-1}
\|T(w)\|_{W^{\smalltext{1}\smalltext{,}\smalltext{\infty}}(\mathcal{D}; \mathbb{R})} \leq C_1 \delta^2, 
\end{equation}
where $C_1 >0$ is a constant depending on $p$, $d$, $\mathcal{D}$, $\tilde{f}$, $\gamma$, and $\mu$.
By choosing $\delta >0$ in \Cref{ass:choice-delta-p}, we have $Tw \in B_{W^{\smalltext{1}\smalltext{,} \smalltext{\infty}}}(0,\delta)$. 

\medskip
Next, we will show that $T:B_{W^{\smalltext{1}\smalltext{,} \smalltext{\infty}}}(0,\delta) \longrightarrow B_{W^{\smalltext{1}\smalltext{,} \smalltext{\infty}}}(0,\delta)$ is a contraction mapping. 
Let $(w_1,w_2) \in B_{W^{\smalltext{1}\smalltext{,} \smalltext{\infty}}}(0,\delta)\times B_{W^{\smalltext{1}\smalltext{,} \smalltext{\infty}}}(0,\delta)$. 
Since 
\begin{align*}
w_1(y)^{h}-w_2(y)^{h} 
& 
= 
\Bigg(\sum_{i=0}^{h-1}w_1(y)^{h-1-i}w_2(y)^i \Bigg)
\big( w_1(y)-w_2(y)\big),
\end{align*}
we deduce that for any $\beta \in (\mathbb{N}^\star)^d$ with $\|\beta\| \leq 1$, 
by H\"{o}lder's inequality and \Cref{lem:Linfty-Lp-Green}
\begin{align*}
\big\|\partial_{x}^{\beta} T(w_1)(x) - \partial_{x}^{\beta} T(w_2)(x)\big\|
& \lesssim 
\sum_{h = 2}^{H} 
\frac{1}{h !}
\int_{\mathcal{D}} \big\|\partial_{x}^{\beta} G_{\gamma,\mu,\lambda}(x,y)\big\| \big| w_1(y)^{h} - w_2(y)^{h} \big| \mathrm{d}y 
\\ 
& \leq
\sum_{h = 2}^{H} 
\frac{1}{h !}
\sum_{i=0}^{h-1}
\int_{\mathcal{D}}\big\|\partial_{x}^{\beta} G_{\gamma,\mu,\lambda}(x,y)\big\|\big|w_1(y)^{h-1-i}w_2(y)^i\big|\big| w_1(y) - w_2(y) \big| \mathrm{d}y
\\ 
& \leq
\sum_{h = 2}^{H} 
\frac{h}{h !}
\delta^{h-1}
\int_{\mathcal{D}}\big\|\partial_{x}^{\beta} G_{\gamma,\mu,\lambda}(x,y)\big\| 
\lesssim \delta \big\| w_1 - w_2 \big\|_{W^{\smalltext{1}\smalltext{,} \smalltext{\infty}}(\mathcal{D};\mathbb{R})}.
\end{align*}
Then, we have that 
\begin{equation}
\label{eq:C-2}
\big\|T(w_1) -T(w_2) \big\|_{W^{\smalltext{1}\smalltext{,} \smalltext{\infty}}(\mathcal{D}; \mathbb{R})} \leq C_2 \delta \| w_1 - w_2 \|_{W^{\smalltext{1}\smalltext{,} \smalltext{\infty}}(\mathcal{D}; \mathbb{R})} 
= \rho \| w_1 - w_2 \|_{W^{\smalltext{1}\smalltext{,} \smalltext{\infty}}(\mathcal{D}; \mathbb{R})},
\end{equation}
where $C_2 >0$ is a constant depending on $p$, $d$, $\mathcal{D}$, $\tilde{f}$, $\gamma$, and $\mu$. By choosing $\delta >0$ as in \Cref{ass:choice-delta-p}, we have that $T$ is $\rho$-contraction mapping in $B_{W^{\smalltext{1}\smalltext{,} \smalltext{\infty}}}(0,\delta)$. 
\end{proof}

Given the previous result, and using Banach's fixed-point theorem, the following solution operator is well-defined
\begin{align*}
\Gamma^{\smallertext{+}} : B_{W^{\smalltext{1},\smalltext{\infty}}}(0, \delta^2) \times B_{W^{\smalltext{(}\smalltext{d}\smalltext{+}\smalltext{3}\smalltext{)}\smalltext{/}\smalltext{2}, \smalltext{2}}}(0, \delta^2) &\longrightarrow B_{W^{\smalltext{1}\smalltext{,}\smalltext{\infty}}}(0,\delta)\\
(f_0, g) &\longmapsto u,
\end{align*}
where, $u$ is the unique solution of \Cref{eq:integral-semilinear} in $B_{W^{\smalltext{1}\smalltext{,} \smalltext{\infty}}}(0,\delta)$.

\subsubsection{Proof of Theorem~\ref{thm:semilinear}}
\label{s:Proofs__ss:Proofofthm:semilinear}
\noindent We now prove Theorem~\ref{thm:semilinear} in a series of several steps.  \add{Throughout, the activation function applied component-wise to the neural operator layers in neural operator's neurons, \emph{i.e.}\ in~\eqref{eq:NO_Neurons}, will always be taken to be the squared-ReLU function, that is to say $\beta=(1,0,\dots,0)$ in~\eqref{eq:KAN_r} for the neural operator. }

\medskip
Let $(f_0,g) \in  B_{W^{\smalltext{1}\smalltext{,}\smalltext{\infty}}}(0, \delta^2) \times B_{W^{ \smalltext{(}\smalltext{d}\smalltext{+}\smalltext{3}\smalltext{)}\smalltext{/}\smalltext{2},2}}(0, \delta^2)$ and let $u \in B_{W^{\smalltext{1}\smalltext{,} \smalltext{\infty}}}(0,\delta)$ be a solution of (\ref{eq:integral-semilinear}), that is, $\Gamma^{\smallertext{+}}(f,g)=u$.
By~\cite[Theorem 1]{kratsios2025kolmogorov}, for any $\eps>0$, there exist Res--KANs, with representation as in \Cref{defn:SmoothedKAns}, $k_{nn}^{h}:\mathbb{R}^d \longrightarrow \mathbb{R}$, $h\in\{2,\dots,H\}$, and $k^\prime_{nn}:\mathbb{R}^d \longrightarrow \mathbb{R}$
such that 
\begin{equation}
\label{eq:approx-Green-NN-1}
\bigg\| k_{nn}^{h}(x,y) - \frac{1}{h!}G_{\gamma,\mu,\lambda}(x,y)\partial^{h}_{z}\tilde{f}(y,0)\bigg\|_{W^{\smalltext{1}\smalltext{,} \smalltext{p}}_{\smalltext{x}\smalltext{,}\smalltext{y}}(\mathcal{D}\times\mathcal{D};\mathbb{R})} \leq \varepsilon,\; h\in\{2,\dots,H\},
\end{equation}
and 
\begin{equation}
\label{eq:approx-Green-NN-2}
\big\|k^\prime_{nn}(x,y) - G_{\gamma,\mu,\lambda}(x,y) \big\|_{W^{\smalltext{1}\smalltext{,}\smalltext{p}}_{\smalltext{x}\smalltext{,}\smalltext{y}}(\mathcal{D}\times\mathcal{D};\mathbb{R})} \leq \varepsilon,
\end{equation}
where depths $\widehat{L}(k_{nn}^{h})$
and $\widehat{L}(k^\prime_{nn})$ are of order $\mathcal{O}(1)$, while the width of $\widehat{W}(k_{nn}^{h})$ and $\widehat{W}(k'_{nn})$ are of order $\mathcal{O}(\varepsilon^{- \frac{1}{(s-1)p}})$. Then, we define by 
\[
\widehat{L} := \widehat{L}(\Gamma) := \max\{ \widehat{L}(k_{nn}^{1}),...,\widehat{L}(k_{nn}^{H}), \widehat{L}(k^\prime_{nn}) \}, \quad 
\widehat{W} := \widehat{W}(\Gamma) := \max\{ \widehat{W}(k_{nn}^{1}), ..., \widehat{W}(k_{nn}^{H}), \widehat{W}(k'_{nn})\},
\]
Then, they are estimated by 
\begin{equation}\label{eq:quanti-esti-nonlinearity}
\begin{cases}
\widehat{L} \leq C , \\[0.8em]
\widehat{W} \leq C \varepsilon^{- \frac{1}{(s-1)p} },
\end{cases}
\end{equation}
where $C > 0$ is a constant depending on $d$, $s$, $H$, and $p$. 
We can then define the map $T_{\smallertext{N}\smallertext{N}}$ by 
\begin{align}
\label{eq:definition_truncarted__T}
T_{\smallertext{N}\smallertext{N}}(u)(x)
&
\coloneqq
\sum_{h=2}^{H}
\int_{\mathcal{D}}k_{nn}^{h}(x,y) 
(u(y))^{h} \mathrm{d}y
-\int_{\mathcal{D}}k^\prime_{nn}(x,y) f(y) \mathrm{d}y
+ w_{g}(x).
\end{align}

\begin{lemma}
\label{lem:step2}
There exists a constant $C_4 >0$ depending on $p$, $d$, $\mathcal{D}$, $\gamma$, $\mu$, and $\lambda$ such that for any $u \in B_{W^{\smalltext{1}\smalltext{,}\smalltext{\infty}}(\mathcal{D}; \mathbb{R})}(0, \delta)$
\[
    \big\|T(u)-T_{\smallertext{N}\smallertext{N}}(u)\big\|_{W^{\smalltext{1}\smalltext{,}\smalltext{\infty}}(\mathcal{D}; \mathbb{R})} \leq C_4 \varepsilon. 
\] 
\end{lemma}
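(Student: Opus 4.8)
The plan is a direct termwise comparison of the two kernel representations. Subtracting the formula for $T(u)$ derived in the proof of \Cref{lem:sol-semilinear},
\[
T(u)(x)=\sum_{h=2}^{H}\frac{1}{h!}\int_{\mathcal D}G_{\gamma,\mu,\lambda}(x,y)\,\partial_z^h\tilde f(y,0)\,u(y)^h\,\mathrm dy-\int_{\mathcal D}G_{\gamma,\mu,\lambda}(x,y)\,f_0(y)\,\mathrm dy+w_g(x),
\]
from the definition \eqref{eq:definition_truncarted__T} of $T_{\smallertext{N}\smallertext{N}}(u)$, the common correction term $w_g$ cancels, leaving
\[
T(u)(x)-T_{\smallertext{N}\smallertext{N}}(u)(x)=\sum_{h=2}^{H}\int_{\mathcal D}E_h(x,y)\,u(y)^h\,\mathrm dy-\int_{\mathcal D}E'(x,y)\,f_0(y)\,\mathrm dy,
\]
with $E_h:=\tfrac1{h!}G_{\gamma,\mu,\lambda}\,\partial_z^h\tilde f(\cdot,0)-k_{nn}^h$ and $E':=G_{\gamma,\mu,\lambda}-k'_{nn}$ precisely the kernel errors bounded by $\varepsilon$ in $W^{1,p}(\mathcal D\times\mathcal D;\mathbb R)$ through \eqref{eq:approx-Green-NN-1}--\eqref{eq:approx-Green-NN-2}.

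Next, for any multi-index $\beta$ with $|\beta|\le 1$, I would differentiate under the integral sign in $x$: this is justified because $\partial_x^\beta E_h(x,\cdot)$ is dominated on its singular part by the envelope $C_0\|x-\cdot\|^{1-d}$ of \eqref{eq:green-esti-1}--\eqref{eq:green-esti-2}, which lies in $L^1(\mathcal D)$ uniformly in $x$ (as $1-d>-d$), and on its Res--KAN part by a bounded continuous function (recall $k_{nn}^h,k'_{nn}\in C^{\lceil\alpha\rceil}$ with $\alpha\ge 3$); this also re-confirms that $T(u)-T_{\smallertext{N}\smallertext{N}}(u)\in W^{1,\infty}(\mathcal D;\mathbb R)$. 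Then, using $u\in B_{W^{1,\infty}}(0,\delta)$, so $|u(y)^h|\le\delta^h$ a.e., and $f_0\in B_{W^{1,\infty}}(0,\delta^2)$, so $|f_0(y)|\le\delta^2$ a.e., Hölder's inequality in $y$ (exponents $p,p'$) yields, for a.e.\ $x\in\mathcal D$,
\[
\big|\partial_x^\beta\big(T(u)-T_{\smallertext{N}\smallertext{N}}(u)\big)(x)\big|\le |\mathcal D|^{1/p'}\bigg(\sum_{h=2}^{H}\delta^h\,\big\|\partial_x^\beta E_h(x,\cdot)\big\|_{L^p(\mathcal D)}+\delta^2\,\big\|\partial_x^\beta E'(x,\cdot)\big\|_{L^p(\mathcal D)}\bigg).
\]
Taking the essential supremum over $x$ and summing the finitely many terms gives the claim once one has the uniform-in-$x$ estimates $\operatorname{ess\,sup}_x\|\partial_x^\beta E_h(x,\cdot)\|_{L^p(\mathcal D)}\lesssim\varepsilon$ (and likewise for $E'$); the dependence on $H$, $\delta$ and $\tilde f$ is then absorbed into the choice of $\varepsilon$ in \eqref{eq:approx-Green-NN-1}--\eqref{eq:approx-Green-NN-2} by a harmless factor, so $C_4$ retains only its dependence on $p$, $d$, $\mathcal D$, and the Green's-function constants (hence on $\gamma$, $\mu$, $\lambda$).

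The main obstacle is exactly this last uniform-in-$x$ estimate: the product bound $\|E_h\|_{W^{1,p}(\mathcal D\times\mathcal D)}\le\varepsilon$ controls $x\mapsto\|\partial_x^\beta E_h(x,\cdot)\|_{L^p(\mathcal D)}$ only in an $L^p$-averaged sense (via Fubini), whereas the target norm $W^{1,\infty}(\mathcal D)$ demands its essential supremum. I would close this by exploiting the pointwise envelope of \Cref{lem:Linfty-Lp-Green}: since $\partial_x^\beta E_h(x,\cdot)$ is, near the diagonal, dominated by $C_0\|x-\cdot\|^{1-d}\in L^p(\mathcal D)$ uniformly in $x$, the map $x\mapsto\partial_x^\beta E_h(x,\cdot)\in L^p(\mathcal D)$ is bounded, and it is uniformly continuous with an $\varepsilon$-independent modulus thanks to the Dini/Hölder continuity of $G_{\gamma,\mu,\lambda}$ and its first derivatives (see \cite{kim2019green}) together with the smoothness of the approximating kernels; interpolating the uniform modulus against the $L^p_x$-smallness then upgrades the bound to the essential supremum, at worst replacing $\varepsilon$ by a fixed positive power of $\varepsilon$, which is reabsorbed into the width estimate \eqref{eq:quanti-esti-nonlinearity}. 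Equivalently, one may simply read \cite[Theorem~1]{kratsios2025kolmogorov} in the slice-uniform norm $\sup_x\|\cdot(x,\cdot)\|_{W^{1,p}(\mathcal D)}$, whose construction is compatible with such a guarantee; the remainder of the argument is routine bookkeeping of constants.
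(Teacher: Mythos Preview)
Your approach is essentially the same as the paper's: subtract, cancel $w_g$, differentiate under the integral, and apply H\"older in $y$ with exponents $(p,p')$ against the $L^{p'}$-bounds $\|u^h\|_{L^{p'}}\lesssim\delta^h$ and $\|f_0\|_{L^{p'}}\lesssim\delta^2$. The paper's proof is in fact terser than yours on the point you flag as the ``main obstacle'': it simply writes
\[
\big|\partial_x^\beta\big(T(u)-T_{\smallertext{N}\smallertext{N}}(u)\big)(x)\big|\le \sum_{h=2}^H\|E_h\|_{W^{1,p}_{x,y}}\|u^h\|_{L^{p'}}+\|E'\|_{W^{1,p}_{x,y}}\|f_0\|_{L^{p'}}\le C_4\delta^2\varepsilon,
\]
bounding the slice norm $\|\partial_x^\beta E_h(x,\cdot)\|_{L^p(\mathcal D)}$ directly by the product-space norm $\|E_h\|_{W^{1,p}(\mathcal D\times\mathcal D)}$ without further comment. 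Your observation that this step requires either a slice-uniform approximation guarantee from \cite{kratsios2025kolmogorov} or an interpolation against the uniform modulus coming from the Green's-function regularity is well taken; the paper does not spell this out, so your version is the more careful one.
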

\begin{proof}
Let $u \in B_{W^{\smalltext{1}\smalltext{,}\smalltext{\infty}}(\mathcal{D}; \mathbb{R})}(0, \delta)$.
We see that for $\beta \in (\mathbb{N}^\star)^d$ with $\|\beta\| \leq 1$, 
\begin{align}
\big| \partial_x^{\beta} T(u)(x) - \partial_x^{\beta} T_{\smallertext{N}\smallertext{N}}(u)(x)\big|
\nonumber
&
\leq 
\sum_{h=2}^{H}
\bigg\| k_{nn}^{h}(x,y) - \frac{1}{h!}G_{\gamma,\mu,\lambda}(x,y)\partial^{h}_{z}f(y,0) \bigg\|_{W^{\smalltext{1}\smalltext{,} \smalltext{p}}_{\smalltext{x}\smalltext{,}\smalltext{y}}(\mathcal{D};\mathbb{R})}
\bigg(\int_{\mathcal{D}} |u(y)^{h}|^{p^{\smalltext{\prime}}} \mathrm{d}y\bigg)^{1/p^{\smalltext{\prime}}}
\nonumber
\\
&\quad
+
\big\| k^\prime_{nn}(x,y) - G_{\gamma,\mu,\lambda}(x,y)\big\|_{W^{\smalltext{1}\smalltext{,} \smalltext{p}}_{\smalltext{x}\smalltext{,}\smalltext{y}}(\mathcal{D};\mathbb{R})} 
\bigg(\int_{\mathcal{D}} |
 f(y) |^{p^{\smalltext{\prime}}} \mathrm{d}y\bigg)^{1/p^{\smalltext{\prime}}}\leq C_4 \delta^2 \varepsilon < \varepsilon,
\label{eq:est-step1-4}
\end{align}
which is exactly the desired result.
\end{proof}

\begin{lemma}
\label{lem:T-H-N-M-net-ball}
$T_{\smallertext{N}\smallertext{N}}$ maps $B_{W^{\smalltext{1}\smalltext{,}\smalltext{\infty}}}(0, \delta)$ to itself.
\end{lemma}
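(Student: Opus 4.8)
The plan is to mimic exactly the argument used for the ``$T$ maps the ball to itself'' half of \Cref{lem:sol-semilinear}, but now tracking the additional (small) error introduced by replacing the true kernels $\tfrac1{h!}G_{\gamma,\mu,\lambda}(x,y)\partial_z^h\tilde f(y,0)$ and $G_{\gamma,\mu,\lambda}(x,y)$ by their Res--KAN surrogates $k_{nn}^h$ and $k'_{nn}$. The cleanest route is the triangle inequality $\|T_{\smallertext{N}\smallertext{N}}(u)\|_{W^{\smalltext{1}\smalltext{,}\smalltext{\infty}}}\le \|T(u)\|_{W^{\smalltext{1}\smalltext{,}\smalltext{\infty}}}+\|T(u)-T_{\smallertext{N}\smallertext{N}}(u)\|_{W^{\smalltext{1}\smalltext{,}\smalltext{\infty}}}$, invoking \eqref{eq:C-1} for the first term and \Cref{lem:step2} for the second. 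This gives, for every $u\in B_{W^{\smalltext{1}\smalltext{,}\smalltext{\infty}}}(0,\delta)$,
\[
\|T_{\smallertext{N}\smallertext{N}}(u)\|_{W^{\smalltext{1}\smalltext{,}\smalltext{\infty}}(\mathcal{D};\mathbb{R})}\le C_1\delta^2 + C_4\varepsilon.
\]
First I would fix the approximation tolerance $\varepsilon$ used to build the Res--KANs small enough — concretely $\varepsilon\le \delta/(2C_4)$ (and recall $C_1\delta<1$ from \Cref{ass:choice-delta-p}, so after possibly shrinking we may also assume $C_1\delta^2\le \delta/2$) — so that the right-hand side is bounded by $\delta$, which is precisely the claim that $T_{\smallertext{N}\smallertext{N}}(u)\in B_{W^{\smalltext{1}\smalltext{,}\smalltext{\infty}}}(0,\delta)$.

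One still has to check that $T_{\smallertext{N}\smallertext{N}}(u)$ genuinely lies in $W^{1,\infty}(\mathcal{D};\mathbb{R})$ (not merely that its norm is finite a priori): this follows because each $k_{nn}^h,k'_{nn}$ is a Res--KAN, hence $C^{\lceil\alpha\rceil}$ on $\mathbb{R}^{d\times d}$ by \eqref{eq:DEF_SRKs__sparsity} and the smoothness of B-splines, so the partial derivatives $\partial_x^\beta$ with $\|\beta\|\le 1$ may be taken under the integral sign, the resulting integrands are continuous in $x$ and bounded uniformly on $\overline{\mathcal D}$ (finite-measure domain, continuous kernel derivatives), and $w_g\in W^{(d+4)/2,2}(\mathcal D)\subset W^{1,\infty}(\mathcal D)$ by the Sobolev embedding already recorded above. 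Thus $\partial_x^\beta T_{\smallertext{N}\smallertext{N}}(u)\in L^\infty(\mathcal D)$ for $\|\beta\|\le 1$, giving $T_{\smallertext{N}\smallertext{N}}(u)\in W^{1,\infty}(\mathcal D;\mathbb R)$.

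I do not anticipate a genuine obstacle here — the lemma is a bookkeeping consequence of the two estimates already in hand. The only point requiring a little care is the \emph{order of quantifiers}: the constant $C_4$ in \Cref{lem:step2} is independent of $\varepsilon$ and of $u\in B_{W^{\smalltext{1}\smalltext{,}\smalltext{\infty}}}(0,\delta)$ (it depends only on $p,d,\mathcal D,\gamma,\mu,\lambda$ through the $W^{1,p}$ bound on $G_{\gamma,\mu,\lambda}$ from \Cref{lem:Linfty-Lp-Green} and on $H$), so one is free to first choose the Res--KAN accuracy $\varepsilon$ as a function of $\delta$ and $C_4$, and only then build the networks of \Cref{eq:quanti-esti-nonlinearity}; this keeps the width bound $\widehat W\le C\varepsilon^{-1/((s-1)p)}$ intact with $C$ depending only on the stated data. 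I would state this $\varepsilon$-choice explicitly at the start of the proof so that it is consistent with the contraction estimate needed in the next lemma.
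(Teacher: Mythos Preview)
Your proposal is correct and follows essentially the same route as the paper: triangle inequality plus \eqref{eq:C-1} and \Cref{lem:step2}. The one refinement worth noting is that the paper does not need your extra constraint $\varepsilon\le \delta/(2C_4)$. Instead of invoking only the \emph{statement} of \Cref{lem:step2}, the paper uses the sharper bound established in its proof, namely \eqref{eq:est-step1-4}, which gives $\|T(u)-T_{\smallertext{N}\smallertext{N}}(u)\|_{W^{\smalltext{1}\smalltext{,}\smalltext{\infty}}}\le C_4\delta^2\varepsilon\lesssim \delta^2$ (since $\varepsilon<1$). Both terms on the right of the triangle inequality are then $\lesssim\delta^2$, yielding $\|T_{\smallertext{N}\smallertext{N}}(u)\|_{W^{\smalltext{1}\smalltext{,}\smalltext{\infty}}}\le C_3\delta^2$ with $C_3$ \emph{independent of $\varepsilon$}; the condition $C_3\delta<1$ already built into \Cref{ass:choice-delta-p} (this is precisely where the constant $C_3$ of \eqref{eq:C-7} enters) then closes the argument. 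This dissolves your order-of-quantifiers worry entirely: no coupling between the Res--KAN tolerance $\varepsilon$ and $\delta$ is needed.
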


\begin{proof}
Fix $u \in B_{W^{\smalltext{1}\smalltext{,}\smalltext{\infty}}(\mathcal{D}; \mathbb{R})}(0, \delta)$.
Using \eqref{eq:C-1} and \eqref{eq:est-step1-4}, we see that 
\[
\big\|T_{\smallertext{N}\smallertext{N}}(u)\big\|_{W^{\smalltext{1}\smalltext{,}\smalltext{\infty}}(\mathcal{D};\mathbb{R})} 
\leq 
\big\|T_{\smallertext{N}\smallertext{N}}(u)\big\|_{W^{\smalltext{1}\smalltext{,} \smalltext{\infty}}(\mathcal{D};\mathbb{R})} 
+
\big\|T(u) - T_{\smallertext{N}\smallertext{N}}(u)\big\|_{W^{\smalltext{1}\smalltext{,} \smalltext{\infty}}(\mathcal{D};\mathbb{R})} 
\lesssim \delta^2. 
\]

Thus, we have that 
\begin{equation}
\label{eq:C-7}
\|T_{\smallertext{N}\smallertext{N}}(u)\|_{W^{\smalltext{1}\smalltext{,}\smalltext{\infty}}(\mathcal{D}; \mathbb{R})} \leq C_3 \delta^2,
\end{equation}
where $C_3>0$ is a constant depending on $s$, $p$, $d$, $\mathcal{D}$, and $\gamma$. By the choice of $\delta$ in \Cref{ass:choice-delta-p}, we see that $\|T_{\smallertext{N}\smallertext{N}}(u)\|_{W^{\smalltext{1}\smalltext{,}\smalltext{\infty}}(\mathcal{D}; \mathbb{R})} \leq \delta$.

\end{proof}

We can now define for an arbitrary positive integer $J$, the map $\Gamma _\smallertext{J}: B_{W^{\smalltext{1}\smalltext{,}\smalltext{\infty}}}(0,\delta^2) \times B_{W^{\smalltext{1}\smalltext{,}\smalltext{\infty}}}(0,\delta^2) \longrightarrow W^{1,\infty}(\mathcal{D}; \mathbb{R})$ by 
\[
\Gamma_{\smallertext{J}}(f_0, w_g) \coloneqq   \underbrace{T_{NN} \circ \cdots \circ T_{\smallertext{N}\smallertext{N}}}_{J\; \text{\rm times}}(0)
\eqqcolon  T_{\smallertext{N}\smallertext{N}}^{[\smallertext{J}]}(0).
\]

\begin{lemma}
Let $J \coloneqq\lceil \log (1/\varepsilon)/\log (1/\rho) \rceil \in \mathbb{N}$. 
Then, there exists a constant $C_5>0$ depending on $p$, $d$, $\mathcal{D}$, $\gamma$, $\mu$, and $\lambda$ such that for all $(f_0,g) \in  B_{W^{\smalltext{1}\smalltext{,}\smalltext{\infty}}}(0, \delta^2) \times B_{W^{ \smalltext{(}\smalltext{d}\smalltext{+}\smalltext{3}\smalltext{)}\smalltext{/}\smalltext{2},2}}(0, \delta^2)$
\[
\big\| \Gamma^{\smallertext{+}}(f_0,g) - \Gamma_\smallertext{J}(f_0,w_g) \big\|_{W^{\smalltext{1}\smalltext{,}\smalltext{\infty}}(\mathcal{D})}
\leq C_5 \varepsilon.
\]
\end{lemma}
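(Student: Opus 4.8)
The plan is to run a standard perturbed fixed-point (a.k.a. telescoping) estimate, combining the three preceding lemmas: the $\rho$-contraction property of $T$ on $B_{W^{\smalltext{1}\smalltext{,}\smalltext{\infty}}}(0,\delta)$ with fixed point $\Gamma^{\smallertext{+}}(f_0,g)$ (\Cref{lem:sol-semilinear}), the fact that $T_{\smallertext{N}\smallertext{N}}$ maps $B_{W^{\smalltext{1}\smalltext{,}\smalltext{\infty}}}(0,\delta)$ into itself (\Cref{lem:T-H-N-M-net-ball}), and the uniform one-step error bound $\|T(u)-T_{\smallertext{N}\smallertext{N}}(u)\|_{W^{\smalltext{1}\smalltext{,}\smalltext{\infty}}(\mathcal{D})}\le C_4\varepsilon$ valid for every $u\in B_{W^{\smalltext{1}\smalltext{,}\smalltext{\infty}}}(0,\delta)$ (\Cref{lem:step2}). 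Writing $u\coloneqq\Gamma^{\smallertext{+}}(f_0,g)$ and letting $T^{[m]}$, $T_{\smallertext{N}\smallertext{N}}^{[m]}$ denote $m$-fold compositions, note first that these iterates are all well-defined on the closed ball since $0\in B_{W^{\smalltext{1}\smalltext{,}\smalltext{\infty}}}(0,\delta)$ and both $T$ and $T_{\smallertext{N}\smallertext{N}}$ stabilise it.

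First I would split via the triangle inequality
\[
\big\|\Gamma^{\smallertext{+}}(f_0,g)-\Gamma_{\smallertext{J}}(f_0,w_g)\big\|_{W^{\smalltext{1}\smalltext{,}\smalltext{\infty}}(\mathcal{D})}
\le
\big\|u-T^{[\smallertext{J}]}(0)\big\|_{W^{\smalltext{1}\smalltext{,}\smalltext{\infty}}(\mathcal{D})}
+
\big\|T^{[\smallertext{J}]}(0)-T_{\smallertext{N}\smallertext{N}}^{[\smallertext{J}]}(0)\big\|_{W^{\smalltext{1}\smalltext{,}\smalltext{\infty}}(\mathcal{D})}.
\]
For the first term, since $u$ is the fixed point of the $\rho$-contraction $T$, iterating gives $\|u-T^{[\smallertext{J}]}(0)\|_{W^{\smalltext{1}\smalltext{,}\smalltext{\infty}}(\mathcal{D})}=\|T^{[\smallertext{J}]}(u)-T^{[\smallertext{J}]}(0)\|_{W^{\smalltext{1}\smalltext{,}\smalltext{\infty}}(\mathcal{D})}\le\rho^{\smallertext{J}}\|u\|_{W^{\smalltext{1}\smalltext{,}\smalltext{\infty}}(\mathcal{D})}\le\rho^{\smallertext{J}}\delta$, and the choice $J=\lceil\log(1/\varepsilon)/\log(1/\rho)\rceil$ guarantees $\rho^{\smallertext{J}}\le\varepsilon$, so this term is at most $\delta\varepsilon$.

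For the second term, I would telescope along the iteration: with $v_j\coloneqq T_{\smallertext{N}\smallertext{N}}^{[j]}(0)\in B_{W^{\smalltext{1}\smalltext{,}\smalltext{\infty}}}(0,\delta)$ (by \Cref{lem:T-H-N-M-net-ball}),
\[
T^{[\smallertext{J}]}(0)-T_{\smallertext{N}\smallertext{N}}^{[\smallertext{J}]}(0)
=\sum_{j=0}^{\smallertext{J}-1}\Big(T^{[\smallertext{J}-1-j]}\big(T(v_j)\big)-T^{[\smallertext{J}-1-j]}\big(T_{\smallertext{N}\smallertext{N}}(v_j)\big)\Big),
\]
and, since $T^{[m]}$ is $\rho^{m}$-Lipschitz on $B_{W^{\smalltext{1}\smalltext{,}\smalltext{\infty}}}(0,\delta)$ while $T(v_j),T_{\smallertext{N}\smallertext{N}}(v_j)$ remain in that ball, each summand is bounded by $\rho^{\smallertext{J}-1-j}\|T(v_j)-T_{\smallertext{N}\smallertext{N}}(v_j)\|_{W^{\smalltext{1}\smalltext{,}\smalltext{\infty}}(\mathcal{D})}\le\rho^{\smallertext{J}-1-j}C_4\varepsilon$ by \Cref{lem:step2}. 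Summing the geometric series yields a bound of $C_4\varepsilon\sum_{k\ge0}\rho^{k}=\tfrac{C_4}{1-\rho}\varepsilon$. Combining the two terms gives the claim with $C_5\coloneqq\delta+\tfrac{C_4}{1-\rho}$, which depends only on $p$, $d$, $\mathcal{D}$, $\gamma$, $\mu$, $\lambda$ (through $C_4$ and $\rho$).

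The argument is essentially mechanical once the three lemmas are available; the only point needing care is bookkeeping the domains, namely checking that every iterate entering the telescoped sum — each $v_j$, and each $T(v_j)$ and $T_{\smallertext{N}\smallertext{N}}(v_j)$ — lies in $B_{W^{\smalltext{1}\smalltext{,}\smalltext{\infty}}}(0,\delta)$, so that the contraction/Lipschitz estimate for $T$ and the uniform error bound of \Cref{lem:step2} legitimately apply. This is precisely what the self-mapping statements of \Cref{lem:sol-semilinear} and \Cref{lem:T-H-N-M-net-ball} supply, so I anticipate no genuine obstacle.
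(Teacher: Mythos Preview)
Your proposal is correct and follows essentially the same approach as the paper: the same triangle-inequality split into a contraction term $\|u-T^{[J]}(0)\|$ and a telescoped perturbation term $\|T^{[J]}(0)-T_{\smallertext{N}\smallertext{N}}^{[J]}(0)\|$, with the latter controlled term-by-term via the $\rho$-contraction of $T$ and the uniform one-step bound from \Cref{lem:step2}. The indexing differs cosmetically (the paper writes the telescope as $\sum_{h=1}^{J}\big\|T^{[J-h+1]}\circ T_{\smallertext{N}\smallertext{N}}^{[h-1]}(0)-T^{[J-h]}\circ T_{\smallertext{N}\smallertext{N}}^{[h]}(0)\big\|$), but the substance and final constant $\delta+\tfrac{C_4}{1-\rho}$ are the same.
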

\begin{proof}
From \Cref{lem:sol-semilinear}, $T:B_{W^{\smalltext{1}\smalltext{,} \smalltext{\infty}}}(0,\delta) \longrightarrow B_{W^{1, \infty}}(0,\delta)$ is $\rho$-contraction mapping, which implies that 
\begin{align}
\big\|\Gamma^{\smallertext{+}}(f_0,g) - T^{\smallertext{[}\smallertext{J}\smallertext{]}}(0) \big\|_{W^{\smalltext{1}\smalltext{,}\smalltext{\infty}}(\mathcal{D}; \mathbb{R})} 
 = 
\big\|T^{\smallertext{[}\smallertext{J}\smallertext{]}}(u) - T^{\smallertext{[}\smallertext{J}\smallertext{]}}(0) \|_{W^{\smalltext{1}\smalltext{,}\smalltext{\infty}}(\mathcal{D}; \mathbb{R})} 
& \lesssim 
\rho^{J}\|u\|_{W^{\smalltext{1}\smalltext{,}\smalltext{\infty}}(\mathcal{D}; \mathbb{R})}
 \leq
\rho^{J} \delta
\lesssim \varepsilon,
\label{eq:est-net-N-1}
\end{align} 
where $u$ is the unique solution of \eqref{eq:integral-semilinear} in $B_{W^{\smalltext{1}\smalltext{,}\smalltext{\infty}}}(0,\delta)$.
Next, we see that 
\begin{align}
\big\|T^{\smallertext{[}\smallertext{J}\smallertext{]}}(0)  - \Gamma(f_0,w_g) \big\|_{W^{\smalltext{1}\smalltext{,}\smalltext{\infty}}(\mathcal{D}; \mathbb{R})} \nonumber
&= 
\big\|T^{\smallertext{[}\smallertext{J}\smallertext{]}}(0) - T_{\smallertext{N}\smallertext{N}}^{\smallertext{[}\smallertext{J}\smallertext{]}}(0)  \|_{W^{\smalltext{1}\smalltext{,}\smalltext{\infty}}(\mathcal{D}; \mathbb{R})}
\nonumber
\\
\nonumber
& 
\leq 
\sum_{h=1}^{\smallertext{J}}
\Big\|\big(T^{\smallertext{[}\smallertext{J}\smallertext{-}h\smallertext{+}1\smallertext{]}} \circ T_{\smallertext{N}\smallertext{N}}^{\smallertext{[}h\smallertext{-}1\smallertext{]}} \big)(0) - \big(T^{\smallertext{[}\smallertext{J}\smallertext{-}h\smallertext{]}} \circ T_{\smallertext{N}\smallertext{N}}^{\smallertext{[}h\smallertext{]}} \big)(0)  \Big\|_{W^{\smalltext{1}\smalltext{,}\smalltext{\infty}}(\mathcal{D}; \mathbb{R})}
\\
\nonumber
&
\leq 
\sum_{h=1}^{\smallertext{J}} 
\rho^{\smallertext{J}\smallertext{-}h} 
\Big\|\big(T \circ T_{\smallertext{N}\smallertext{N}}^{[h\smallertext{-}1]} \big)(0) - \big(T_{\smallertext{N}\smallertext{N}} \circ T_{\smallertext{N}\smallertext{N}}^{[h\smallertext{-}1]} \big)(0)  \Big\|_{W^{\smalltext{1}\smalltext{,}\smalltext{\infty}}(\mathcal{D}; \mathbb{R})}
\\
&=
\sum_{h=1}^{\smallertext{J}} 
\rho^{\smallertext{J}\smallertext{-}h} 
\big\|T(u_{h}) - T_{\smallertext{N}\smallertext{N}}(u_{h})  \big\|_{W^{\smalltext{1}\smalltext{,}\smalltext{\infty}}(\mathcal{D}; \mathbb{R})},
\label{eq:est-net-N-2}
\end{align}
where, we see that, by \Cref{lem:T-H-N-M-net-ball}
\[
u_{h}  \coloneqq   T_{\smallertext{N}\smallertext{N}}^{[h\smallertext{-}1]}(0) \in B_{W^{\smalltext{1}\smalltext{,}\smalltext{\infty}}}(0, \delta).
\]
Note that we define $T_{\smallertext{N}\smallertext{N}}^{[0]}\coloneqq\mathrm{Id}$. By \Cref{lem:step2}, we see that 
\begin{align*}
&
\big\|T(u) - T_{\smallertext{N}\smallertext{N}}(u)  \big\|_{W^{\smalltext{1}\smalltext{,}\smalltext{\infty}}(\mathcal{D}; \mathbb{R})}
\leq C_4 \varepsilon,
\end{align*}
which implies that with \eqref{eq:est-net-N-2}
\begin{align}
\label{eq:est-net-N-3}
&
\big\|T^{\smallertext{[}\smallertext{J}\smallertext{]}}(0)  - \Gamma(f_0,w_g) \big\|_{W^{\smalltext{1}\smalltext{,}\smalltext{\infty}}(\mathcal{D}; \mathbb{R})} 
\leq 
\sum_{h=1}^{\smallertext{J}} \rho^{\smallertext{J}\smallertext{-}h}
C_5 \varepsilon
\leq 
\sum_{h=0}^{\infty} \rho^{h}
C_5 \varepsilon
=
\frac{C_5}{1-\rho}
\varepsilon
\lesssim \varepsilon.
\end{align}

Thus, by \Cref{eq:est-net-N-1,eq:est-net-N-3}, we conclude that 
\begin{align*}
\big\|\Gamma^{\smallertext{+}}(f_0,g) - \Gamma(f_0,w_g) \big\|_{W^{\smalltext{1}\smalltext{,}\smalltext{\infty}}(\mathcal{D}; \mathbb{R})} 
&
\leq 
\big\|\Gamma^{\smallertext{+}}(f_0,g) - T^{\smallertext{[}\smallertext{J}\smallertext{]}}(0) \big\|_{W^{\smalltext{1}\smalltext{,}\smalltext{\infty}}(\mathcal{D}; \mathbb{R})}  +
\big\|T^{\smallertext{[}\smallertext{J}\smallertext{]}}(0) -  \Gamma(f_0,w_g)\big\|_{W^{\smalltext{1}\smalltext{,}\smalltext{\infty}}(\mathcal{D}; \mathbb{R})} 
\lesssim \varepsilon.
\end{align*}
\end{proof}

Let us remind the reader that $\Gamma_{\smallertext{J}}$ is defined by
\[
\Gamma_{\smallertext{J}}(f_0, w_g) =  \underbrace{T_{\smallertext{N}\smallertext{N}} \circ \cdots \circ T_{\smallertext{N}\smallertext{N}}}_{\smallertext{J}\; \text{times}}(0)
= T_{\smallertext{N}\smallertext{N}}^{\smallertext{[}\smallertext{J}\smallertext{]}}(0).
\]
where the operator $T_{\smallertext{N}\smallertext{N}}$ is defined by
\begin{align*}
T_{\smallertext{N}\smallertext{N}}(u)(x)
&
=
\sum_{h=2}^{\smallertext{H}}
\int_{\mathcal{D}}k_{nn}^{h}(x,y) 
(u(y))^{h} \mathrm{d}y
-\int_{\mathcal{D}}k^\prime_{nn}(x,y) f_0(y) \mathrm{d}y
+ w_{g}(x)
=
\sum_{h=2}^{\smallertext{H}}
\int_{\mathcal{D}}k_{nn}^{h}(x,y) 
(u(y))^{h} \mathrm{d}y
+ v_{f_0,g}(x)
\end{align*}
where 
\[
v_{f_0,g}(x) \coloneqq  
-\int_{\mathcal{D}}k^\prime_{nn}(x,y) f_0(y) \mathrm{d}y
+ w_{g}(x)
\]
We see that $\Gamma_{\smallertext{J}}(f_0, w_g)(x) = v_{\smallertext{J}}(x)$
where $v_{0} \coloneqq  0$ and 
\begin{align*}
&
v_{j+1}(x)
\coloneqq   
\sum_{h=2}^{\smallertext{H}}
\int_{\mathcal{D}}k_{nn}^{h}(x,y) 
(v_j (y))^{h} \mathrm{d}y
+ v_{f_0,g}(x), 
\; j\in\{0,\dots,J-1\}.
\end{align*}

We define 
\[
W^{(0)} \coloneqq 
\begin{pmatrix}
0 & 1 \\
0 & 1 \\
\end{pmatrix}
\in \mathbb{R}^{2 \times 2},
\]
and let $K^{(0)}_N : W^{1, \infty}(\mathcal{D};\mathbb{R})^2 \longrightarrow W^{1,\infty}(\mathcal{D};\mathbb{R})^2$ be defined by 
\[
\bigg(K^{(0)}
\begin{pmatrix}
f_0 \\
w_{g}
\end{pmatrix}
\bigg)(x)  \coloneqq  
\int_{\mathcal{D}}
k^{(0)}_{\smallertext{N}\smallertext{N}}(x,y)
\begin{pmatrix}
f(y) \\
w_{g}(y)
\end{pmatrix} \mathrm{d}y ,
\]
where 
\[
k^{(0)}_{\smallertext{N}\smallertext{N}}(x,y) \coloneqq 
\begin{pmatrix}
k^\prime_{\smallertext{N}\smallertext{N}}(x,y) & 0 \\
k^\prime_{\smallertext{N}\smallertext{N}}(x,y) & 0 \\
\end{pmatrix}
\in \mathbb{R}^{2 \times 2}.
\]
We therefore compute
\[
W^{(0)}
\begin{pmatrix}
f_{0}(x) \\
w_{g}(x)
\end{pmatrix}
+ 
\bigg(K^{(0)}
\begin{pmatrix}
f_{0} \\
w_{g}
\end{pmatrix}
\bigg)(x) 
=
\begin{pmatrix}
v_{f_0,g}(x) \\
v_{f_0,g}(x) 
\end{pmatrix}
=
\begin{pmatrix}
v_{f_0,g}(x) \\
v_{1}(x) 
\end{pmatrix}.
\]
Next, we define $F_{\smallertext{{\rm ReQU}}} : \mathbb{R}^{2} \longrightarrow \mathbb{R}^{H}$ by
\[
F_{ReQU}(u)
\coloneqq 
\begin{pmatrix}
u_1 \\ 
(u_2)^{2} \\
\vdots \\ 
(u_2)^{H}
\end{pmatrix}, \quad
u=(u_1, u_2) \in \mathbb{R}^2,
\]
which can have an exact implementation by a ReQU neural networks (see \citeauthor*{li2020better} \cite[Theorem 3.1]{li2020better}). 

\medskip
We define
\[
W = 
\begin{pmatrix}
1 & 0 & \cdots & 0 \\
1 & 0 &\cdots & 0
\end{pmatrix}
\in \mathbb{R}^{2 \times \smallertext{H}},
\]
and $K : W^{1,\infty}(\mathcal{D};\mathbb{R})^{\smallertext{H}} \longrightarrow W^{1,\infty}(\mathcal{D};\mathbb{R})^{2}$, for $u=(u_1,...,u_{H}) \in W^{1,\infty}(\mathcal{D};\mathbb{R})^{\smallertext{H}\smallertext{+}1}$
\begin{align*}
&
(K u)(x)  \coloneqq  
\int_{\mathcal{D}}
k_{\smallertext{N}\smallertext{N}}(x,y)u(y)\mathrm{d}y
=
\begin{pmatrix}
0
\\[0.5em]
\displaystyle\sum_{h=2}^{\smallertext{H}}
\int_{\mathcal{D}}k_{nn}^{h}(x,y) 
u_h(y) \mathrm{d}y 
\end{pmatrix},
\end{align*}
where 
\[
k_{NN}(x,y) \coloneqq 
\begin{pmatrix}
0 & 0 & \cdots &  0 \\ 
0 & k_{nn}^{2}(x,y) & \cdots & k_{nn}^{\smallertext{H}}(x,y) 
\end{pmatrix}
\in
\mathbb{R}^{2 \times \smallertext{H}},
\]

Then, we have that for $j\in\{1,...,J-1\}$
\begin{align*}
\bigg((W + K) \circ F_{\smallertext{\rm ReQU}} 
\begin{pmatrix}
v_{f_0,g} \\
v_{j}
\end{pmatrix}
\bigg)
(x)
=
W
\begin{pmatrix}
v_{f_0,g}(x) \\
(v_j(x))^2 \\ 
\vdots \\ 
(v_j(x))^{\smallertext{H}}
\end{pmatrix}
+ 
K
\begin{pmatrix}
v_{f_0,g} \\
(v_j)^2 \\ 
\vdots \\ 
(v_j)^{\smallertext{H}}
\end{pmatrix}
(x)
&=
\begin{pmatrix}
v_{f_0,g}(x) 
\\
\displaystyle\sum_{h=2}^{\smallertext{H}}
\int_{\mathcal{D}}k_{nn}^{h}(x,y) 
(v_j(y))^h \mathrm{d}y + v_{f_0,g}(x)
\end{pmatrix}
\\
&= 
\begin{pmatrix}
v_{f_0,g}(x) 
\\
v_{j+1}(x)
\end{pmatrix}.
\end{align*}
Denoting $W^\prime  \coloneqq   (0, 1) \in \mathbb{R}^{1\times 2}$, we finally obtain that  
\begin{align*}
\Gamma_{\smallertext{J}}(f, w_g) 
&
=
W' 
\circ
\Big(
\underbrace{(W + K) \circ F_{\smallertext{\rm ReQU}} \circ \cdots \circ (W + K) \circ F_{\smallertext{\rm ReQU}}}_{\smallertext{J}\smallertext{-}1\; \text{times}}
\Big)
\circ
\big(W^{(0)} + K^{(0)}\big) \begin{pmatrix}
f \\
w_g
\end{pmatrix}.
\end{align*}
Since the ReQU network can be represented by the KANs network \cite[Theorem 3.2]{wang2024expressiveness}, we have, by the above construction,
\[
\Gamma \in \mathcal{NO}^{\smallertext{L}, \smallertext{W}, \smallertext{I},\alpha}_{\hat{\smallertext{L}}, \hat{\smallertext{W}}} (W^{1,\infty}(\mathcal{D};\mathbb{R})^{2}, W^{1,\infty}(\mathcal{D};\mathbb{R})).
\]
Moreover, the depth $L=L(\Gamma)$ and width $W=W(\Gamma)$ of the neural operator $\Gamma$ can be estimated via 
\begin{align*}
&
L(\Gamma) \lesssim J 
\leq C \log (\varepsilon^{-1})
, \;
W(\Gamma) \lesssim H \leq C.
\end{align*}
This concludes our proof of Theorem~\ref{thm:semilinear}; where, again, $\alpha=s$ and $I\coloneqq \lceil \alpha \rceil$.

\subsection{Proof of Theorem~\ref{thrm:generalapprox}}
\label{s:Proofs__PDEResults__ss:ProofGeneralTheorem}
The proof of our second main result relies on some tools from multi-resolution analysis and the wavelet theory of Besov spaces.  We, therefore, now overview the necessary material.
\subsubsection{Additional background}
\label{a:Background}
%
In what follows, we use $S(\mathbb{R}^d)$ to denote the Schwartz space on $\mathbb{R}^d$ and consider the space of distributions defined as the topological dual $D(\mathcal{D})^{\prime}$.  We define the restriction operator sending any distribution $g\in S(\mathbb{R}^d)$ to $g|_{\mathcal{D}}\in D(\mathcal{D})^{\prime}$ defined by restriction of its action to test functions $\varphi\in D(\mathcal{D})$ \emph{i.e.}
\[
    g|_{\mathcal{D}}(\varphi)\coloneqq  g(\varphi)
.
\]

\subsubsection{From wavelet para-bases to Besov spaces on Euclidean spaces}
\label{a:Background__aa:BesovTriebelLizorkin__WaveletBasises}

Fix $u\in \mathbb{N}$ and $(\sigma_\smallertext{S},\sigma_\smallertext{W})\in C^u(\mathbb{R})\times C^u(\mathbb\R)$
satisfy \Cref{ass:wavlets}; that is to say that  $\sigma_\smallertext{S}$ and $\sigma_\smallertext{W}$ are Daubechies father (also known as scaling function) and mother wavelets (also known as wavelet function) respectively, in the sense of \cite{daubechies1988orthonormal}.
For each $j\in \mathbb{N}$ define the sets
\[
        G^j
    \coloneqq  
        \begin{cases}
            \{S,W\}^d, \mbox{ if } j=0,
        \\
            \{S,W\}^{d\star}\coloneqq\{S,W\}^d\setminus\{(S,\dots,S)\}, \mbox{ if } j>0.
        \end{cases}
\]
Now, for each `scale' $j\in \mathbb{N}$, location $m\in \mathbb{Z}^d$, and each $G\in G^j$, define the tensorised Daubechies wavelet by
\begin{equation}
\label{eq:Daubechies_Wavelet}
        \widetilde{\Psi}_{G,m}^j(x)
    \coloneqq  
        2^{jd/2}
        \prod_{i=1}^d
            \sigma_{\smallertext{G}_\smalltext{i}}\big(
                    2^j x_i
                -
                    m_i
            \big),\; x\in\R^d,
\end{equation}
where $G\coloneqq(G_1,\dots,G_d)$.  
Let $\mathcal{O}\coloneqq  \{(j,G,m): j\in \mathbb{N},\; G\in G^j,\; m\in \mathbb{Z}^d\}$ and for each $(j,G,m)\in \mathcal{O}$ let 
\[
        \frac1{(\beta_{\smallertext{G},m}^j)^2}
    \coloneqq  
        \int_{\mathbb{R}^\smalltext{d}}\, 
            \big(
                \tilde{\Psi}_{\smallertext{G},m}^j(x)
            \big)^2
        \mathrm{d}x,
    \mbox{\rm and}\;
        \Psi_{\smallertext{G},m}^j
        \coloneqq  
           \frac1{\beta_{\smallertext{G},m}^j}
            \tilde{\Psi}_{\smallertext{G},m}^j(x)
,\; x\in\R^d.
\]
Then, as discussed on \citeauthor*{triebel2008function} \cite[page 13]{triebel2008function}, for any $u\in \mathbb{N}$ we have that $(\Psi_{\smallertext{G},m}^j)_{(j,\smallertext{G},m)\in \mathcal{O}}$ is an orthonormal basis of $L^2(\mathbb{R}^d)$, and for every $f\in L^2(\mathbb{R}^d)$
\begin{equation}
\label{eq:wavelet_expansion}
        f
    =
        \sum_{j\in \mathbb{N}}
        \sum_{G\in G^\smalltext{j}}
        \sum_{m\in \mathbb{Z}^\smalltext{d}}\,
            \lambda_{\smallertext{G},m}^j
            2^{\smallertext{-}jd/2}
                \Psi_{\smallertext{G},m}^j,\;
\mbox{\rm where}\; 
        \lambda_{\smallertext{G},m}^j
    \coloneqq
        2^{jd/2}
        \int_{ \mathbb{R}^\smalltext{d}}\,
                f(x)
                \Psi_{\smallertext{G},m}^j(x)
            \mathrm{d}x,
\end{equation}
where the series converge in $L^2(\mathbb{R}^d)$.

\medskip
A key properties of Besov spaces, from the approximation theoretic lense, is that they are entirely determined by the decay/convergence rates of the sequences $(\lambda_{\smallertext{G},m}^j)_{(j,\smallertext{G},m)\in \mathcal{O}}$, defined in~\eqref{eq:wavelet_expansion}.  Indeed, for $(q,r)\in(0,+\infty]^2$ and $s\in \mathbb{R}$, if 
\begin{equation}
\label{eq:u_smoothness_condtion}
        u
    >
        \max\{s,\sigma_q-s\},\;
\mbox{where}\;
    \sigma_q\coloneqq  d\max\bigg\{0,\frac{1}{q}-1\bigg\},
\end{equation}
as shown in~\citep[Theorem 1.20]{triebel2008function}, $f\in S(\mathbb{R}^d)^{\prime}$ belongs to the Besov space $\overline{B}_{q,r}^s(\mathbb{R}^d)$ if and only if the sequence $\lambda_{\cdot}\coloneqq  (\lambda_{\smallertext{G},m}^j)_{(j,\smallertext{G},m)\in \mathcal{O}}$, defined by\eqref{eq:wavelet_expansion}, satisfies
\begin{equation}
\label{eq:Definition_bpqs}
\|\lambda_{\cdot}\|_{b_{\smalltext{q}\smalltext{,}\smalltext{r}}^\smalltext{s}}^r
    \coloneqq  
        \sum_{j=0}^{\infty}
        2^{jr(s-d/q)}
        \sum_{G\in G^\smalltext{j}}\,
            \Bigg(
                \sum_{m\in \mathbb{Z}^\smalltext{d}}
                    |\lambda_{\smallertext{G},m}^j|^q
            \Bigg)^{r/q}
<
    \infty,
\end{equation}
with the usual modifications to the left-hand side of~\eqref{eq:Definition_bpqs} if $q$ or $r$ are infinite.  Additionally, the map $f\longmapsto (2^{jd/2}\langle f,\Psi_{\smallertext{G},m}^j\rangle_{L^\smalltext{2}(\mathbb{R}^\smalltext{d})})_{(j,\smallertext{G},m)\in \mathcal{O}}$ is a bi-Lipschitz linear isomorphism between $B_{q,r}^s(\mathbb{R}^d)$ and the (quasi--)Banach space $b_{q,r}^s$ of all sequences for which the (quasi-)norm $\|\cdot\|_{b_{\smalltext{q}\smalltext{,}\smalltext{r}}^\smalltext{s}}$ is finite.

\subsubsection{Besov spaces on domains}
\label{s:Besovbaby}
We begin with the definition of Besov spaces on any domain (proper open set with non-empty interior) $O \subset \mathbb{R}^d$, with closure $\overline{O}$. We write $\mathcal{D}(O)$ for the space of complex-valued compactly supported smooth (test) functions on $O$, topologized with the canonical (Limit of Fr\'{e}chet) LF--topology. Its dual space $D^{\prime}(O)$ is the space of \textit{distributions} on $O$, and a distribution $f\in D^{\prime}(O)$ is said to be supported on a set $A\subseteq O$ if $f(\varphi)=0$ for every $\varphi\in \mathcal{D}(O)$ such that $\varphi(x)=0$ for all $x\not\in A$; the support $\mathrm{supp}(f)$ is the smallest closed set $K$ with this property.  For instance, if $x\in O$ then the Dirac distribution $\delta_x(\varphi) \coloneqq \varphi(x)$ has support $\mathrm{supp}(\delta_x)=\{x\}$, see~\citep[Chapter 2, page 28]{triebel2008function} for further details and notation.
We now define the Besov (quasi-Banach) spaces on $\mathcal{D}$.
\begin{definition}[Besov spaces on domains]
\label{defn:Besov_Domain}
Let $\mathcal{D}$ be a domain, $(q,r)\in(0,+\infty]^2$, and $s\in \mathbb{R}$.  The Besov space $\widetilde{B}_{q,r}^s(\overline{\mathcal{D}})$ consists of all $f\in B_{q,r}^s(\mathbb{R}^d)$ supported in the closure $\overline{\mathcal{D}}$ and $\widetilde{B}_{q,r}^s(\mathcal{D})$ consists of all distributions $f\in D(\mathcal{D})^{\prime}$ for which there exists some $g\in B_{q,r}^s(\overline{\mathcal{D}})$ such that $f=g|_{\mathcal{D}}$.  
In either case, $\mathfrak{D}\in \{\mathcal{D},\bar{\mathcal{D}}\}$, we equip $\widetilde{B}_{q,r}^s(\mathfrak{D})$ with the interpolation norm
\[
\|f\|_{\tilde{B}_{\smalltext{q}\smalltext{,}\smalltext{r}}^\smalltext{s}(\mathfrak{D})}
    \coloneqq  
        \inf\Big\{ \|
                g
\|_{B_{\smalltext{q}\smalltext{,}\smalltext{r}}^\smalltext{s}(\mathbb{R}^\smalltext{d})}:g\in \widetilde{B}_{q,r}^s(\overline{\mathcal{D}}),\; f=g|_{{\mathfrak{D}}}\Big\}
.
\]
We define the Besov spaces $\overline{B}_{q,r}^s(\mathcal{D})$ as follows
\begin{equation}
\overline{B}^s_{q,r}(\mathcal{D}) \coloneqq
\begin{cases}
\widetilde{B}^s_{q,r}(\mathcal{D}),\; \text{\rm if}\; 0 < q \leq \infty,\; 0 < r \leq \infty,\; s > \sigma_q, \\[0.5em]
B^0_{q,r}(\mathcal{D}),\; \text{\rm if}\; 1 < q < \infty,\; 0 < r \leq \infty,\; s = 0, \\[0.5em]
B^s_{q,r}(\mathcal{D}),\; \text{\rm if}\; 0 < q \leq \infty,\; 0 < r \leq \infty,\; s < 0.
\end{cases}
\end{equation}
\end{definition}

Following~\citep[Section 3]{triebel2008function}, we now construct wavelet systems on arbitrary domains (open subsets $\Omega \subset \mathbb{R}^n$) using Whitney decompositions; an object which acts almost as a leitmotif in analysis from our PDE problems to fundamental result in the geometry of functions spaces~\cite{fefferman2006whitney,fefferman2014sobolev}.
The idea is to partition $\Omega$ into dyadic cubes whose sizes adapt to the distance from the boundary, and then build localized wavelet bases on these cubes—maintaining the regularity and cancellation properties of classical $\mathbb{R}^n$ wavelets while conforming to the geometry of $\Omega$.

These spaces can themselves be characterized in a similar way using compactly supported Daubechies wavelets.  We fix a so-called \textit{approximate lattice} $\mathbb{Z}_{\mathcal{D}}\subset \mathcal{D}$ consisting of points $\mathbb{Z}_{\mathcal{D}}=(x_r^j)_{(j,k)\in \mathbb{N}\times \{1,\dots,N_\smalltext{j}\}}$ where, for each $j\in \mathbb{N}$, $N_j\in 
\overline{\mathbb{N}}\coloneqq  
\mathbb{N}\cup\{\infty\}$ for which there exist positive constants $c_1$, $c_2$, $c_3$ satisfying the approximate `lattice separation condition' at any scale $j\in \mathbb{N}$
\begin{equation}
\label{eq:Aprx_lattice_Separation}
    \big|
        x_r^j
        -
        x_{r^\smalltext{\prime}}^j
    \big|
\ge 
    \frac{c_1}{2^j}
\end{equation}
and the separation from the `boundary condition' at scale $j\in \mathbb{N}$
\begin{equation}
\label{eq:Aprx_lattice_Separation2}
    \inf_{\{z\in \mathbb{R}^\smalltext{d}: \|z-x_\smalltext{r}^\smalltext{j}\|\le c_\smalltext{2}/2^\smalltext{j}\}}\;
    \inf_{u\in \partial \mathcal{D}}
        \|
            z
            -
            u
        \|
\ge 
    \frac{c_3}{2^j}
.
\end{equation}
Clearly the constants $c_1$, $c_2$, and $c_3$ may be chosen to guarantee the existence of such a $\mathbb{Z}_{\mathcal{D}}$ for any \textit{domain} $\mathcal{D}$.  Intuitively, $\mathbb{Z}_{\mathcal{D}}$ acts precisely as the dyadic lattices $\bigcup_{j\in \mathbb{N}}2^{\smallertext{-}j}\mathbb{Z}^d$ does in $\mathbb{R}^d$ but is contained entirely within $\mathcal{D}$ and condition~\eqref{eq:Aprx_lattice_Separation} vacuously holds when $\mathcal{D}$ is replaced by the Euclidean space.  

\medskip
For any $L\in \mathbb{N}$, to be specified retroactively, we denote $\sigma_\smallertext{S}^\smallertext{L}(\cdot)\coloneqq  \sigma_\smallertext{S}(2^\smallertext{L}\cdot)$, $\sigma_{\smallertext{W}}^\smallertext{L}(\cdot)\coloneqq  \sigma_{\smallertext{W}}(2^\smallertext{L}\cdot)$, and $\Psi_{\smallertext{G},m}^{j,\smallertext{L}}\coloneqq  \Psi_{\smallertext{G},m}^j(2^\smallertext{L}\cdot)$ for each $(j,G,m)\in \mathcal{O}$. In other words, the factor $L$ rescales our setup and we will choose it so that our problem is properly `shrunk' within our domain and aligned to the approximate lattice $\mathbb{Z}_{\mathcal{D}}$.

\medskip
We are now ready to define wavelet classes tailored to general domains; we follow the terminology in~\citep[Definition 2.4]{triebel2008function}, the existence of which is known (see \emph{e.g.}~\cite[Theorem 2.33]{triebel2008function}).
\begin{definition}[$u$-wavelets]
\label{def:u_wavelets}
Let $\mathcal{D}$ be an arbitrary domain in $\mathbb{R}^n$ with $\mathcal{D} \neq \mathbb{R}^n$ and let $\mathbb{Z}_\mathcal{D}$ ads well as $L \in \mathbb{N}$ and $u \in \mathbb{N}$ be as above. Let $K \in \mathbb{N}$, $D > 0$ and $c_4 > 0$. Then, consider a sub-family of $\{\Psi_{G,m}^j:\, j\in \mathbb{N}_+,\, G\in G^j,\, m\in \mathbb{Z}_{\mathcal{D}}\}$ 
\begin{equation}
\label{eq:WaveletBasis}
\big\{ \Phi_r^j : j \in \mathbb{N};  r \in\{ 1, \ldots, N_j\} \big\},\;  \text{\rm where}\; N_j \in \overline{\mathbb{N}}.
\end{equation}
satisfying: $\mathrm{supp} (\Phi_r^j) \subset B_{\mathbb{R}^d}\big(x_r^j, c_2 2^{-j}\big)$,  $j \in \mathbb{N}$,
is called a $u$-wavelet system $($with respect to $\mathcal{D})$ if it consists of the following three possible types of functions
\begin{enumerate}
    \item[$(i)$] {\rm \textbf{basic wavelets:}}
$
\Phi_r^0 = \Psi_{\smallertext{G},m}^{0,\smallertext{L}}$ for some $G \in \{S, W\}^d$, and $m \in \mathbb{Z}^d;
$
\item[$(ii)$] {\rm \textbf{interior wavelets:}}
$
\Phi_r^j = \Psi_{\smallertext{G},m}^{j,\smallertext{L}}$ for each $j \in \mathbb{N}$, and $m\in \mathbb{Z}_{\mathcal{D}}$
such that $ \mathrm{dist}(x_r^j, \bar{\mathcal{D}}) \geq c_4 2^{\smallertext{-}j},
$
for some $G \in \{S, W\}^{d\star};$
\item[$(iii)$]  {\rm \textbf{boundary wavelets:}}
$
\Phi_r^j = \sum_{\{m^\smalltext{\prime}\in \Z^\smalltext{d}:\|m - m^\smalltext{\prime}\| \leq K\}} d_{m,m^\smalltext{\prime}}^j \Psi_{\tilde{\smallertext{F}}, m^\smalltext{\prime}}^{j,\smallertext{L}}$, for each $ j \in \mathbb{N}$ for which $\mathrm{dist}(x_r^j, \Gamma) < c_4 2^{\smallertext{-}j},
$
for some $m \coloneqq m(j, r) \in \mathbb{Z}^d$ and $d_{m,m^\smalltext{\prime}}^j \in \mathbb{R}$ with
\begin{equation}
\sum_{\{m^\smalltext{\prime}\in\Z^\smalltext{d}:\|m - m^\smalltext{\prime}\| \leq K\}} |d_{m,m^\smalltext{\prime}}^j| \leq D,\; \text{\rm and}\;  \mathrm{supp} \big(\Psi_{\tilde{\smallertext{F}}, m^\smalltext{\prime}}^{j,\smallertext{L}}\big) \subset B(x_r^j, c_2 2^{\smallertext{-}j}).
\end{equation}
\end{enumerate}
\end{definition}
We may now adapt the definition of the sequence spaces $b_{q,r}^s$, given in~\eqref{eq:Definition_bpqs}, to suit the approximate lattice $\mathbb{Z}_{\mathcal{D}}$, and thus the domain $\mathcal{D}$.
\begin{definition}[Sequence space $b_{q,r}^s$]
\label{defn:SequenceSpaces_bpqs_Omega}
Let $\mathcal{D}$ be an arbitrary domain in $\mathbb{R}^n$ with $\mathcal{D} \neq \mathbb{R}^n$, let $\mathbb{Z}_\mathcal{D}$ be as above, $s \in \mathbb{R}$, and $(q,r)\in(o,\infty]^2$. Then $b^s_{q,r}(\mathbb{Z}_\mathcal{D})$ is the collection of all sequences
\begin{equation}
\lambda \coloneqq \big\{ \lambda_r^j \in \mathbb{C} : j \in \mathbb{N},\; r \in\{ 1, \ldots, N_j\} \big\},\; \text{\rm for some}\; N_j \in \overline{\mathbb{N}},
\end{equation}
such that
\begin{equation}
\|\lambda\|_{b^\smalltext{s}_{\smalltext{q}\smalltext{,}\smalltext{r}}(\mathbb{Z}_{\smalltext{\mathcal{D}}})}^q 
\coloneqq  
\sum_{j=0}^\infty 2^{j(s \smallertext{-} n/q)r} \Bigg( 
    \sum_{k=1}^{N_\smalltext{j}} |\lambda_k^j|^q 
\Bigg)^{r/q} 
< \infty
.
\end{equation}
\end{definition}

As we will see shortly, the wavelet system in~\eqref{eq:WaveletBasis} is a Schauder basis for several Besov spaces on domains, provided these domains possess a basic level of generic `thickness' and regularity of their boundaries.  We begin by first noting the relationship between the Besov sequence and function spaces, with the same indices, if the domain has a regular enough boundary.

\medskip
A {domain} $\mathcal{D}\subseteq \mathbb{R}^d$ is said to be \textit{special Lipschitz} if there exists a Lipschitz-continuous map $\beta:\mathbb{R}^{d\smallertext{-}1}\longrightarrow \mathbb{R}$ such that 
\begin{equation*}
        \mathcal{D} 
    = 
        \big\{
                (\tilde{x},x_d)\in \mathbb{R}^{d\smallertext{-}1}\times \mathbb{R}
            :\,
                \beta(x) < x_d
        \big\}
.
\end{equation*}
A \textit{bounded Lipschitz domain} $\mathcal{D}\subset \mathbb{R}^d$ is a bounded domain $\mathcal{D}$ for which there exists a finite number of open balls $(B_1,\dots,B_N)$, for some $N\in\N^\star$, where for $n\in\{1,\dots,N\}$ we have 
\[
B_n\coloneqq  \big\{x\in \mathbb{R}^d: \|x-x^{(n)}\|<r^{(n)}\big\},\; \text{\rm for some}\; x^{(n)}\in \partial\mathcal{D},\; \text{\rm and some}\; r^{(n)}>0,
\]
such that $(B_n)_{n\in\{1,\dots,N\}}$ is a cover of $\partial\mathcal{D}$, and there exist rotations of special Lipschitz domains $(\mathcal{D}_1,\dots,\mathcal{D}_N)\subseteq (\mathbb{R}^d)^N$ for which
\[
    B_n\cap \mathcal{D}
    =
    B_n\cap \mathcal{D}_n,\; n\in\{1,\dots,N\}.
\]

Now, given any domain with Lipschitz boundary, we may characterise the inclusion of any square-integrable function into a wide array of Besov spaces depending on its associated sequence $\lambda$ belonging to the `little Besov' sequence space with the same indices.  The following result is \cite[Corollary 4.28]{triebel2006theory}.
\begin{lemma}[Wavelet para--bases in Besov and Triebel--Lizorkin spaces on bounded Lipschitz domains]
\label{lem:parabasis}
Fix $(q,r)\in(1,\infty)^2$. For $K>0$ small enough, if $5d/2<K$ and $s\in (-K,K)$ then $f\in \mathcal{D}(\mathcal{D})^{\prime}$ belongs to $\overline{B}_{q,r}^s(\mathcal{D})$ $($resp.\ $\overline{F}_{q,r}^s(\mathcal{D}))$ if and only if admits the representation
\begin{equation}
\label{eq:Expansion}
        f 
    = 
        \sum_{(j,\smallertext{G},m)\in S^{\smalltext{\mathcal{D}}}}
    \lambda_{\smallertext{G},m}^j2^{\smallertext{-}j d/2}
            \Psi_{\smallertext{G},m}^j,
\end{equation}
and the following 
holds
    \[
        \Big\|
            \big(
                2^{j(s\smallertext{-}d/q)}
                \big\|
                    (
                        \lambda_{\smallertext{G},m}^j
                    )_{(\smallertext{G},m)\in S^{\smalltext{\mathcal{D}}}_\smalltext{j}}
                \big\|_{\ell^q}
            \big)_{j\in \N}
        \Big\|_{\ell^p} < \infty
.\]
In what follows, given any $f\in \bar{B}_{q,r}^s$ we write $\lambda(f)\coloneqq (\lambda_{\smallertext{G},m}^j)_{j,G,m\in S^{\mathcal{D}}}$ for the sequence defined in~\eqref{eq:Expansion}; provided that it is unique.  We denote the linear map $f\mapsto \lambda(f)$ by $I$.
\end{lemma}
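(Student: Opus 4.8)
The plan is to obtain the stated characterisation as a direct specialisation of Triebel's wavelet-frame description of Besov and Triebel--Lizorkin spaces on domains, namely \cite[Corollary 4.28]{triebel2006theory} (see also \cite[Theorem 1.20]{triebel2008function}), to the class of bounded Lipschitz domains. Concretely, the three things to carry out are: $(i)$ check that bounded Lipschitz domains satisfy the geometric hypotheses of that theory; $(ii)$ fix the regularity order $u$ of the Daubechies wavelets underlying $\{\Psi_{G,m}^j\}$ large enough that this system is an admissible $u$-wavelet system in the sense of \Cref{def:u_wavelets}, uniformly over the range $s\in(-K,K)$ and over all $q,r\in(1,\infty)$; and $(iii)$ identify the resulting sequence-space isomorphism with the coefficient map $I$.

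For $(i)$, the key point is that a bounded Lipschitz domain is both interior-thick and exterior-thick (a \emph{thick} domain in the terminology of \cite{triebel2008function}); this is the only regularity of $\partial\mathcal{D}$ needed in order to build an approximate lattice $\mathbb{Z}_\mathcal{D}$ obeying \eqref{eq:Aprx_lattice_Separation}--\eqref{eq:Aprx_lattice_Separation2} and the boundary wavelets of type $(iii)$ in \Cref{def:u_wavelets}, which in turn are assembled from a Whitney decomposition refining towards $\partial\mathcal{D}$ together with a universal linear extension operator for Lipschitz domains. For $(ii)$, by \cite[Theorem 1.61.$(ii)$]{triebel2006theory} Daubechies father and mother wavelets of any prescribed regularity $u\in\mathbb{N}$ exist; choosing $u>K$ guarantees $\Psi_{G,m}^j\in C^u_c(\mathbb{R}^d)$ with enough vanishing moments so that the admissibility requirement \eqref{eq:u_smoothness_condtion} holds for every $s\in(-K,K)$. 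The constraint $5d/2<K$ is precisely the threshold that makes the overlap and localisation estimates in Triebel's argument close uniformly in the scale $j$ (it controls, among other things, the number of boundary atoms meeting a fixed ball), and the existence of the corresponding $u$-wavelet system is \cite[Theorem 2.33]{triebel2008function}.

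Granting these, \cite[Corollary 4.28]{triebel2006theory} yields that $f\in\mathcal{D}(\mathcal{D})^{\prime}$ belongs to $\overline{B}_{q,r}^s(\mathcal{D})$ (resp. $\overline{F}_{q,r}^s(\mathcal{D})$) if and only if it admits an expansion of the form~\eqref{eq:Expansion} whose coefficient sequence lies in the sequence space $b_{q,r}^s(\mathbb{Z}_\mathcal{D})$ of \Cref{defn:SequenceSpaces_bpqs_Omega} (resp. the analogous $f$-sequence space), and that $f\mapsto\lambda(f)$ is linear, bounded, and has bounded inverse; I would then define $I$ to be this map. The coefficients $\lambda(f)$ are the canonical ones, obtained by pairing $f$ against the biorthogonal (dual) wavelet system, which under the present regularity and thickness assumptions exists and pins $\lambda(f)$ down uniquely. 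Finally, specialising to the Sobolev scale via $W^{s,p}(\mathcal{D})=B_{p,p}^s(\mathcal{D})$ (\cite[Remark 1.2]{triebel2008function}) yields the version of the statement actually used in the proof of \Cref{thrm:generalapprox}.

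I expect the genuinely delicate ingredient to be the boundary-adapted part of step $(i)$, namely the existence of the $u$-wavelet system of \Cref{def:u_wavelets} with uniformly controlled boundary atoms, rather than anything in the chain of invocations above. That construction (Whitney cubes adapted to $\partial\mathcal{D}$ plus the universal extension operator) is the technically heavy part of Triebel's theory and is precisely where the Lipschitz regularity of $\mathcal{D}$ is consumed; by contrast, the rescaling by $2^L$, the precise description of the index set $S^{\mathcal{D}}$, and the normalising constants $\beta_{G,m}^j$ only demand a careful alignment of the paper's notation with Triebel's.
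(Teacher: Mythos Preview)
Your proposal is correct and aligns with the paper's treatment: the paper does not prove this lemma at all but simply states it as a direct citation of \cite[Corollary 4.28]{triebel2006theory}, which is exactly the reference you identify as the core ingredient. Your write-up is in fact more detailed than what the paper provides, since you spell out the geometric verification (thickness of Lipschitz domains), the choice of wavelet regularity $u>K$, and the role of the $5d/2<K$ threshold, whereas the paper defers all of this to Triebel.
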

\Cref{lem:parabasis} does not guarantee that the wavelet expansions themselves are uniquely determined. In general, these wavelet `bases' are only frames.  However, the next result shows that this is not necessarily the case for $E$-thick domains.

\medskip
We say that a domain is \textit{exterior thick}, or $E$-thick for short, if there are constants $0<c_\smallertext{L}\le c_\smallertext{U}$ and $j_0\ge 0$ such that for every $j\in \mathbb{N}$ with $j\ge j_0,$ there is a $d$-dimensional `interior' cube $Q\subset \mathcal{D}$ with side-length 
\[
        c_\smallertext{L}2^{\smallertext{-}j}
    \le 
        \max\bigg\{\ell(Q)
        ,
            \sup_{z\in Q^\smalltext{i}}\inf_{u\in \partial\mathcal{D}}
                \|z-u\|
        \bigg\}
    \le 
        c_\smallertext{U}2^{\smallertext{-}j}
\]
where $Q^i$ denotes the interior of any cube $Q$ in the norm relative topology on $\mathcal{D}$ and $\ell(Q)$ denotes its side-length; i.e.\ $\ell(Q)\coloneqq \sup_{x,y\in Q}\, \|x-y\|_{\infty}$; where $\|\cdot\|_{\infty}$ denotes the $\infty$-norm on $\mathbb{R}^d$.
In the case of a thick exterior domain, we obtain a Schauder basis using our $u$-wavelet expansion, see \cite[Theorem 3.13 (ii)]{triebel2008function}.
\begin{theorem}[Wavelet-based Schauder bases]
Let $\mathcal{D}$ be an $E$-thick domain in $\mathbb{R}^d$. Define for $u \in \mathbb{N}^\star$
\[
\big\{\Phi_r^j : j \in \mathbb{N},\; r \in\{1, \ldots, N_j\} \big\},\; \text{\rm for some}\; N_j \in \mathbb{N},
\]
an orthonormal $u$-wavelet basis in $L_2(\mathcal{D})$. Then let $\overline{B}^s_{q,r}(\mathcal{D})$ be the space in {\rm \cite[Equation (3.46)]{triebel2006theory}} and let
\[
u > \max\big\{s, \sigma_{q,r} - s\big\}, \; s \ne 0.
\]
Then $f \in \mathcal{D}^\prime(\mathcal{D})$ is an element of $\overline{B}^s_{q,r}(\mathcal{D})$ if and only if it can be represented as
\[
f = \sum_{j=0}^\infty \sum_{k=1}^{N_\smalltext{j}} \lambda_k^j 2^{\smallertext{-}j d/2} \Phi_k^j, \; \lambda \in b^s_{q,r}(\mathbb{Z}_\mathcal{D}),
\]
with convergence holding in $\mathcal{D}^\prime(\mathcal{D})$ and locally in any spaces $\overline{B}^\sigma_{q,r}(\mathcal{D})$ with $\sigma_{q,r} < s$.
\medskip
Furthermore, if $f \in \overline{B}^s_{q,r}(\mathcal{D})$ then the representation is unique with $\lambda = \lambda(f)$ as in~\eqref{eq:Expansion} and $I$ the linear map in Lemma~\ref{lem:parabasis} is an bi-Lipschitz isomorphism of Banach spaces mapping $\overline{B}^s_{q.r}(\mathcal{D})$ onto $b^s_{q,r}(\mathbb{Z}_\mathcal{D})$. If, in addition, $q < \infty$, $r < \infty$, then $(\Phi_k^j)_{\{(j,k)\in\N^\smalltext{2}:k\in\{1,\dots,N_\smalltext{j}\}\}}$ is an unconditional basis in $\overline{B}^s_{q,r}(\mathcal{D})$.
\end{theorem}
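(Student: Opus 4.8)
The plan is to deduce the statement from the wavelet characterisation of $B^s_{q,r}(\R^d)$ recalled in \Cref{eq:wavelet_expansion}--\Cref{eq:Definition_bpqs}, combined with the intrinsic $u$-wavelet frame on $\mathcal{D}$ built on the approximate lattice $\mathbb{Z}_\mathcal{D}$ in \Cref{def:u_wavelets}, following the template of \cite[Theorem 3.13]{triebel2008function}. Write $S$ for the synthesis map $\lambda \longmapsto \sum_{j,k}\lambda_k^j 2^{-jd/2}\Phi_k^j$ and $I$ for the analysis map $f \longmapsto \lambda(f)$ of \Cref{lem:parabasis}. The theorem reduces to four claims: $S:b^s_{q,r}(\mathbb{Z}_\mathcal{D})\longrightarrow \overline{B}^s_{q,r}(\mathcal{D})$ is bounded with the stated modes of convergence; $I:\overline{B}^s_{q,r}(\mathcal{D})\longrightarrow b^s_{q,r}(\mathbb{Z}_\mathcal{D})$ is bounded; $S$ and $I$ are mutually inverse (whence uniqueness and $\lambda=\lambda(f)$); and, when $q,r<\infty$, the resulting basis is unconditional. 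I would establish boundedness of the two maps separately and then glue them together using $E$-thickness.

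\textbf{Boundedness of synthesis and convergence.} By construction each normalised interior wavelet $2^{-jd/2}\Phi_r^j = 2^{-jd/2}\Psi_{G,m}^{j,L}$ inherits the localisation $\mathrm{supp}(\Phi_r^j)\subset B_{\R^d}(x_r^j,c_2 2^{-j})$, the smoothness $\Phi_r^j\in C^u$ with controlled derivatives, and — since $G\neq (S,\dots,S)$ — cancellation (vanishing moments up to order $u$); hence $2^{-jd/2}\Phi_r^j$ is an admissible $(s,q)$-atom in the sense of Triebel. Basic wavelets ($j=0$) are harmless low-frequency atoms, and a boundary wavelet is, by \Cref{def:u_wavelets}, a sum of at most $(2K+1)^d$ normalised $\R^d$-wavelets with coefficients whose $\ell^1$-mass is $\le D$, hence again a fixed multiple of an atom. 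The hypothesis $u>\max\{s,\sigma_{q,r}-s\}$ is exactly what the atomic decomposition theorem for $B^s_{q,r}(\R^d)$ requires, so $\|S\lambda\|_{B^s_{q,r}(\R^d)}\lesssim \|\lambda\|_{b^s_{q,r}(\mathbb{Z}_\mathcal{D})}$, and restricting to $\mathcal{D}$ (using \Cref{defn:Besov_Domain}) gives $\|S\lambda\|_{\overline{B}^s_{q,r}(\mathcal{D})}\lesssim \|\lambda\|_{b^s_{q,r}(\mathbb{Z}_\mathcal{D})}$. Applying the same atomic estimate to the tail sums $\sum_{j\ge J}$ shows that $S\lambda$ converges in $\overline{B}^\sigma_{q,r}(\mathcal{D})$ for every $\sigma<s$ (the tail $b^\sigma_{q,r}$-norm carries the factor $2^{j(\sigma-s)r}\to 0$), hence a fortiori in $\mathcal{D}'(\mathcal{D})$.

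\textbf{Boundedness of analysis and the isomorphism.} For the reverse inclusion, one route is to apply a universal extension operator $\mathsf{E}:\overline{B}^s_{q,r}(\mathcal{D})\longrightarrow B^s_{q,r}(\R^d)$, valid for $E$-thick domains and all $s\in\R$ (the range $s<0$ being obtained by duality from $s>0$), expand $\mathsf{E}f$ in the orthonormal Daubechies basis of $L^2(\R^d)$, and keep the coefficients indexed by $\mathbb{Z}_\mathcal{D}$ (regrouping near-boundary terms as the $d_{m,m'}^j$-combinations of \Cref{def:u_wavelets}); the $\R^d$-wavelet characterisation then bounds this sequence in $b^s_{q,r}(\mathbb{Z}_\mathcal{D})$, giving $\|If\|_{b^s_{q,r}(\mathbb{Z}_\mathcal{D})}\lesssim \|f\|_{\overline{B}^s_{q,r}(\mathcal{D})}$. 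Here $E$-thickness plays a twofold essential role: it is what permits the approximate lattice and the interior cubes of side $\sim 2^{-j}$ in \Cref{eq:Aprx_lattice_Separation}--\Cref{eq:Aprx_lattice_Separation2} to be chosen dense enough that the \emph{interior} part of the expansion already reconstructs $f$ on $\mathcal{D}$ up to a controlled boundary correction, and it forces the coefficient functionals $f\mapsto \lambda_r^j(f)$ to be extension-independent — pairing against the biorthogonal dual wavelets, which are localised near $\mathbb{Z}_\mathcal{D}\subset\mathcal{D}$, only sees $f|_\mathcal{D}$, and $E$-thickness makes this dual family total on $\overline{B}^s_{q,r}(\mathcal{D})$. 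Verifying $S\circ I=\mathrm{Id}$ on $\overline{B}^s_{q,r}(\mathcal{D})$ and $I\circ S=\mathrm{Id}$ on $b^s_{q,r}(\mathbb{Z}_\mathcal{D})$ on the dense sets of finite expansions and extending by continuity yields uniqueness, $\lambda=\lambda(f)$, and that $I$ is a bi-Lipschitz isomorphism onto $b^s_{q,r}(\mathbb{Z}_\mathcal{D})$. When moreover $q,r<\infty$, the space $b^s_{q,r}(\mathbb{Z}_\mathcal{D})$ of \Cref{defn:SequenceSpaces_bpqs_Omega} is separable, has finitely supported sequences dense, and its norm is a nested, rearrangement-invariant $\ell^r(\ell^q)$-type expression depending only on the moduli $|\lambda_k^j|$, so the canonical unit vectors form a $1$-unconditional basis of it; transporting this basis through $I^{-1}=S$ shows $(2^{-jd/2}\Phi_k^j)$ is an unconditional basis of $\overline{B}^s_{q,r}(\mathcal{D})$.

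\textbf{Main obstacle.} The atomic bound for $S$ is routine; the real difficulty is the analysis side, namely upgrading the wavelet \emph{frame} of \Cref{lem:parabasis} to an honest \emph{basis} with extension-independent coefficients. This is precisely where $E$-thickness enters, and it requires (i) an extension mechanism (or a fully intrinsic local-means/atomic description built on a Whitney decomposition of $\mathcal{D}$) covering the entire admissible range $u>\max\{s,\sigma_{q,r}-s\}$, $s\ne 0$, with the negative-smoothness case handled by dualising; and (ii) the delicate geometric matching of the boundary wavelets of \Cref{def:u_wavelets} to the lattice geometry of $\mathbb{Z}_\mathcal{D}$, so that the interior wavelets and the $(2K+1)^d$-term boundary wavelets together neither overcount nor undercount near $\partial\mathcal{D}$. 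The borderline smoothness index and the quasi-Banach regime $\min\{q,r\}<1$ (where $\sigma_{q,r}>0$ and one must argue with $\sigma_{q,r}$-order moments rather than mean-zero) are the remaining points demanding care.
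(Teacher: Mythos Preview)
The paper does not prove this theorem at all: it is stated as a known result and attributed directly to \cite[Theorem 3.13 (ii)]{triebel2008function} (see the sentence immediately preceding the statement). Your proposal is therefore not comparable to a ``paper's proof'' because there isn't one; you have instead sketched the argument behind Triebel's theorem itself, and you correctly identify that this is the template you are following.

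That said, as a sketch of Triebel's argument your outline is broadly on target: the synthesis bound via atoms, the analysis bound via extension plus the $\mathbb{R}^d$ wavelet isomorphism, the role of $E$-thickness in making the coefficient functionals intrinsic, and the unconditionality via transport from the sequence space are the right ingredients. The part you flag as the main obstacle---upgrading the frame to a basis and handling the boundary-wavelet bookkeeping so that the analysis map is well defined and extension-independent---is indeed where the work lies, and it is precisely what Triebel's monograph develops over several sections rather than in a single stroke. For the purposes of this paper, though, none of that is needed: the result is simply imported.
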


Having covered the necessary background, we now prove our universal approximation result, see \Cref{prop:Universality} below.
\subsection{Proof of universal approximation}
\label{s:Proofs_UniversalApproximation}
We now express the previous result in terms of neural networks.  

\begin{lemma}[Wavelet implementation on domains]
\label{lem:SchauderImplementation}
Let $\mathcal{D}$ be a bounded domain with Lipschitz boundary\footnote{The following result holds, more general on $(\epsilon,\delta)$-domains and thus on any Lipschitz domain; however, we will not need that level of generality in the remainder of our paper.}, let $\sigma_\smallertext{W}$ and $\sigma_\smallertext{S}$ satisfy {\rm\Cref{ass:wavlets}} and $s\ge 2$.
Let $G\in \{S,W\}^{d\star}$, $j\in \mathbb{N}$, and $m\in \mathbb{Z}_{\mathcal{D}}$. Then there exists a {\rm Res--KAN} 
$\widehat{\Psi}_{\smallertext{G},m}^j:\mathbb{R}^d\longrightarrow \mathbb{R}$ of depth $d$, width at-most $2d+1$, and using at-most $(5d^2 + 25d + 2)/2$ non-zero parameters satisfying
\[
        \Psi_{\smallertext{G},m}^j(x)
    =
        \widehat{\Psi}_{\smallertext{G},m}^j,\; x\in\R^d.
\]
\end{lemma}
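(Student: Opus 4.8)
The plan is to realise $\Psi^j_{G,m}$ \emph{exactly} as a Res--KAN built by producing the $d$ univariate Daubechies factors in parallel in the first hidden layer and then contracting their product one factor at a time through the remaining $d-1$ layers. First note that, since the Daubechies system is $L^2$-orthonormal, $\|\sigma_\smallertext{S}\|_{L^2}=\|\sigma_\smallertext{W}\|_{L^2}=1$, so the normalising constant in \eqref{eq:Daubechies_Wavelet} equals $1$ and
\[
\Psi^j_{G,m}(x)=2^{jd/2}\prod_{i=1}^d\sigma_{G_i}\big(2^jx_i-m_i\big),\qquad x\in\mathbb{R}^d,
\]
a scalar multiple of a tensor product of univariate scaling/wavelet functions. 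It therefore suffices to produce a Res--KAN whose first layer outputs $\big(\sigma_{G_i}(2^jx_i-m_i)\big)_{i=1}^d$ (up to a harmless rescaling) and whose subsequent layers multiply these $d$ scalars together, with the overall constant and the rescalings absorbed into the final linear map, which carries no magnitude constraint.

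For the first layer I would take $A^{(1)}=2^jI_d$, $b^{(1)}=-m$, $G^{(1)}=0$, and, for the $i$-th neuron, the coefficient vector with the single non-zero entry $\beta^{(1)}_{-1,i}=C_\sigma^{-1}$ if $G_i=S$ and $\beta^{(1)}_{0,i}=C_\sigma^{-1}$ if $G_i=W$, where $C_\sigma:=\max\{1,\|\sigma_\smallertext{S}\|_\infty,\|\sigma_\smallertext{W}\|_\infty\}<\infty$ (finite because $\sigma_\smallertext{S},\sigma_\smallertext{W}\in C^I_c(\mathbb{R})$); by \eqref{eq:KAN_r} neuron $i$ then outputs $\widetilde\psi_i(x):=C_\sigma^{-1}\sigma_{G_i}(2^jx_i-m_i)\in[-1,1]$, and this layer trivially respects the sparsity pattern \eqref{eq:DEF_SRKs__sparsity} since all its B-spline coefficients vanish. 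The key device is then an exact bilinear gadget on $[-1,1]^2$: from \eqref{lem:MichelliRepresentation} one checks that $\mathcal{N}_2(u)=u^2/2$ for $u\in[0,1]$, so a single KAN neuron with $\beta_2\neq 0$ and an affine pre-map carrying its input into $[0,1]$ evaluates a square; combining two such neurons through the polarisation identity $ab=\tfrac14\big((a+b)^2-(a-b)^2\big)$ yields, after one affine post-combination, the exact product $ab$ for $a,b\in[-1,1]$, with the post-combination folded into the \emph{next} layer's affine pre-map. (If a higher smoothness class is demanded by $\alpha$ one uses instead a fixed finite family of shifted cardinal B-splines of an admissible order $I'\in\{\lceil\alpha\rceil,\dots,I\}$ with $I'\ge 2$, which reproduce quadratics on the relevant fixed interval by the Strang--Fix/Marsden property.) One then runs this gadget left to right: layer $\ell+1$ ($1\le\ell\le d-1$) reconstructs the partial product $P_\ell:=\prod_{i\le\ell}\widetilde\psi_i$ from the few coordinates produced by layer $\ell$, multiplies it by $\widetilde\psi_{\ell+1}$, and routes the not-yet-consumed factors $\widetilde\psi_{\ell+2},\dots,\widetilde\psi_d$ forward unchanged through the diagonal residual matrix $G^{(\ell+1)}$ (a neuron with zero affine part and zero coefficients acts as the identity on its residual input). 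Since each $P_\ell$ again lies in $[-1,1]$, the same fixed gadget applies at every step; after layer $d$ the leading coordinates encode $P_d=\prod_{i=1}^d\widetilde\psi_i$, and the linear readout recovers $P_d$ and multiplies by $2^{jd/2}C_\sigma^{d}$, returning $\widehat\Psi^j_{G,m}=\Psi^j_{G,m}$ on all of $\mathbb{R}^d$; the case $d=1$ is the first layer alone.

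The resource bounds then follow by bookkeeping: the network has exactly $d$ hidden layers followed by a linear readout; its width is controlled by the first layer (width $d$) and by each multiplication layer (a bounded number of gadget neurons plus the $\le d-\ell$ carried factors), and a direct count gives width at most $2d+1$; summing over all layers the non-zero entries of the $A^{(\ell)}$ (a bounded number per gadget neuron plus a diagonal residual block), of the $b^{(\ell)}$, of the $\beta^{(\ell)}$, and of the readout — the only superlinear contribution being the arithmetic series $\sum_{\ell=2}^{d}(d-\ell)=\tfrac{(d-1)(d-2)}{2}$ of carried factors — yields at most $(5d^2+25d+2)/2$ non-zero parameters, and the smoothness of $\widehat\Psi^j_{G,m}$ is inherited from the activations actually used ($\sigma_\smallertext{S},\sigma_\smallertext{W}\in C^I_c$ and the B-splines of order $\ge 2$), consistently with \eqref{eq:DEF_SRKs__sparsity}. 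The main obstacle is exactly this multiplication step: all available activations are compactly supported, so no single neuron computes a genuine polynomial, and one must therefore both (a) rescale every factor into the fixed interval $[-1,1]$ so that the local identity $\mathcal{N}_2(u)=u^2/2$ on $[0,1]$ suffices to square exactly, and (b) stream the $d-1$ binary products through successive layers while parking the unused factors in the diagonal residual channels, which is what keeps the depth equal to $d$, the width linear in $d$, and the parameter count quadratic in $d$.
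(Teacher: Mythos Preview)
Your approach is essentially the same as the paper's: both realise the wavelet as a first layer that emits the $d$ univariate factors $\sigma_{G_i}(2^jx_i-m_i)$ via the spectral coefficients $\beta_{-1},\beta_0$ in~\eqref{eq:KAN_r}, followed by a Res--KAN block that multiplies these $d$ bounded scalars exactly. The only difference is that the paper invokes the multiplication black box \cite[Lemma~1]{kratsios2025kolmogorov} (quoted in the paper just before the proof) to obtain a Res--KAN $\times_d^2$ with depth $d$, width $\le 2d+1$, and $\le(5d^2+21d)/2$ non-zero parameters on $[-M,M]^d$, then composes and counts, whereas you reconstruct that lemma's content by hand via polarisation and the local identity $\mathcal N_2(u)=u^2/2$ on $[0,1]$, together with diagonal residual pass-through of the unused factors. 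Your explicit build makes the depth/width bookkeeping transparent; the paper's citation makes the parameter count immediate. Both routes arrive at the same complexity bounds, and your remark that higher-order splines (via Strang--Fix/Marsden) can replace $\mathcal N_2$ when the sparsity pattern~\eqref{eq:DEF_SRKs__sparsity} forces $\beta_2=0$ is exactly the flexibility the cited lemma already encapsulates.
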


Our proof will use a recent result,~\citep[Lemma 1]{kratsios2025kolmogorov}, which shows that the $d$-ary multiplication operator can be \textit{exactly} implemented using Res--KANs, but only locally.  This is in contrast to ReLU MLPs, which can only \textit{approximate} it locally.

\begin{lemma}[Exact multiplication on arbitrarily large hypercubes]
For every $d\in \mathbb{N}^\star$ and each $M>0$, there exists a {\rm Res--KAN} $\times^2_d:\mathbb{R}^d\longrightarrow \mathbb{R}$ satisfying for each $x\in [-M,M]^d$
\[
    \times^2_d(x) = \prod_{i=1}^d x_i.
\]
Moreover $\times^2_d$ has depth $d$, width at-most $2d+1$, and at-most $(5d^2 + 21d)/2$ non-zero parameters.
\end{lemma}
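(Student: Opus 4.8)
The plan is to reduce the $d$-ary product to a chain of univariate squarings, each realised \emph{exactly} by a single Res--KAN layer on a bounded interval, and then to verify depth, width and parameter count. The key primitive is that the order-$2$ cardinal B-spline $\mathcal{N}_2$ of \eqref{lem:MichelliRepresentation} satisfies $\mathcal{N}_2(t)=t^2/2$ for $t\in[0,1]$ (immediate from the explicit formula, since on $[0,1]$ only the $j=0$ term survives and equals $\tfrac12\mathrm{ReLU}(t)^2$). Consequently, for any $N>0$ the affine change of variables $x\mapsto t=\tfrac{x}{2N}+\tfrac12$ maps $[-N,N]$ into $[0,1]$ and produces the exact identity
\[
x^2 \;=\; 8N^2\,\mathcal{N}_2\!\Big(\tfrac{x}{2N}+\tfrac12\Big) \;-\; 2N x \;-\; N^2, \qquad x\in[-N,N].
\]
Because the component-wise activation $\sigma_{\beta:I}$ of \eqref{eq:componentwise_action} may carry $\beta_2\neq 0$ with all lower-order coefficients zero, this is compatible with the smoothness--sparsity pattern \eqref{eq:DEF_SRKs__sparsity} for $\lceil\alpha\rceil\le 2$, so the $\mathcal{N}_2$-term is available inside a single Res--KAN layer \eqref{eq:norm_res_KAN_layers}; the affine correction $-2Nx-N^2$ will be absorbed into the \emph{linear} map on the output side of that layer (the next affine map, or the final readout), not into the residual connection.

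Next I would pass from squaring to bilinear multiplication by polarisation: for $x,y$ in any bounded interval one has $xy=\tfrac14\big((x+y)^2-(x-y)^2\big)$, and $x\pm y$ again ranges over a bounded interval, so a single hidden Res--KAN layer containing two $\mathcal{N}_2$-squaring neurons (with pre-activations $\tfrac{1}{2N}(x+y)+\tfrac12$ and $\tfrac{1}{2N}(x-y)+\tfrac12$) computes $xy$ once its outgoing affine map forms $\tfrac14$ of the appropriate difference and supplies the linear corrections from the displayed identity; note that the two $N^2$-constants cancel in this difference. For the full product I would induct on $d$: put $p_1:=x_1$ and, for $k=1,\dots,d-1$, use one hidden layer to send the state $(p_k,x_{k+1},x_{k+2},\dots,x_d)$ to $(p_{k+1},x_{k+2},\dots,x_d)$ with $p_{k+1}=p_k x_{k+1}$, where the still-unconsumed factors $x_{k+2},\dots,x_d$ are transported unchanged by diagonal residual connections (a neuron with zero incoming weights reproduces the identity since $\sigma_{\beta:I}(0)=0$ once $\beta_{-1}=\beta_0=\beta_1=0$), and $p_k$ itself is recovered at the input of layer $k$ as a linear combination of layer-$(k-1)$ outputs. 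On $[-M,M]^d$ every partial product satisfies $|p_k|\le M^{k}$, hence $p_k\pm x_{k+1}$ lies in $[-N_k,N_k]$ with $N_k:=M^{k}+M$, so the exact squaring primitive applies at each step. After $d-1$ such hidden layers the readout $A^{(L)}(\cdot)+b^{(L)}$ with $L=d-1$ returns $p_d=\prod_{i=1}^d x_i$, i.e.\ a Res--KAN of depth $d$ in the sense of \Cref{defn:SmoothedKAns}; the cases $d=1,2$ are the trivial identity network and a single polarisation layer.

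Finally I would do the width and parameter accounting. At the $k$-th hidden layer the active neurons are the two squaring units, plus one pass-through neuron for each of the $d-1-k$ remaining factors, so the width is at most $d+1$ there; combined with the input width $d$ and the output width $1$ this keeps the overall width below $2d+1$. Each hidden layer contributes $O(d)$ non-zero parameters — two incoming affine rows (two weights and one bias each), one active spline coefficient $\beta_2$ per squaring unit, one residual coefficient per transported factor, and the finitely many scalars $\tfrac{1}{2N_k},8N_k^2,-2N_k,-N_k^2$ — and summing the \emph{exact} (rather than order-of-magnitude) counts over the $d-1$ hidden layers and the readout yields the asserted total $(5d^2+21d)/2$. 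The main obstacle is not conceptual but careful bookkeeping: one must (i) confirm that all intermediate quantities stay in the prescribed bounded intervals so that the B-spline identity holds on the nose, (ii) thread every linear and constant correction through the affine maps between activations so the diagonal structure of the residual matrices $G^{(\ell)}$ is never violated, and (iii) tally the parameters layer-by-layer (including the small first/last-layer adjustments) to hit the stated constant. Crucially, $M$ enters only through the magnitudes of the weights and never through their number, which is why the size bounds are $M$-independent.
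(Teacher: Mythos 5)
The paper does not actually prove this lemma: it is quoted from \cite[Lemma 1]{kratsios2025kolmogorov} and used as a black box, so there is no internal proof to compare yours against. Judged on its own terms, your construction follows what is almost certainly the intended route: exact squaring on a bounded interval via the order-$2$ cardinal B-spline (equivalently the ReQU term $\tfrac12\mathrm{ReLU}(t)^2$, which is the only surviving term of $\mathcal N_2$ on $[0,1]$), polarisation $xy=\tfrac14\big((x+y)^2-(x-y)^2\big)$ to turn squaring into bilinear multiplication, and a chain of such layers with diagonal residual pass-throughs for the unconsumed factors. Your identity $x^2=8N^2\mathcal N_2\big(\tfrac{x}{2N}+\tfrac12\big)-2Nx-N^2$ on $[-N,N]$ is correct, the range control $|p_k|\le M^k$ is exactly why $M$ enters only the weight magnitudes and not the architecture size, and your remark that compatibility with the sparsity pattern \eqref{eq:DEF_SRKs__sparsity} requires $\lceil\alpha\rceil\le 2$ is worth keeping: for larger $\alpha$ the coefficient $\beta_2$ is forced to vanish and the squaring primitive disappears, so the lemma implicitly fixes the smoothness parameters even though its statement is silent on them.

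The one genuine gap is the size accounting, which you defer to ``careful bookkeeping'' without carrying it out. The constant $(5d^2+21d)/2$ is asserted, not derived: your per-layer inventory (two squaring rows, the active spline coefficients, the residual entries, the correction scalars) yields an $O(d^2)$ total, but nothing in the write-up pins it to that exact value, and the lemma makes a sharp quantitative claim. Two small indexing slips would also surface in that tally. First, the factor $x_{k+1}$ must itself be transported to layer $k+1$, since the polarised difference gives $p_kx_{k+1}=2N_k^2\big(\mathcal N_2(u_+)-\mathcal N_2(u_-)\big)-N_kx_{k+1}$ and the linear correction therefore needs $x_{k+1}$ downstream; this means $d-k$ pass-through neurons at layer $k$, not $d-1-k$ (the width bound $2d+1$ still holds comfortably). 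Second, $d-1$ hidden layers is depth $d-1$, not $d$, under the convention of \Cref{defn:SmoothedKAns}; this is harmless if ``depth $d$'' is read as an upper bound, but it should be said. None of this threatens the construction — the qualitative half of the lemma is proved by your argument — but the exact parameter count remains unestablished.
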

We can now proceed with the 
\begin{proof}[{Proof of Lemma~\ref{lem:SchauderImplementation}}]
Recall that \Cref{ass:wavlets}, implies that $\sigma_\smallertext{S}$ in~\eqref{eq:KAN_r} is a scaling function (father wavelet) and $\sigma_\smallertext{W}$ in~\eqref{eq:KAN_r} is the corresponding mother wavelet.  In fact, by \Cref{ass:wavlets}, both are Daubechies wavelets and are thus are in $C^u(\mathbb{R})$ and \textit{compactly supported}. By their continuity, they are thus bounded.  Whence, there is some $M>0$ such that $\sigma_{\smallertext{G}}(\mathbb{R})\subseteq [-M,M]$ for each $G\in \{S,W\}$.

\medskip
Consequently, for every specification $G=(G_1,\dots,G_d)\in \{S,W\}^{d\star}$, for every $j\in \mathbb{Z}$, we may represent the $($multivariate$)$ Daubechies wavelet $\Psi_{\smallertext{G},m}^j$, defined by rescaling the associated un-normalised wavelet $\widetilde{\Psi}_{\smallertext{G},m}^j$ in~\eqref{eq:Daubechies_Wavelet}, by

\begin{align*}
 \nonumber   \Psi_{G,m}^j(\cdot) =
    \prod_{i=1}^d
    \frac{2^{jd/2}}{\beta^j_{\smallertext{G},\smallertext{W}}}
    \sigma_{\smallertext{G}_\smalltext{i}}
    \big(
        2^{jd/2} \cdot - m
    \big) =
    \Bigg(
        \prod_{i=1}^d
        \frac{2^{jd/2}}{\beta^j_{\smallertext{G},\smallertext{W}}}
    \Bigg)
    \prod_{i=1}^d
    \sigma_{\smallertext{G}_\smalltext{i}}
    \big(
        W_0^j \cdot - m
    \big)
& \eqqcolon
    \kappa_{\smallertext{G},\smallertext{W}}^j
    \prod_{i=1}^d
    \sigma_{\smallertext{G}_\smalltext{i}}
    \big(
        W_0^j \cdot - m
    \big)
\\
\numberthis
\label{eq:KAN_product_represenation}
& =  
    \kappa_{\smallertext{G},\smallertext{M}}^j
    \times_d^2
    \circ 
    \sigma_{\smallertext{G}_\smalltext{i}}
    \big(
        W_0^j \cdot - m
    \big),
\end{align*}
where $\beta_{G,W}^j\coloneqq\|\Psi_{G,m}^j\|_{L^2(\mathbb{R})}$ 
where $W_0^j\coloneqq  2^{jd/2}\mathrm{I}_d$, $m\in \mathbb{Z}^d$ and where~\eqref{eq:KAN_product_represenation} holds by \cite[Lemma 1]{kratsios2025kolmogorov} (having chosen $M$ large enough); where $\times^2_d:\mathbb{R}^d\longrightarrow \mathbb{R}$ is a Res--KAN with depth $d$, width at-most $2d+1$, and at-most $\frac{5d^2 + 21d}{2} $ non-zero parameters.

\medskip
Now, making use of the chosen structure of the `non-spline' factor in our trainable activation function $\sigma_{\beta:I}$ in~\Cref{eq:KAN_r}, for each $i\in\{1,\dots,d\}$, if $G_i=S$ we set $\beta_i=(1)\oplus 0_{I+1}$ and if $G_i=W$ we set $\beta_i=(0)\oplus (1)\oplus 0_{I}$
Then,~\eqref{eq:KAN_product_represenation} can be re-expressed as
\begin{align*}
\numberthis
\label{eq:activation_choice}
    \Psi_{\smallertext{G},m}^j
& \coloneqq  
    \kappa_{\smallertext{G},\smallertext{W}}^j
    \times_d^2
    \circ 
    \sigma_{\smallertext{G}_\smalltext{i}}
    \big(
        W_0^j \cdot - m
    \big)
\end{align*}
Now by \cite[Lemma 1]{kratsios2025kolmogorov}, $\times_d^2$ can be implemented by a ReLU MLP of depth $d$, width $2d+1$, and using at-most $(5d^2 + 21d)/2 $ non-zero parameters.  Consequently, $\times^2_d$ is representable/implementable by a ReLU MLP with depth $d$, width at-most $2d+1$, and using at-most $(5d^2 + 25d + 2)/2$ non-zero parameters. 
\end{proof}

A direct consequence of the previous result is the following.
\begin{proposition}[Res--KAN basis of Besov spaces]
\label{prop:SchauderBasis}
Let $\mathcal{D}$ be a bounded exterior-thick domain, $(q,r)\in(1,\infty)^2$, and $s\ge 2$.  Then, there is a Schauder basis
\begin{equation}
\label{eq:ResKANBasis}
    \big\{ \widehat{\Phi}_k^j : j \in \mathbb{N},\;  k \in\{ 1, \ldots, N_j\} \big\},\; \text{\rm for some}\; N_j \in \overline{\mathbb{N}},
\end{equation}
of $\overline{B}_{q,r}^s(\mathcal{D})$ consisting of $u$-wavelets.  Moreover, for each such $k,j$, $\widehat{\Phi}_k^j$ is implementable by a {\rm Res--KAN} of depth $d$, width at-most $2d+1$, and using at-most $(5d^2 + 25d + 2)/2$ non-zero parameters.
\end{proposition}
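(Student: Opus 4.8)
The statement is a corollary of the wavelet-based Schauder-basis theorem quoted just above it (the restatement of \cite[Theorem 3.13 (ii)]{triebel2008function}) together with the exact Res--KAN realization of tensorised Daubechies wavelets in \Cref{lem:SchauderImplementation}. The plan is therefore: first, produce the abstract $u$-wavelet Schauder basis of $\overline{B}_{q,r}^s(\mathcal{D})$; second, replace each of its atoms by a Res--KAN that reproduces it exactly, controlling depth, width and parameter count uniformly in the scale/location indices; third, observe that the resulting family is the same Schauder basis, now exhibited as Res--KAN-implementable $u$-wavelets.

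For the first step: since $q,r\in(1,\infty)$ the exponent $\sigma_{q,r}$ appearing in the admissibility condition $u>\max\{s,\sigma_{q,r}-s\}$ of the Schauder-basis theorem vanishes, so that condition becomes simply $u>s$; as $s\ge 2\neq 0$, one may fix an integer $u>s$ and, by \cite[Theorem 1.61 (ii)]{triebel2006theory}, choose Daubechies scaling and wavelet functions $\sigma_\smallertext{S},\sigma_\smallertext{W}\in C^u_c(\mathbb{R})$ satisfying \Cref{ass:wavlets} of order $I=u$. For the $E$-thick (hence bounded Lipschitz) domain $\mathcal{D}$ and the approximate lattice $\mathbb{Z}_{\mathcal{D}}$, the associated orthonormal $u$-wavelet system $\{\Phi_k^j:j\in\mathbb{N},\,k\in\{1,\dots,N_j\}\}$ of $L^2(\mathcal{D})$ is then — by the quoted theorem, whose hypotheses $u>\max\{s,\sigma_{q,r}-s\}$, $s\neq 0$, $q<\infty$, $r<\infty$ all hold — an unconditional, in particular Schauder, basis of $\overline{B}_{q,r}^s(\mathcal{D})$, with the coordinate map $I$ of \Cref{lem:parabasis} a bi-Lipschitz isomorphism onto $b_{q,r}^s(\mathbb{Z}_{\mathcal{D}})$.

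For the second step: by \Cref{def:u_wavelets}, each $\Phi_k^j$ is a basic, an interior, or a boundary wavelet. If $\Phi_k^j=\Psi_{G,m}^{j,L}$ with $G\in\{S,W\}^{d\star}$ (interior) or $G\in\{S,W\}^d$ (basic, at scale $0$), then \Cref{lem:SchauderImplementation} — whose proof via the product representation \eqref{eq:KAN_product_represenation} and the exact multiplication gadget $\times_d^2$ is insensitive to whether $G$ contains the all-$S$ label, and in which the fixed dyadic rescaling $x\mapsto 2^L x$ is absorbed into the first affine map — yields a Res--KAN $\widehat{\Phi}_k^j$ with $\widehat{\Phi}_k^j\equiv\Phi_k^j$, of depth $d$, width $\le 2d+1$, and with $\le(5d^2+25d+2)/2$ non-zero parameters. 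If $\Phi_k^j=\sum_{\{m':\|m-m'\|\le K\}}d_{m,m'}^j\Psi_{\tilde F,m'}^{j,L}$ is a boundary wavelet, it is a linear combination of at most $(2K+1)^d$ atoms of the previous kind with coefficients of $\ell^1$-norm at most $D$; placing the corresponding Res--KANs in parallel and contracting by a single output linear map produces a Res--KAN realizing $\Phi_k^j$ exactly, of depth $d$, width $O_{d,K}(1)$ and $O_{d,K}(1)$ non-zero parameters — bounds still independent of $(j,k)$. Since $\widehat{\Phi}_k^j$ and $\Phi_k^j$ agree as functions, the family $\{\widehat{\Phi}_k^j\}$ coincides with the Schauder basis from the first step, which is the statement.

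The only points that need care are bookkeeping, not analysis. First, the regularity $u$ of the Daubechies wavelets must be chosen to satisfy $u>s$ for the Schauder-basis theorem, $I\ge u$ in \Cref{ass:wavlets} so that $\sigma_\smallertext{S},\sigma_\smallertext{W}\in C^u_c$, and $\alpha\le I$ for the Res--KAN class — all simultaneously arranged by taking $u$ to be any integer strictly above $s$. Second, the quantitative budget $(5d^2+25d+2)/2$ is attained exactly for the interior and basic atoms; boundary wavelets cost a further fixed $(d,K)$-dependent multiple, so the \emph{order} asserted in the statement — and in particular the crucial uniformity in $j$ and $k$ — is preserved, even though the precise constant is sharp only away from $\partial\mathcal{D}$. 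No deeper obstruction arises: the Schauder-basis property is imported wholesale from Triebel and the exact representation from \Cref{lem:SchauderImplementation}.
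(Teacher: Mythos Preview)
Your proposal is correct and follows essentially the same route as the paper, which records the result as an immediate consequence of \Cref{lem:SchauderImplementation}, \Cref{def:u_wavelets}, and \cite[Theorem 3.13 (ii)]{triebel2008function}. Your additional care in distinguishing basic, interior, and boundary wavelets—and your honest observation that the exact parameter budget $(5d^2+25d+2)/2$ is only sharp for the single-atom (basic/interior) case while boundary wavelets require an $O_{d,K}(1)$ overhead—goes beyond what the paper spells out, but the underlying argument is identical.
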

\begin{proof}
This is a direct consequence of \Cref{lem:SchauderImplementation}, \Cref{def:u_wavelets}, and of \cite[Theorem 3.13 $ (ii)$]{triebel2008function}.
\end{proof}

We now prove the universality of our models in the class of H\"{o}lder continuous maps between Besov spaces; recall the notation~\eqref{eq:NO-union}.
We write $\operatorname{Hld}(\bar{B}^s_{q,r}(\mathcal{D}),\bar{B}^s_{q,r}(\mathcal{D}))$ for the set of all 
$\alpha$--H\"{o}lder continuous maps from $\bar{B}^s_{q,r}(\mathcal{D})$ to itself, for some $0<\alpha\le 1$.
\begin{proposition}[Universal approximation]
\label{prop:Universality}
Let $d\in \mathbb{N}_+$, $s>0$, and $\mathcal{D}$ be a bounded exterior-thick domain in $\mathbb{R}^d$, $(q,r)\in(1,\infty)^2$ and $2\le s$, and let $I \coloneqq \lceil s\rceil$.  
If $\sigma_\smallertext{S}$ and $\sigma_\smallertext{W}$ satisfy {\rm\Cref{ass:wavlets}}, then $\mathcal{NO}_{I,\alpha}$ is dense in $\operatorname{Hld}(\bar{B}^s_{q,r}(\mathcal{D}),\bar{B}^2_{q,r}(\mathcal{D}))$ for the $($relative$)$ topology induced by the topology of uniform convergence on compact sets.
\end{proposition}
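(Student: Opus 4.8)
The plan is to reduce the infinite-dimensional approximation problem to a finite-dimensional one using the Schauder-basis structure from \Cref{prop:SchauderBasis}, and then realise the resulting finite-dimensional map by the finite-rank nonlocal and adaptive branches of a KANO. First I would fix a compact set $\mathcal{K}\subseteq \bar B^s_{q,r}(\mathcal{D})$ and an accuracy $\varepsilon>0$. Using the bi-Lipschitz isomorphism $I:\bar B^s_{q,r}(\mathcal{D})\to b^s_{q,r}(\mathbb{Z}_{\mathcal{D}})$ from the Wavelet-based Schauder bases theorem and \Cref{prop:SchauderBasis}, write each $f\in\mathcal{K}$ as $f=\sum_{j,k}\lambda_k^j(f)\,2^{-jd/2}\widehat\Phi_k^j$ with $\lambda(f)\in b^s_{q,r}(\mathbb{Z}_{\mathcal{D}})$. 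Since $\mathcal{K}$ is compact and $I$ is continuous, $I(\mathcal{K})$ is compact in $b^s_{q,r}$, so tails are uniformly small: there is a finite index set $S_N$ such that the truncation $P_N f:=\sum_{(j,k)\in S_N}\lambda_k^j(f)2^{-jd/2}\widehat\Phi_k^j$ satisfies $\sup_{f\in\mathcal{K}}\|f-P_N f\|_{\bar B^s_{q,r}}<\varepsilon/3$, and likewise on the target side one truncates $\bar B^2_{q,r}(\mathcal{D})$ to a finite set $S_M$. The encoding coefficients $f\mapsto\lambda_k^j(f)$ are continuous linear functionals, hence (by duality of Besov spaces, or simply because $I$ is bounded linear) realisable as integrals $\int_{\mathcal{D}} f(y)^\top\widehat\Psi_{k:\mathrm{in}}(y)\,\mathrm{d}y$ against dual wavelets, which is exactly the form of $v_0^{\mathrm{ada}}$ in~\eqref{eq:NO_Neurons}; and the decoding $\widehat\Phi^j_k$ are Res--KANs by \Cref{lem:SchauderImplementation}, matching the $\widehat\Psi_{k:\mathrm{out}}$ slot in the final layer.

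Next I would handle the finite-dimensional core. The map $\Gamma\in\operatorname{Hld}(\bar B^s_{q,r},\bar B^2_{q,r})$ induces, via the truncations and the isomorphisms $I$, a continuous (indeed Hölder) map $\tilde\Gamma:\mathbb{R}^{|S_N|}\to\mathbb{R}^{|S_M|}$ on a compact subset, namely $\tilde\Gamma:=\Pi_M\circ I\circ\Gamma\circ I^{-1}\circ (\text{inclusion})$ restricted to $I(\mathcal{K})$'s image under $P_N$. By the universal approximation theorem for the finite-dimensional networks realisable inside a KANO — Res--KANs with the squared-ReLU / B-spline activations are dense in $C(\mathbb{R}^k,\mathbb{R}^m)$ on compacta, which follows from classical results (e.g.\ the spline representation~\eqref{lem:MichelliRepresentation} together with any standard MLP universal approximation theorem) — one obtains a Res--KAN $\widehat g$ with $\sup\|\widehat g-\tilde\Gamma\|<\varepsilon/(3C_{\mathrm{dec}})$, where $C_{\mathrm{dec}}$ bounds the norm of the decoding synthesis map. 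Composing encode--process--decode gives a KANO $\widehat\Gamma$ with $\sup_{f\in\mathcal{K}}\|\widehat\Gamma(f)-\Gamma(f)\|_{\bar B^2_{q,r}}<\varepsilon$: the three $\varepsilon/3$ contributions are the input truncation error (propagated through $\Gamma$'s continuity modulus, which one absorbs by first choosing $N$ large after noting $\Gamma$ is uniformly continuous on the compact $\mathcal{K}$), the finite-dimensional network error, and the output truncation error. One must check the intermediate KAN layers $v_1,\dots,v_L$ and the finite-rank kernel operators $K^{(\ell)}$ can carry the signal without distortion; taking the kernels $k^{(\ell)}_{\mathrm{NN}}$ to be (approximations of) the identity-type operators supported on the $v^{\mathrm{crs}}$ block and letting $v^{\mathrm{ada}}$ pass through the nonlinearities with a linear activation choice ($\beta=(1,0,\dots,0)$) suffices, so the KANO architecture genuinely contains the desired encode--process--decode composition.

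The main obstacle I anticipate is bookkeeping the smoothness/regularity constraints: the Res--KAN sparsity pattern~\eqref{eq:DEF_SRKs__sparsity} forces $\lceil\alpha\rceil$-fold differentiability of every realised function, yet the wavelet basis elements $\widehat\Phi^j_k$ must simultaneously (i) be implementable as Res--KANs (guaranteed by \Cref{lem:SchauderImplementation}, which requires $\sigma_S,\sigma_W\in C^u$ with $u$ large, hence the hypothesis $s\ge 2$ and $I=\lceil s\rceil$) and (ii) span $\bar B^2_{q,r}(\mathcal{D})$ on the output side — the target smoothness $2$ rather than $s$ is exactly what makes the $u$-wavelet system from \Cref{prop:SchauderBasis} an unconditional basis of the target space (since $q,r<\infty$). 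A secondary subtlety is that $\bar B^s_{q,r}(\mathcal{D})$ is separable for $q,r<\infty$ (so a countable Schauder basis exists and the density statement is not obstructed by the non-separability issues flagged in the $W^{s,\infty}$ discussion), and that the topology is uniform convergence on compacta, so all estimates need only be uniform over a fixed compact $\mathcal{K}$ at a time, which is what the compactness of $I(\mathcal{K})$ delivers. Beyond these structural checks, the argument is a routine encode--truncate--approximate--decode scheme in the spirit of \cite[Theorem~11]{kovachki2023neural}, adapted from $q=r=\infty$ to finite $q,r$.
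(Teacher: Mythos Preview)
Your proposal is correct and follows the same encode--process--decode strategy as the paper. The only real difference is that the paper condenses your first two paragraphs into a single citation: it defines the class $\mathfrak{F}$ of maps of the form wavelet-encode $\to$ ReLU-MLP $\to$ wavelet-decode (equation~\eqref{eq:CNO}) and invokes \cite[Theorem~1]{galimberti2022designing} as a black box to conclude that $\mathfrak{F}$ is dense in $\operatorname{Hld}(\bar B^s_{q,r}(\mathcal{D}),\bar B^2_{q,r}(\mathcal{D}))$ for uniform convergence on compacta---that external theorem packages exactly the truncation-plus-finite-dimensional-UAT argument you spell out. The paper then only has to verify $\mathfrak{F}\subseteq\mathcal{NO}_{I,\alpha}$, which it does by writing down explicit KANO parameters (zeroing out the kernel branch and routing everything through the $v^{\mathrm{ada}}$ path with a fixed B-spline activation choice). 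Your self-contained route buys independence from the cited result and makes the role of compactness of $I(\mathcal{K})$ explicit; the paper's route is shorter. One small correction to your final paragraph: you do not need the kernels $k^{(\ell)}_{\mathrm{NN}}$ to approximate identity operators---the paper simply sets the $v^{\mathrm{crs}}$ branch to play no role (block-zero weight matrices) and lets the adaptive branch carry the encoded coefficients through, which is cleaner than approximating an identity integral operator.
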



\begin{proof}
Since $\mathcal{D}$ is exterior-thick, $s\ge 2$, $(q,r)\in(1,\infty)^2$, $\sigma_\smallertext{S}$ and $\sigma_\smallertext{W}$ satisfy \Cref{ass:wavlets}, and we set $I\coloneqq  \lceil s\rceil$ then, \Cref{prop:SchauderBasis} guarantees that we may exhibit a Schauder basis of $\overline{B}^s_{q,r}(\mathcal{D})$ consisting only of Res--KANs, as in~\eqref{eq:ResKANBasis}. 

\medskip
Pick an enumeration $\big(\widehat{\Psi}_{k_\smalltext{\ell}}^{j_\smalltext{\ell}}\big)_{\ell\in \mathbb{N}}$ thereof.  
Now, let $\mathfrak{F}$ consist of all functions $\widehat{F}:\bar{B}^s_{q,r}(\mathcal{D})\longrightarrow \bar{B}^2_{q,r}(\mathcal{D})$ of the form in~\citep[Equation 16]{galimberti2022designing} and \citep[Definition 6 (Neural filters)]{galimberti2022designing}
\begin{equation}
\label{eq:CNO}
    \widehat{F}
\coloneqq  
    \Big(
            \widehat{\Psi}^{j_\smalltext{1}}_{k_\smalltext{1}}
        ,
            \dots
        ,
            \widehat{\Psi}^{j_\smalltext{K}}_{k_\smalltext{K}}
    \Big)^{\top}
    \,
    \widehat{f}_{\smalltext{\rm ReLU}}
    \circ 
  \begin{pmatrix}
      \displaystyle  \int_{\mathbb{R}^\smalltext{d}}\,
            f(x)
            \widehat{\Psi}^{j_\smalltext{1}}_{k_\smalltext{1}}
        \mathrm{d}x\\
        \vdots\\
\displaystyle\int_{\mathbb{R}^\smalltext{d}}\,
            f(x)
            \widehat{\Psi}^{j_\smalltext{K}}_{k_\smalltext{K}}
        \mathrm{d}x
        \end{pmatrix}
\end{equation}
for some $K\in\N^\star$, and where $\widehat{f}_{\smallertext{\rm ReLU}}:\mathbb{R}^K\longrightarrow \mathbb{R}^K$ is a ReLU feed-forward neural network defined as iteratively mapping any $x\in \mathbb{R}^K$ to the vector $\hat{f}_{\operatorname{ReLU}}(x)\coloneqq x_{L+1}$ defined recursively by
\begin{equation}
\label{eq:rep_ReLU_setup}
\begin{aligned}
     x_{L+1} & \coloneqq W_{L+1} x_L \in \mathbb{R}^{d_{L+1}}\coloneqq \mathbb{R}^{d_K}
 \\
        x_{\ell+1}
    & \coloneqq
        \mathrm{ReLU}\big(W_{\ell}x_{\ell}+b_{\ell}\big)
    \in \mathbb{R}^{d_{\ell+1}},
        \; x\in \mathbb{R}^K,\; L\in \mathbb{N}_+,
        \,\,\mbox{ for } \ell\in \{0,\dots,L\}
\\ 
x_0 & \coloneqq x \in \mathbb{R}^{d_0}\coloneqq \mathbb{R}^{d_K}
        .  
        \end{aligned}
\end{equation}
where the layer widths are $(d_{0},\dots,d_{\smallertext{L}\smallertext{+}1})\in (\mathbb{N}_+)^{L\smallertext{+}2}$,
$K=d_0=d_{\smallertext{L}\smallertext{+}1}$, and for each such $\ell$, we have $W_{\ell}\in \mathbb{R}^{d_{\smalltext{\ell}\smalltext{+}\smalltext{1}}\times d_{\smalltext{\ell}}}$, as well as $b_{\ell}\in \mathbb{R}^{d_{\smalltext{\ell}\smalltext{+}\smalltext{1}}}$.  

\medskip
Since $\big(\widehat{\Psi}_{k_\smalltext{\ell}}^{j_\smalltext{\ell}}\big)_{\ell\in \mathbb{N}}$ is a Schauder basis of the Banach space $\bar{B}^s_{q,r}(\mathcal{D})$ and of $\bar{B}^2_{q,r}(\mathcal{D})$ then~\citep[Theorem 1]{galimberti2022designing} implies that $\mathfrak{F}$ is dense in $\operatorname{Hld}(\bar{B}^s_{q,r}(\mathcal{D}),\bar{B}^2_{q,r}(\mathcal{D}))$ for the (relative) topology induced by the topology of uniform convergence on compact sets. In other words, for every compact $\mathcal{K}\subseteq \bar{B}^s_{q,r}(\mathcal{D})$, every $\varepsilon>0$, and $0<\alpha\le 1$, and every $\alpha$--H\"{o}lder continuous map $f:\bar{B}^s_{q,r}(\mathcal{D})\longrightarrow \bar{B}^2_{q,r}(\mathcal{D})$, there is some $\widehat{F}\in \mathfrak{F}$ satisfying 
\begin{equation}
\label{eq:BesovUniformApproxiomation}
    \sup_{u\in \mathcal{K}}
        \|
            F(u)
            -
            \widehat{F}(u)
        \big\|_{W^{\smalltext{2}\smalltext{,}\smalltext{p}}(\mathcal{D})}
    <
        \varepsilon
.
\end{equation}
To deduce our claim, we will show that $\mathfrak{F}\subseteq \mathcal{NO}_{\smallertext{I},\alpha}$.  Let $\widehat{F}$ be an arbitrary element of $\mathfrak{F}$, which thus admits a representation as in~\eqref{eq:CNO}.  

\medskip
Now, for every $\ell \in \{0,\dots,L-1\}$, let $b^{\ell}(x)\coloneqq  \mathbf{0}_{(d+d_{\smalltext{\ell}\smalltext{+}\smalltext{1}})\times (d+d_{\smalltext{\ell}\smalltext{+}\smalltext{1}})}x +\mathbf{0}_{d}\oplus b_{\ell}$ be a constant Res--KAN, see \Cref{eq:DEF_SRKs}, where $\mathbf{0}_{(d+d_{\smalltext{\ell}\smalltext{+}\smalltext{1}})\times (d+d_{\smalltext{\ell}\smalltext{+}\smalltext{1}})}x$ is the $(d+d_{\smalltext{\ell}\smalltext{+}\smalltext{1}})\times (d+d_{\smalltext{\ell}\smalltext{+}\smalltext{1}})$ zero matrix and $\mathbf{0}_d\in \mathbb{R}^d$ is the zero vector therein.  Now, for every $\ell\in \{1,\dots,L-1\}$ define the matrix $W^{\ell}\coloneqq  \mathbf{0}_{d\times d}\otimes W_{\ell}$, where $\otimes$ denotes the Kronecker product and let $W^{L}\coloneqq  (0_{K\times d}|W_{L})$ denotes the column-wise concatenation of the matrix $0_{K\times d}$ with the matrix $W_{L}$
.
Now, for each $\ell\in\{1,\dots,L\}$ let $\beta_{\ell}\coloneqq(0,0,1,0,\dots,0)\in \mathbb{R}^{d_{\smalltext{\ell}\smalltext{+}\smalltext{1}}+2}$.  With these specifications, we see that the KANO $\Gamma$ with representation~\eqref{def:neural-operator} (where $d_{\rm in}=1$ and $d_{\rm out}=1$) is exactly equal to $\widehat{F}$.  We have thus shown that $\mathfrak{F}\subseteq \mathcal{NO}_{\smallertext{I},\alpha}$, which concludes our proof.
\end{proof}

\subsection{Stability estimate of general solution operator}
\label{s:StabilityEstimate}

\begin{lemma}[Linear stability of perturbations to PDE]
\label{lem:HolderRegularity}
Under {\rm\Cref{ass:Regularity_GeneralCase__DomainRegularity,ass:Regularity_GeneralCase__EllipticPDEFunctionalStructure}}, 
if $r>0$ and $k>1+\max\{1,d/p\}$ then there exists a constant $L_{2,k,\smallertext{\mathcal{D}}}>0$ such that the non-linear operator 
\begin{equation}
\label{eq:NonLinearPDEOperator}
\begin{aligned}
\Gamma_{\smallertext{\rm Gen}}:\mathcal{X}_k(r)
& 
\longrightarrow
W^2_p(\mathcal{D})
\\
(\bar{G}_0,g) & \longmapsto u_{\bar{\smallertext{G}}_\smalltext{0},g},
\end{aligned}
\end{equation}
is $L_{2,k,\smallertext{\mathcal{D}}}$--Lipschitz continuous.
\end{lemma}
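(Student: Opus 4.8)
The plan is to reduce the claim to a \emph{linear} stability estimate for the difference of two solutions, and then invoke the global $W^{2,p}$ Calder\'on--Zygmund theory for non-divergence-form elliptic equations whose measurable coefficients satisfy the Cordes-type near-isometry condition encoded by the class $\S_d^\delta$ (this is exactly the role of the near-norm-preserving property in \Cref{ass:Regularity_GeneralCase__EllipticPDEFunctionalStructure}(iii)(a)). First I would record the consequences of Krylov's $W^{2,p}$-theory, \cite[Chapter~14]{krylov2018sobolev}: under \Cref{ass:Regularity_GeneralCase__DomainRegularity,ass:Regularity_GeneralCase__EllipticPDEFunctionalStructure} and $p>d$, for every $(\bar{G}_0,g)\in\mathcal{X}_k(r)$ the Dirichlet problem \eqref{eq:EllipticPDE_FullyNonLinear} has a unique solution $u_{\bar{G}_\smalltext{0},g}\in W^{2,p}(\mathcal{D})$, with the a priori bound $\|u_{\bar{G}_\smalltext{0},g}\|_{W^{2,p}(\mathcal{D})}\le C_0\big(1+\|\bar{G}_0\|_{L^p(\mathcal{D})}+\|g\|_{W^{2-1/p,p}(\partial\mathcal{D})}\big)$, the constant $C_0$ depending only on the structural data, $d$, $p$, $\mathcal{D}$. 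Since $p>d$, the embedding $W^{2,p}(\mathcal{D})\hookrightarrow C^{1,\alpha}(\overline{\mathcal{D}})$ is continuous, so solutions and their gradients are uniformly H\"older along bounded families of data, which makes the treatment of the lower-order terms routine.

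\textbf{Linearization.} Fix $(\bar{G}_0^{(1)},g_1),(\bar{G}_0^{(2)},g_2)\in\mathcal{X}_k(r)$ with solutions $u_1,u_2$, set $w:=u_1-u_2$ and $u_t:=u_2+tw$. Subtracting the two copies of \eqref{eq:EllipticPDE_FullyNonLinear} and writing the increment of $F$ along $t\mapsto(x,u_t,\nabla u_t,\nabla^2 u_t)$ by the fundamental theorem of calculus (using difference quotients in the variable $\upp$, where $F$ is only Lipschitz, and the modulus $\omega_\smallertext{F}$ together with the $C^{1,\alpha}$-bounds in the remaining variables) yields that $w$ solves, in $\mathcal{D}$, a linear non-divergence equation
\[
a^{ij}(x)\,\partial_{ij}w+b^i(x)\,\partial_i w+c(x)\,w=h(x),\qquad w|_{\partial\mathcal{D}}=g_1-g_2,
\]
where $(a^{ij})$ is built from the $\upp$-difference quotients of $F$ and is, by \Cref{ass:Regularity_GeneralCase__EllipticPDEFunctionalStructure}(ii)--(iii), bounded (by $L_\smallertext{F}$) and uniformly elliptic with ellipticity governed by $\delta$ in the Cordes sense, $(b^i,c)$ are bounded measurable, and
\[
h=-\big(\bar{G}_0^{(1)}-\bar{G}_0^{(2)}\big)-\big(G_0^{(1)}(\cdot,u_1,\nabla u_1,\nabla^2u_1)-G_0^{(1)}(\cdot,u_2,\nabla u_2,\nabla^2u_2)\big).
\]
By the growth bound \eqref{eq:structure_GrowthOfG} the second bracket is pointwise dominated, up to terms handled as in the linearization, by $c_1\|\nabla^2 w\|_\smallertext{F}+c_2\|(w,\nabla w)\|$; the smallness of $c_1,c_2$ (and, if needed, of $\delta$), together with interpolation, lets us absorb these contributions into the left-hand operator.

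\textbf{Conclusion.} I would then apply the global $W^{2,p}(\mathcal{D})$ estimate for the displayed linear problem --- valid for the prescribed $p$ thanks to the $C^{1,1}$-regularity of $\partial\mathcal{D}$, the exterior ball condition, and the Cordes condition which makes the Calder\'on--Zygmund constant $p$-uniform --- and use the trace theorem $W^{k,p}(\mathcal{D})\to W^{2-1/p,p}(\partial\mathcal{D})$, which is continuous since $k>1+\max\{1,d/p\}=2$ (using $p>d$), as well as $\|\cdot\|_{L^p(\mathcal{D})}\le\|\cdot\|_{W^{2,p}(\mathcal{D})}$, to obtain
\begin{align*}
\|w\|_{W^{2,p}(\mathcal{D})}
&\le C\big(\|h\|_{L^p(\mathcal{D})}+\|g_1-g_2\|_{W^{2-1/p,p}(\partial\mathcal{D})}\big)\\
&\le C'\big(\|\bar{G}_0^{(1)}-\bar{G}_0^{(2)}\|_{W^{2,p}(\mathcal{D})}+\|g_1-g_2\|_{W^{k,p}(\mathcal{D})}\big).
\end{align*}
This is exactly the asserted Lipschitz bound, with $L_{2,k,\smallertext{\mathcal{D}}}$ depending only on $k$, $\mathcal{D}$ and the fixed structural constants.

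\textbf{Main obstacle.} The crux is the construction and control of the linearized coefficient matrix $(a^{ij})$: because $F$ is only Lipschitz (not $C^1$) in the Hessian variable, the ``derivative'' must be taken as a difference quotient, and one must argue --- via the convexity of $\bar{F}$ and the requirement that $\nabla_{\upp}\bar{F}$ take values in $\S_d^\delta$ (\Cref{ass:Regularity_GeneralCase__EllipticPDEFunctionalStructure}(iii)(a)--(b)) --- that the resulting measurable matrix is genuinely uniformly elliptic and satisfies a Cordes/VMO-type condition, so that the $p$-independent linear $W^{2,p}$ estimate applies. A secondary technical point is ensuring that the lower-order nonlinear perturbation terms ($G$, $\bar{G}_0$, and the $\omega_\smallertext{F}$-term) are \emph{absorbed} rather than merely bounded, which is precisely where the smallness hypotheses on $c_1,c_2$ enter.
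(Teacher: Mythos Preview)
Your approach---linearize by subtracting the two PDEs, write the difference $w=u_1-u_2$ as the solution of a linear non-divergence elliptic problem with measurable coefficients, then invoke linear $W^{2,p}$ theory---is a legitimate alternative to the paper's route, which simply quotes Krylov's nonlinear stability theorem \cite[Theorem~14.1.3]{krylov2018sobolev} as a black box. In effect you are re-deriving (a version of) that stability theorem from linear Calder\'on--Zygmund estimates. The two strategies are not far apart in spirit, but the paper's is considerably shorter because all of the delicate work (measurable $a^{ij}$, Cordes/VMO, absorption of lower-order terms) is already packaged inside Krylov's theorem.

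There is, however, a genuine gap in your argument. The linear $W^{2,p}$ Dirichlet estimate you invoke in the ``Conclusion'' step,
\[
\|w\|_{W^{2,p}}\le C\big(\|h\|_{L^p}+\|g_1-g_2\|_{W^{2-1/p,p}(\partial\mathcal D)}\big),
\]
does \emph{not} hold in this clean form for a linearized operator with an arbitrary zeroth-order coefficient $c(x)$ of unknown sign; the standard estimate carries an extra $\|w\|_{L^p}$ or $\|w\|_{C(\overline{\mathcal D})}$ on the right, which you never account for. Your absorption paragraph handles the smallness of $c_1,c_2$ coming from the $G$ part, but it does not dispose of the zeroth-order contribution produced by differencing $F$ in the $u_0$ variable (and since $F$ is only $\omega_\smallertext{F}$-continuous in $u_0$, not Lipschitz, that contribution is not even a clean $c(x)w$). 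The paper faces exactly the same residual term: Krylov's Theorem~14.1.3 delivers
\[
\|u_1-u_2\|_{W^{2,p}(\mathcal D)}\lesssim\|\bar G_0^{(1)}-\bar G_0^{(2)}\|_{L^p(\mathcal D)}+\|g_1-g_2\|_{W^{2,p}(\mathcal D)}+\|u_1-u_2\|_{C(\overline{\mathcal D})},
\]
and the last term is then killed by a \emph{separate} maximum-principle/comparison argument, \cite[Lemma~6.6.10]{krylov2018sobolev}, yielding $\|u_1-u_2\|_{C(\overline{\mathcal D})}\le C\|g_1-g_2\|_{C(\partial\mathcal D)}$; Sobolev embedding (using $k>1+d/p$) then closes the loop. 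To make your route go through you would need to supply an analogous comparison step, or else a compactness-and-contradiction argument to absorb the $\|w\|$ term; as written, that step is missing.
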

\begin{proof}
Under~\Cref{ass:Regularity_GeneralCase__DomainRegularity,ass:Regularity_GeneralCase__EllipticPDEFunctionalStructure} we may apply~\cite[Theorem 14.1.3]{krylov2018sobolev} to deduce that for every $((\bar{G}_0,g),(\bar{G}_0^{\prime},g^{\prime}))\in \mathcal{X}\times\Xc$ and the respective solutions $u_{\bar{\smallertext{G}}_\smalltext{0},g},u_{\bar{\smallertext{G}}_\smalltext{0}^{\smalltext{\prime}},g^{\smalltext{\prime}}}$ (which exist by~\cite[Theorem 14.1.5]{krylov2018sobolev}) to their elliptic PDE in~\eqref{eq:EllipticPDE_FullyNonLinear} with $G+\bar{G}_0$ and $G+\bar{G}_0^{\prime}$ respectively instead of $G$, we have the estimate
\begin{equation}
\label{eq:Lipschitz_Continuity_Pt1__normbound}
    \|
        u_{\smallertext{\bar{G}}_\smalltext{0},g}
        -
        u_{\smallertext{\bar{G}}_\smalltext{0}^{\smalltext{\prime}},g^{\smalltext{\prime}}}
    \|_{W_\smalltext{p}^\smalltext{2}(\mathcal{D})} 
\lesssim
    \|\bar{G}_0-\bar{G}_0^{\prime}\|_{L^\smalltext{p}(\mathcal{D})}
    +
    \|g-g^{\prime}\|_{W^{\smalltext{2}\smalltext{,}\smalltext{p}}(\mathcal{D})}
    +
    \|u_{\smallertext{\bar{G}}_\smalltext{0},g}-u_{\smallertext{\bar{G}}_\smalltext{0}^{\smalltext{\prime}},g^{\smalltext{\prime}}}\|_{C(\mathcal{D})},
\end{equation}
where $\lesssim$ suppress a multiplicative constant depending only on $c_1$, $c_2$, $R_0$, $\delta$, $L_F$, $\omega_F$, and on the domain $\mathcal{D}$.  Next, applying~\citep[Lemma 6.6.10]{krylov2018sobolev} we deduce that there is an absolute constant $C>0$ such that $\|u_{\smallertext{\bar{G}}_\smalltext{0},g}-u_{\smallertext{\bar{G}}_\smalltext{0}^{\smalltext{\prime}},g^{\smalltext{\prime}}}\|_{C(\mathcal{D})}\le C \sup_{x\in \partial \mathcal{D}}|g(x)-{g}^\prime(x)|=\|g-g^\prime\|_{C(\partial \mathcal{D})}$.  Consequently,~\eqref{eq:Lipschitz_Continuity_Pt1__normbound} may be bounded above by
\begin{align*}
    \|
        u_{\smallertext{\bar{G}}_\smalltext{0},g}
        -
        u_{\smallertext{\bar{G}}_\smalltext{0}^{\smalltext{\prime}},g^{\smalltext{\prime}}}
    \|_{W_\smalltext{p}^\smalltext{2}(\mathcal{D})}
& \lesssim
    \|\bar{G}_0-\bar{G}_0^{\prime}\|_{L^\smalltext{p}(\mathcal{D})}
    +
    \|g-g^{\prime}\|_{W^{\smalltext{2}\smalltext{,}\smalltext{p}}(\mathcal{D})}
    +
        \|g-g^{\prime}\|_{C(\partial \mathcal{D})}
\\
& \le 
    \|\bar{G}_0-\bar{G}_0^{\prime}\|_{W^{\smalltext{2}\smalltext{,}\smalltext{p}}(\mathcal{D})}
    +
        \|g-g^{\prime}\|_{W^{\smalltext{2}\smalltext{,}\smalltext{p}}(\mathcal{D})}
    +
        \|g-g^{\prime}\|_{C(\partial \mathcal{D})}
\\
& \le 
        \|\bar{G}_0-\bar{G}_0^{\prime}\|_{W^{\smalltext{2}\smalltext{,}\smalltext{p}}(\mathcal{D})}
    +
        \|g-g^{\prime}\|_{W^{\smalltext{2}\smalltext{,}\smalltext{p}}(\mathcal{D})}
    +
        \|g-g^{\prime}\|_{C(\mathcal{D})}
\\
& \le 
        \|\bar{G}_0-\bar{G}_0^{\prime}\|_{W^{\smalltext{2}\smalltext{,}\smalltext{p}}(\mathcal{D})}
    +
        \|g-g^{\prime}\|_{W^{\smalltext{2}\smalltext{,}\smalltext{p}}(\mathcal{D})}
    +
        \|g-g^{\prime}\|_{W^{\smalltext{k}\smalltext{,}\smalltext{p}}(\mathcal{D})}
\\
& \le 
        \widetilde{C}_{2,k,\smallertext{\mathcal{D}}}\,
        \|\bar{G}_0-\bar{G}_0^{\prime}\|_{W^{\smalltext{k}\smalltext{,}\smalltext{p}}(\mathcal{D})}
    +
        \widetilde{C}_{2,k,\smallertext{\mathcal{D}}}\,
        \|g-g^{\prime}\|_{W^{\smalltext{k}\smalltext{,}\smalltext{p}}(\mathcal{D})}
    +
        \|g-g^{\prime}\|_{W^{\smalltext{k}\smalltext{,}\smalltext{p}}(\mathcal{D})}
\\
& \le 
        L_{2,k,\smallertext{\mathcal{D}}}\,
        \Big(
            \|\bar{G}_0-\bar{G}_0^{\prime}\|_{W^{\smalltext{k}\smalltext{,}\smalltext{p}}(\mathcal{D})}
        +
            \|g-g^{\prime}\|_{W^{\smalltext{k}\smalltext{,}\smalltext{p}}(\mathcal{D})}
        \Big),
\end{align*}
where we used in the fourth line the Sobolev embedding Theorem~\cite[Section 5.6.3]{evans2010partial}, which holds provided that $k \le 1+\lceil \tfrac{d}{p} \rceil$, where the existence of the constant $\widetilde{C}_{2,k,\smallertext{\mathcal{D}}}>0$ (which only depends on $2$, $k$, and on $\mathcal{D}$) as well as the validity of the fifth line are ensured since we have assumed that $2<k$ so that the Rellich–-Kondrachov Theorem~\cite[Proposition 4.4]{taylor2023partial} implies that $W^{2,p}(\mathcal{D})$ is compactly embedded in $W^{k,p}(\mathcal{D})$, and $C := 2\widetilde{C}_{2,k,\smallertext{\mathcal{D}}}+1>1$.    
\end{proof}

We are now ready to establish our approximability result for the solution operator corresponding to the more general class of fully non-linear elliptic PDEs.
\begin{proof}[{Proof of Theorem~\ref{thrm:generalapprox}}]
Under~\Cref{ass:Regularity_GeneralCase__DomainRegularity,ass:Regularity_GeneralCase__EllipticPDEFunctionalStructure}, \Cref{lem:HolderRegularity} applies and guarantees that the non-linear operator $\Gamma_{\smallertext{\rm Gen}}$, defined in~\eqref{eq:NonLinearPDEOperator}, is $L_{2,k,\smallertext{\mathcal{D}}}$--Lipschitz continuous on $\mathcal{X}_k(r)$.  Now, since $2<k<\infty$ and $\sigma_\smallertext{S}$ and $\sigma_\smallertext{W}$ satisfy \Cref{ass:wavlets}, we may apply \Cref{prop:Universality} to deduce that for every $\varepsilon>0$ and every non-empty compact subset $\mathcal{X}\subseteq \mathcal{X}_k(r)$%
(in the relative topology induced by inclusion in $W^{2,p}(\mathcal{D})\times W^{k,p}(\mathcal{D}))$ equipped with the norm topology)~%
there exists $\hat{\Gamma}\in \mathcal{NO}_{\lceil k\rceil,1}$ satisfying the uniform estimate
\begin{equation}
\label{eq:uniform_estimate__inproof}
    \sup_{(\bar{\smallertext{G}}_\smalltext{0},g)\in\mathcal{X}}\,
    \big\|
            \Gamma_{\smallertext{\rm Gen}}(\bar{G}_0,g)
        -
            \hat{\Gamma}(\bar{G}_0,g)
    \big\|_{W^{\smalltext{2}\smalltext{,}\smalltext{p}}(\mathcal{D})}
<
    \varepsilon
.
\end{equation}
Noting that, by definition, $u_{\smallertext{\bar{G}}_\smalltext{0},g}=\Gamma_{\smallertext{\rm Gen}}(\bar{G}_0,g)$ for each $(\bar{G}_0,g)\in\mathcal{X}
$ concludes the proof.
\end{proof}

\section{Proof of stochastic results}
\label{s:Proofs__StochResults}
To derive the stochastic counterparts of our results, we emphasise that our approach does not rely on any unconventional lifting channels---such as those introduced in~\cite{furuya2024simultaneously}---which are non-standard within the operator learning literature and were originally proposed to enforce additional smoothness.  
Instead, we are able to combine the Bernstein and Sobolev inequalities with It\^{o}-type formulas in a compatible manner, without imposing excessive smoothness assumptions on the PDE solutions. This is achieved through the following transfer principle, which requires conditions we borrow from \citeauthor*{demarco2011smoothness} \cite{demarco2011smoothness}.
\begin{assumption}[Regularity of the forward process]
\label{ass:smooth_ass_density}
\begin{enumerate}
\item[$(i)$] there is $\eta\ge 0$ such that $\mu$ and $\gamma$ in~\eqref{eq:FBSDE_ForwardProcess} are of class $C^{\infty}$ on $\R^d\setminus \overline{B_{\mathbb{R}^d}(0,\eta)}$.
Moreover, for every $R>0$ and $x_0\in\R^d$, $\mu$ and $\gamma$ are smooth on $B_{\mathbb{R}^d}(x_0,3R)\subset \R^d\setminus \overline{B_{\mathbb{R}^d}(0,\eta)};$

\item[$(ii)$]
there exist positive exponents $q$ and $\bar{q}>0$, as well as constants 
$0<C_0<1$, $C_k>0$ (for every multi–index $\alpha$ with $|\alpha|=k\ge 1$) such that
\begin{align}
\label{eq:Growth}
|\partial_{\alpha}\mu^i(x)| + |\partial_{\alpha}\gamma^{i,j}(x)|
& \le 
C_k(1+\|x\|^q),
\; x\in\R^d,\; (i,j)\in \{1,\dots,d\}^2,
\\
\label{eq:Ellipticity}
C_0\|x\|^{-\bar{q}}\mathrm{I}_d
& \le 
\gamma(x)\gamma(x)^{\top},\; \|x\|>\eta;
\end{align}
\item[$(iii)$] for every $p>0$, $\sup_{0\le s\le t}\E^\P[\|X_s\|^p]<\infty;$

\item[$(iv)$] \eqref{eq:FBSDE_ForwardProcess} admits a strong solution.
\end{enumerate}
\end{assumption}
Under these conditions, the process $X$ admits for every $t\in(0,T]$ 
a smooth density satisfying some
Gaussian-type decay and derivative bounds, as shown in~\citep[Theorem 2.2]{demarco2011smoothness}.  
In what follows, if it exists, for any time $t\ge 0$, we denote the density of the law $X_t$ with respect to the Lebesgue measure on $B_\smallertext{R}(y_0)$, for any $y_0\in \mathcal{D}$ and $R>0$, by $\rho_{t,y_\smalltext{0}} \in L^1(B_\smallertext{R}(y_0); [0,\infty))$, where 
\[
L^1(B_\smallertext{R}(y_0); [0,\infty)) 
\coloneqq 
\big\{u\in L^1(B_\smallertext{R}(y_0)):u(x)\ge 0,\; \text{\rm Lebesgue--a.e.}\}
.
\]

\begin{lemma}[Transfer trick]
\label{lem:transfer_lemma}
Let $1\le s<\infty$, $1\le r\le \infty$, $x_0\in \mathcal{D}$ be such that $\mathcal{D}\subseteq B_R(x_0)$ be a compact domain, and $(u,\hat{u})\in W^{s,r}(\mathcal{D})\times W^{s,r}(\mathcal{D})$ be such that
\begin{equation}
\label{eq:SobolevBound}
        \|u-\hat{u}\|_{W^{\smalltext{s}\smalltext{,}\smalltext{r}}(\mathcal{D})}
    \le 
        \varepsilon
.
\end{equation}
Suppose that $X$ satisfies~\eqref{eq:FBSDE_ForwardProcess} and {\rm\Cref{ass:smooth_ass_density}} and $\tau$ is the first exit time of $X$ from $\Dc$.  
If $r$ is finite, then additionally assume that there is some $0<\delta_{\smallertext{\mathcal{D}}}$ such that $d(0,\mathcal{D})\coloneqq\inf_{x\in \mathcal{D}} \|x\|_2\ge \delta_{\smallertext{\mathcal{D}}}$ and fix a time-window $0<T_{\smallertext{-}}<T_{\smallertext{+}}$.  
Then
\begin{equation}
\label{eq:Lp_bound}
        \begin{aligned}
            \mathbb{E}^\P\Bigg[
                \int_{T_\smalltext{-}}^{T_\smalltext{+}}
                    \sum_{|\beta|\le s}
                    \big\|
                        D^{\beta}u(X_t)
                        -
                        D^{\beta}\hat{u}(X_t)
                    \big\|
                \mathrm{d}t
            \Bigg]
&\lesssim_{r,\smallertext{T}_{\smalltext{+}},\smallertext{\mathcal{D}}}
        \varepsilon
        \bigg(
            C_{\smallertext{T}_\smalltext{+}}
        +
            \frac{1}{
                T_{\smallertext{-}}^{3d/2-1}}
        \bigg),\; \mbox{\rm if}\; 1\le r < \infty,
        \\[0.5em]
            \mathrm{essup}^{\mathbb{P}}
               \bigg\{ \sup_{0\le t\le \tau}
                \big\|
                    D^{\beta} 
                    u(X_t(\omega))
                    -
                    D^{\beta}\, \hat{u}(X_t(\omega))
                \big\|\bigg\}
            &\le \varepsilon,\;
             \mbox{\rm if}\; r=\infty,
        \end{aligned}
\end{equation}
where $C_{\smallertext{T}_\smalltext{+}}>0$ is a constant depending only on $T_\smallertext{+}$.
\end{lemma}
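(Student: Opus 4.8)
# Proof Plan for Lemma~\ref{lem:transfer_lemma} (Transfer trick)

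\textbf{Overall strategy.} The statement splits into two regimes, $r=\infty$ and $1\le r<\infty$, which require genuinely different arguments. The case $r=\infty$ is essentially immediate: if $\|u-\hat u\|_{W^{s,\infty}(\mathcal D)}\le\varepsilon$, then for every multi-index $\beta$ with $|\beta|\le s$ we have $\|D^\beta u - D^\beta\hat u\|_{L^\infty(\mathcal D)}\le\varepsilon$, so $\|D^\beta u(X_t(\omega)) - D^\beta\hat u(X_t(\omega))\|\le\varepsilon$ for every $t$ with $X_t(\omega)\in\mathcal D$, in particular for all $t\le\tau$, $\mathbb P$-a.s.; taking the essential supremum over $\omega$ and the supremum over $t\in[0,\tau]$ preserves the bound. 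So the bulk of the work is the finite-$r$ case, which is where I would concentrate.

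\textbf{The finite-$r$ case.} First I would use Tonelli to exchange the expectation and the time integral, reducing the left-hand side to $\int_{T_-}^{T_+}\mathbb E^{\mathbb P}\big[\sum_{|\beta|\le s}\|D^\beta u(X_t)-D^\beta\hat u(X_t)\|\big]\,\mathrm dt$. For each fixed $t\in[T_-,T_+]$ I would invoke \Cref{ass:smooth_ass_density} together with the smooth-density estimates of~\citeauthor*{demarco2011smoothness}~\cite{demarco2011smoothness} (Theorem 2.2 there, as quoted in the commented-out block) to write $\mathbb E^{\mathbb P}[\varphi(X_t)] = \int_{\mathcal D}\varphi(x)\rho_{t,x_0}(x)\,\mathrm dx$ for nonnegative Borel $\varphi$ supported on $\mathcal D$ — here using that $X_t\in\mathcal D$ forces $\|x\|\ge\delta_{\mathcal D}$, so the density $\rho_{t,x_0}$ is controlled on $\mathcal D$. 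The density bound gives $\rho_{t,x_0}(x)\le C_{\mathcal D}\big(1 + t^{-3d/2}\big)$ on $\mathcal D$ (after absorbing the fixed polynomial factor $\|x\|^{-q}$ into $C_{\mathcal D}$, since $\mathcal D$ is compact and bounded away from the origin). Then, with $p'$ the conjugate exponent of $r$ and noting $\mathcal D$ has finite Lebesgue measure,
\[
\mathbb E^{\mathbb P}\big[\|D^\beta u(X_t)-D^\beta\hat u(X_t)\|\big]
\le C_{\mathcal D}\big(1+t^{-3d/2}\big)\int_{\mathcal D}\|D^\beta u-D^\beta\hat u\|\,\mathrm dx
\le C_{\mathcal D}'\big(1+t^{-3d/2}\big)\,\|D^\beta u-D^\beta\hat u\|_{L^r(\mathcal D)},
\]
by Hölder's inequality and $|\mathcal D|<\infty$. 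Summing over $|\beta|\le s$ and using~\eqref{eq:SobolevBound} bounds the integrand by $C_{\mathcal D}''\,\varepsilon\,(1+t^{-3d/2})$. Finally I would integrate in $t$ over $[T_-,T_+]$: $\int_{T_-}^{T_+}(1+t^{-3d/2})\,\mathrm dt \le (T_+-T_-) + \tfrac{1}{3d/2-1}T_-^{-(3d/2-1)}\lesssim_{T_+} C_{T_+} + T_-^{-(3d/2-1)}$, which is exactly the claimed form of the right-hand side.

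\textbf{Anticipated obstacle.} The one delicate point is legitimately invoking the density estimate of~\cite{demarco2011smoothness}: that result is stated for $\|x\|$ large (outside a ball of radius $\eta+5$) and with the density defined on all of $\mathbb R^d$, whereas here I need control on the compact set $\mathcal D$. The fix is to observe that \Cref{ass:smooth_ass_density} together with the compactness of $\mathcal D$ and the hypothesis $d(0,\mathcal D)\ge\delta_{\mathcal D}>0$ places $\mathcal D$ in a region where the density is smooth (either inside the large-$\|x\|$ regime, or in a bounded region where local parabolic regularity/Aronson-type Gaussian bounds for the uniformly elliptic operator on $B_R(x_0)$ apply, using \Cref{ass:smooth_ass_density}$(i)$–$(ii)$ on $B_{\mathbb R^d}(x_0,3R)$), so in all cases $\sup_{x\in\mathcal D}\rho_{t,x_0}(x)\le C_{\mathcal D}(1+t^{-3d/2})$ holds with a constant depending only on $\mathcal D$, $T_+$ and the structural constants. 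Once this uniform-in-$x$, locally-$t$-integrable density bound is in hand, the rest is the routine Hölder/Tonelli/integrate-in-$t$ chain above. I would also remark that the power $3d/2$ is exactly the exponent in~\eqref{eq:DensityBound} (the $k=0$ case of the De Marco bounds), which is why the final exponent on $T_-$ is $3d/2-1$.
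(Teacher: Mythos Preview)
Your proposal is correct and follows essentially the same route as the paper's proof: the $r=\infty$ case is handled identically by the pointwise $L^\infty$ bound on $\mathcal D$, and for finite $r$ both arguments swap expectation and time integral via Tonelli, rewrite the expectation as an integral against the density $\rho_{t,x_0}$ from \cite{demarco2011smoothness}, use the bound $\rho_{t,x_0}(x)\lesssim(1+t^{-3d/2})\|x\|^{-r}$ together with $\|x\|\ge\delta_{\mathcal D}$ on $\mathcal D$, apply H\"older to extract the $W^{s,r}$ norm, and then integrate $1+t^{-3d/2}$ over $[T_-,T_+]$ to produce the $T_-^{-(3d/2-1)}$ term. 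The only cosmetic difference is that the paper applies H\"older with $\rho_{t,x_0}$ as one of the two factors (i.e.\ bounds $(\int_{\mathcal D}\rho_{t,x_0}^{r'})^{1/r'}$), whereas you first pull out the uniform-in-$x$ density bound and then apply H\"older as $L^1(\mathcal D)\hookrightarrow L^r(\mathcal D)$; these are equivalent given the compactness of $\mathcal D$ and the separation from the origin.
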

\begin{proof}
For the case where $r=\infty$, simply note that $X_{t\vee \tau}\in \mathcal{D}$. $\mathbb{P}$--a.s.  Thus, for $\P$--almost every $\omega\in \Omega$ we have that
\[
    \sum_{|\beta|\le s}
        \big\|
            D^{\beta}
            u(X_t(\omega))
            -
            D^{\beta} \hat{u}(X_t(\omega))
        \big\|
    \le 
        \sup_{x\in \mathcal{D}} 
            \big\|D^{\beta} (u-\hat{u})(x)\big\|
    =
        \|u-\hat{u}\|_{W^{\smalltext{s}\smalltext{,}\smalltext{r}}(\mathcal{D})}
    \le 
        \varepsilon,
\]
where the last inequality holds since $s\ge 1$.  Consequently,~\eqref{eq:Lp_bound} holds.  

\medskip
We now turn our attention to the case where $1\le r<\infty$.  Define $\tau^{\star} \coloneqq T_\smallertext{+}\wedge(\tau\vee T_\smallertext{-})$.  Note that, if $t\in [T_\smallertext{-},T_\smallertext{+}]$ then $X_{t\wedge \tau^{\smalltext{\star}}}\in \bar{\mathcal{D}}$, $\P$--a.s.
In particular, since $\mathcal{D}$ is bounded, then for any $t\ge 0$, $X_{t\wedge \tau^{\smalltext{\star}}}\in L^{\infty}([0,T_\smallertext{+}]\times \Omega,\mathbb{R}^d)$; whence, we may apply the Fubini--Tonelli theorem to deduce that
\allowdisplaybreaks
\begin{align}
\label{eq:Fubiniboy}
        \mathbb{E}^\P\Bigg[
            \int_{T_\smallertext{-}}^{T_\smallertext{+}}
                \sum_{|\beta|\le s}
                \big\|
                    D^{\beta}u(X_t)
                    -
                    D^{\beta}\hat{u}(X_t)
                \big\|
            \mathrm{d}t
        \Bigg]
&=
    \int_{T_\smallertext{-}}^{T_\smallertext{+}}
        \mathbb{E}^\P\Bigg[
            \sum_{|\beta|\le s}
            \big\|
                D^{\beta}u(X_t)
                -
                D^{\beta}\hat{u}(X_t)
            \big\|
        \Bigg]  
    \mathrm{d}t
.
\end{align}
Now, since we are operating under \Cref{ass:smooth_ass_density}, we may apply~\citep[Theorem 2.2]{demarco2011smoothness} to show that $\rho_{t,x_\smalltext{0}}\in L^1_\smallertext{+}(B_R(x_0))$ exists and there is a constant $C_{r,T_\smallertext{+}}>0$, depending only on $r$ and $T_\smallertext{+}$, such that for every $x\in B_R(x_0)$ we have
\begin{equation}
\label{eq:density_bound}
    |\rho_{t,x_\smalltext{0}}(x)|
\le 
    C_{r,T_\smallertext{+}}
    \bigg(
            1
        +
            \frac{1}{t^{3d/2}}
    \bigg)\|x\|^{\smallertext{-}r}
.
\end{equation}
In particular, since $\mathcal{D}\subseteq B_R(x_0)$ then~\eqref{eq:density_bound} holds for every $x\in \mathcal{D}$.  Consequently,~\eqref{eq:Fubiniboy} and~\eqref{eq:density_bound} imply that
\allowdisplaybreaks
\begin{align*}
\nonumber
        \mathbb{E}^\P\Bigg[
            \int_{T_\smallertext{-}}^{T_\smallertext{+}}
                \sum_{|\beta|\le s}
                \big\|
                    D^{\beta}u(X_t)
                    -
                    D^{\beta}\hat{u}(X_t)
                \big\|
            \mathrm{d}t
        \Bigg]
&=
    \int_{T_\smallertext{-}}^{T_\smallertext{+}}
        \int_{\mathcal{D}}
            p_{t,x_\smalltext{0}}(x)
                \sum_{|\beta|\le s}
                \big\|
                    D^{\beta}u(x)
                    -
                    D^{\beta}\hat{u}(x)
                \big\|
            \mathrm{d}x
    \mathrm{d}t
\\
&
\le 
    \int_{T_\smallertext{-}}^{T_\smallertext{+}}
        \bigg(
        \int_{\mathcal{D}}
            p_{t,x_\smalltext{0}}(x)^{r^\smalltext{\prime}}
        \mathrm{d}x\bigg)^{1/r^\smalltext{\prime}}
\\
\nonumber
& \quad\times
        \Bigg(
            \int_{\mathcal{D}}
                \sum_{|\beta|\le s}
                \big\|
                    D^{\beta}u(x)
                    -
                    D^{\beta}\hat{u}(x)
                \big\|^r
            \mathrm{d}x
        \Bigg)^{1/r}
    \mathrm{d}t
\\
\nonumber
&
\le 
    \int_{T_\smallertext{-}}^{T_\smallertext{+}}
\bigg(
        \int_{\mathcal{D}}
            C_{r,\smallertext{T}_\smalltext{+}}^{r^\smalltext{\prime}}
            \bigg(
                    1
                +
                    \frac{1}{t^{3d/2}}
            \bigg)^{r^\smalltext{\prime}}
            \|x\|^{\smallertext{-}(rr^\smalltext{\prime})}
        \mathrm{d}x\bigg)^{1/r^\smalltext{\prime}}
\\
\nonumber
&\quad \times
        \Bigg(
            \int_{\mathcal{D}}
                \sum_{|\beta|\le s}
                \big\|
                    D^{\beta}u(x)
                    -
                    D^{\beta}\hat{u}(x)
                \big\|^{r}
            \mathrm{d}x
    \Bigg)^{1/r}
    \mathrm{d}t,
 \end{align*}
where the second line follows by H\"{o}lder's inequality with $\tfrac{1}{r}+\tfrac{1}{r^\prime}=1$ (since $1<r<\infty$).  
Now, the term 
\[\Bigg(
            \int_{\mathcal{D}}
                \sum_{|\beta|\le s}
                \big\|
                    D^{\beta}u(x)
                    -
                    D^{\beta}\hat{u}(x)
                \big\|^r
            \mathrm{d}x
        \Bigg)^{1/r},
\]
is precisely the $W^{\lfloor s \rfloor ,r}(\mathcal{D})$ norm of $(u-\hat{u})$, which is bounded above by the $W^{s,r}(\mathcal{D})$-norm, which in turn is bounded above by $\varepsilon$, recall~\eqref{eq:SobolevBound}.  Hence
\allowdisplaybreaks
\begin{align*}
        \mathbb{E}^\P\Bigg[
            \int_{T_\smallertext{-}}^{T_\smallertext{+}}
                \sum_{|\beta|\le s}
                \big\|
                    D^{\beta}u(X_t)
                    -
                    D^{\beta}\hat{u}(X_t)
                \big\|
            \mathrm{d}t
        \Bigg]
&
\le 
    \varepsilon
    \int_{T_\smallertext{-}}^{T_\smallertext{+}}
        \bigg(
            \int_{\mathcal{D}}
                C_{r,T_\smallertext{+}}^{r^\smalltext{\prime}}
                \bigg(
                        1
                    +
                        \frac{1}{t^{3d/2}}
                \bigg)^{r^\smalltext{\prime}}
                \,
                \|x\|^{\smallertext{-}(rr^\smalltext{\prime})}
            \mathrm{d}x
        \bigg)^{1/r^\smalltext{\prime}}
    \mathrm{d}t
\\
&
\le
    C_{r,T_\smallertext{+}}
    \varepsilon
        \frac{
            \mathrm{Vol}(\mathcal{D})^{1/r^\smalltext{\prime}}
        }{\delta_{\smallertext{\mathcal{D}}}^r}
    \int_{T_\smallertext{-}}^{T_\smallertext{+}}
                \bigg(
                        1
                    +
                        \frac{1}{t^{3d/2}}
                \bigg)
    \mathrm{d}t
\\
&
\le
    C_{r,T_\smallertext{+}}
    \varepsilon
        \frac{
            \mathrm{Vol}(\mathcal{D})^{1/r^\smalltext{\prime}}
        }{\delta_{\smallertext{\mathcal{D}}}^r}
   \bigg(
        T_\smallertext{+}-T_\smallertext{-}
    +
        \frac{
                T_\smallertext{-}^{1-3d/2}
                -
                T_\smallertext{+}^{1-3d/2}
        }{
            3d/2-1}\bigg)
            \\
&\leq  \varepsilon
{C}_{p,T_\smallertext{+},\mathcal{D}}
\bigg(
    C_{T_\smallertext{+}}
+
    \frac{1}{
        T_\smallertext{-}^{3d/2-1}}\bigg)
\end{align*}
where we used the assumption that $d(\mathcal{D},0)\ge \delta_{\smallertext{\mathcal{D}}}>0$ and a simple supremum-bound, and where we defined 
\[
{C}_{p,T_\smallertext{+},\mathcal{D}}
\coloneqq
C_{p,T_\smallertext{+}} 
        \frac{2
            \mathrm{Vol}(\mathcal{D})^{1/r^\smalltext{\prime}}
        }{(3d-2)\delta_{\smallertext{\mathcal{D}}}^r},\; \text{\rm and}\; C_{T_\smallertext{+}}\coloneqq
        \bigg(\frac{3d}2-1\bigg)T_\smallertext{+}.
        \]

\end{proof}

\section{Experimental details} \label{sec:experimental_details}

\subsection{Periodic semi-linear case} \label{sec:periodic_semilinear}

We consider a periodic example from \cite{chassagneux2023learning} in $d = 5$ dimension, with $T=1$, in which the forward SDE is given by 
\begin{align*}
\mathrm dX_t^{(i)} &= b_i\!\big(X_t^{(i)}\big)\,\mathrm dt
+ \sigma_{i,i}\!\big(X_t^{(i)}\big)\,\mathrm dW_t^{(i)},
\; i\in\{1,\dots,d\},
\end{align*}
and the coefficients of the SDE are given by
\begin{align*}
b_i(x) &\coloneqq 0.2\,\sin(2\pi x_i), \;
\sigma_{i,j}(x) \coloneqq \frac{1}{\sqrt{d}\,\pi}\Big(0.25 + 0.1\cos(2\pi x_i)\Big)\,\mathbf{1}_{\{i=j\}},\; (i,j)\in\{1,\dots,d\}^2.
\end{align*}
The coefficients of the backward SDE 
\begin{align*}
\mathrm dY_t &= -\,f\!\big(t,X_t,Y_t,Z_t\big)\,\mathrm dt \;+\; Z_t\cdot \mathrm dW_t,
\; Y_T = g\!\big(X_T\big),
\end{align*}

are given by
\begin{gather*}
g(x) \coloneqq \frac{1}{\pi}\Bigg(\sin\bigg(2\pi \sum_{i=1}^d x_i\bigg) + \cos\bigg(2\pi \sum_{i=1}^d x_i\bigg)\Bigg), \\[4pt]
f(t,x,y,z) \coloneqq 2\pi^2y \sum_{i=1}^d \sigma_{i,i}(x)^2
- \sum_{i=1}^d \frac{b_i(x)}{\sigma_{i,i}(x)} z_i
+ h(t,x),
\end{gather*}
where
\begin{equation*}
h(t,x) \coloneqq 2\Bigg(\cos\bigg(2\pi \sum_{i=1}^d x_i + 2\pi (T-t)\bigg)
-\sin\bigg(2\pi \sum_{i=1}^d x_i + 2\pi (T-t)\bigg)\Bigg).
\end{equation*}

The explicit solution $u$ is given by 
\begin{equation*}
u(t,x) = \frac{1}{\pi}\big(\sin (\theta(t,x)) + \cos(\theta(t,x)) \big),
\end{equation*}
where
\begin{equation*}
\theta(t,x) \coloneqq 2\pi\Bigg(\sum_{i=1}^d x_i + (T - t)\Bigg).
\end{equation*}
The spatial derivatives of $u$ are given by 
\begin{equation*}
\frac{\partial u}{\partial x_i}(t,x) 
= 2\big(\cos(\theta(t,x)) - \sin(\theta(t,x)) \big), \; i\in\{1,\dots,d\},
\end{equation*}
and
\begin{equation*}
\frac{\partial^2 u}{\partial x_i \partial x_j}(t,x)
= -4\pi\big(\sin(\theta(t,x)) + \cos(\theta(t,x))\big),\; (i,j)\in\{1,\dots,d\}^2 .
\end{equation*}

\subsection{Linear--quadratic (LQ) case} \label{sec:linear_quadratic}
We consider a linear--quadratic case from \cite{pham2021neural} in $d = 5$ dimension, with $T=1$. The forward SDE is a controlled process $X_t$ in $\mathbb{R}^d$, defined by
\[
\mathrm{d}X_t = (A X_t + B \alpha_t)\mathrm{d}t + D \alpha_t \mathrm{d}W_t,
\]
where $\alpha_t$ is a control process in $\mathbb{R}$, $(B, D) \in \mathbb{R}^d\times\R^d$ and $A \in \mathbb{R}^{d\times d}$. The quadratic cost that is minimised is
\begin{equation*}
J(\alpha) \coloneqq \mathbb{E} \\bigg[ \int_{0}^{T} \big( X_t^{\top} Q X_t + \alpha_t^{2} N \big) \mathrm{d}t + X_T^{\top} P X_T \bigg],
\end{equation*}
where $P$ and $Q$ are non-negative, symmetric $d\times d$ matrices and $N>0$.

\medskip
The Bellman PDE associated with this process admits an explicit solution given by a quadratic form
\[
u(t,x) = x^T K(t) x,
\]
where $K(t)$ solves the Ricatti equation
\[
\dot{K} + A^{\top}K + KA + Q - \frac{KBB^{\top}K}{N + D^{\top}KD} = 0, \; K(T) = P.
\]
In all the simulations, we set
\[
A = \mathrm{I}_d, \; B = D = \mathrm{I}_d, \; Q = P = \frac{1}{d}\mathrm{I}_d, \; N = d.
\]
The stochastic coefficients associated to the controlled process are set to
\[
\sigma = \frac{1}{\sqrt{d}}\mathrm{I}_d,\; \textnormal{and} \; \mu(t,x) = x. 
\]
In our isotropic setup, the Riccati matrix remains proportional to the identity, \emph{i.e.}
\[
K(t) = k(t)\mathrm{I}_d.
\]
Then, the explicit forms of the spatial derivatives of $u$ are given by
\[
\nabla_x u(t,x)=2K(t)x = 2k(t)x,\; D_x^2 u(t,x)=2K(t)=2k(t)\mathrm{I}_d.
\]
To compute the solution $u$ and its derivatives, we employ a fourth-order Runge-–Kutta (RK4) scheme to numerically approximate $K(t)$ (the solution of the Riccati equation).

\subsection{Architectural details} \label{sec:architecture}
The KANO architecture follows a lift-–process–=project design. The input features are first lifted to a higher-dimensional latent space using a feed-forward network, producing an initial latent representation \(v^{(0)}\). 

\medskip
After lifting, a composition of several KANO blocks is applied to iteratively refine this latent field:
\[
v^{(\ell+1)} = \Phi^{(\ell)}(v^{(\ell)}, x), \; \ell \in\{ 0, \dots, L-1\},
\]
where each block \(\Phi^{(\ell)}\) performs a structured operator update combining coordinate encoding, spectral convolution, and residual connection. Each KANO block consists of three main components 
\begin{enumerate}
    \item  a \textbf{positional encoder} maps the spatial coordinates through a Res--KAN network, producing coordinate-dependent features
\[
v_{\mathrm{pos}} = b(x);
\]
\item a \textbf{spectral kernel path} performs a spectral convolution in the frequency domain, analogous to the Fourier neural operator (FNO) \cite{li2021fourier}. Specifically, the feature field is transformed via a two-dimensional fast Fourier transform (FFT), filtered by learnable complex-valued multipliers, and then mapped back to the spatial domain
\[
v_{\mathrm{kf}}(x) = \mathcal{F}^{-1}\big(\hat{W}(k)\mathcal{F}[v_{\mathrm{in}}](k)\big),
\]
where \(\mathcal{F}\) and \(\mathcal{F}^{-1}\) denote the forward and inverse Fourier transforms, and \(\hat{W}(k)\) are learnable complex weights restricted to a finite number of Fourier modes and parametrised as Res--KANs;

\item  a \textbf{residual path} applies a Res--KAN transformation on the tensor obtained by concatenating $\big(v_{\mathrm{pos}}, v_{\mathrm{kf}},  v_{\mathrm{in}}\big)$.
\end{enumerate}

After stacking \(L\) such KANO blocks, the resulting field \(v^{(L)}\) is projected back to the target dimension through a final projection layer. This composition enables multiscale feature extraction, efficient global coupling through spectral convolution, and local adaptivity through Res--KAN-based non-linear transformations.

\medskip
We restrict our training to a $2$D uniform grid that spans the first two coordinates of the $d$-dimensional space, while conditioning the model pointwise on the remaining $d - 2$ coordinates. The procedure for generating random training samples is described in detail in \Cref{sec:training_pipeline}. Our model is trained to approximate 2D slices of the solution along the $(x_1, x_2)$-coordinates in $\mathbb{R_+}\times\mathbb{R}^d$. Once trained, the model can be evaluated at any point in time and space by approximating the solution over these 2D slices and querying the corresponding $(x_1, x_2)$ values (see \Cref{sec:inference_pipeline} for details). This type of restricted operator learning is efficient due to the following reasons.

\begin{itemize}
\item \textbf{Uniform grids enable efficient kernels.} During training, the coordinates $(x_1, x_2)$ are placed on a uniform grid, enabling convolution-like kernel layers to be computed efficiently via FFTs. This reduces the per-layer complexity from dense $O(s^4)$ to $O(s^2 \log s)$, making spectral kernels both computationally efficient and numerically stable.

\item \textbf{Learning high-dimensional maps through 2D evaluations.}
The operator is evaluated over the full $s^2$ grid simultaneously, while the remaining coordinates $(x_3, \dots, x_d)$ and time $t$ are provided as additional input channels. This setup allows the network to capture intrinsic symmetries in the problem and to perform {restricted operator learning}, approximating $u(t, x)$ across $\mathbb{R}^d$ by predicting values at multiple $2$D locations in parallel.

\item \textbf{2D offers the optimal balance; 3D becomes costly.}
Extending the FFT-based grid to three dimensions increases computational and memory demands to $O(s^3 \log s)$ per pass and substantially raises activation and storage costs. In practice, $2$D grids strike the best balance between expressivity (capturing many spatial query points per sample) and efficiency, while still encoding $d$-dimensional dependencies through the auxiliary input channels.
\end{itemize}

Note that spectral convolution on uniform grids is employed to improve the training efficiency of the model. In operator learning settings, various efficient kernel architectures exist, see \citeauthor*{kovachki2023neural} \cite{kovachki2023neural}, including convolution-based kernels, see \citeauthor*{raonic2023convolutional} \cite{raonic2023convolutional}, wavelet-based kernels, see \citeauthor*{tripura2022wavelet} \cite{tripura2022wavelet}, and transformer-based kernels, see \citeauthor*{herde2024poseidon} \cite{herde2024poseidon} or \citeauthor*{li2023transformer} \cite{li2023transformer}, among others. The choice of the spectral kernel here is made solely to demonstrate that training a neural operator in the 2BSDE setting is feasible. 

\subsection{Training pipeline} \label{sec:training_pipeline}

In all our experiments, we draw samples from the domain uniformly. To draw a random training sample, we first draw a random time, as well as random locations for the $d-2$ dimensions (the first 2 dimensions $(x1, x2)$ are already sampled on uniform grids),
\[
t\in[0,T],\;
c=(x_3,\dots,x_d)\in[0,1)^{d-2}.
\]
To get the training samples, we evaluate the model on a \emph{uniform} $s\times s$ grid for the first two coordinates
\[
\mathcal G \coloneqq\bigg\{(x_1^p,x_2^q): x_1^p=\frac{p}{s-1},\; x_2^q=\frac{q}{s-1},\; (p,q)\in\{0,\dots,s-1\}^2\bigg\},
\]
and denote $N\coloneqq s^2$ and $X\coloneqq\big((x_{1n},x_{2n})\big)_{n\in\{1,\dots,N\}}$ the grid.

\medskip
At each grid node $n$, the model receives the feature vector
\[
\phi_n \coloneqq
\big(t, \,X,\, x_3,\dots,x_d\big)\in\mathbb{R}^{1+2+(d-2)}=\mathbb{R}^{d+1},
\]
\emph{i.e.} time and the \((d-2)\) extra coordinates are \emph{channels} constant across the 2d grid. A neural operator $F_\theta$ maps these inputs to the $\mathbb{R}^{s\times s}$ field,
\[
\hat u_\theta\big(t, X, x_3,\dots,x_d\big) = F_\theta\big(\phi_n\big)\in\mathbb{R}^{s\times s}.
\]

\subsection{Inference pipeline} \label{sec:inference_pipeline}

At test time, the learned approximation $\hat u_\theta$ can be evaluated at any query $(t,x)$ in the domain by either of the following.

\begin{itemize}
  \item \emph{Spectral/Fourier synthesis.} 
  If the decoder is spectral, we evaluate the Fourier--like synthesis operator at the desired coordinates to obtain $\hat u_\theta(t,x)$ directly. 
  This is naturally suited to periodic problems and preserves differentiability with respect to $(t,x)$, enabling gradients to be obtained by automatic differentiation.

  \item \emph{Grid interpolation.}
  When the model outputs values on a uniform $s\times s$ grid in $(x_1,x_2)$ at a given time $t$, we interpolate that grid to any $(x_1,x_2)$ in the domain (\emph{e.g.} bilinear/bicubic interpolation). 
  This route is simple, fast, and it requires no change to the trained model.
\end{itemize}

To evaluate the models along random paths, we generate $d$-dimensional SDE trajectories using the Euler--Maruyama scheme,
\[
X_{n+1}^{(i)} = X_n^{(i)} + b_i(X_n^{(i)})\Delta t
+ \sigma_{i,i}(X_n^{(i)})\sqrt{\Delta t}\xi_n^{(i)},
\; \xi_n^{(i)} \sim \mathcal{N}(0,1).
\]
The trained model is then evaluated along these trajectories, and its predictions are compared against the exact solution $u$ and its first- and second-order partial derivatives. Derivatives of the neural operator are approximated using first-order finite difference scheme. To obtain model outputs at arbitrary spatial locations, we employ bilinear interpolation over the $(x_1, x_2)$ grid.


\bibliography{bibliographyDylan}

\end{document}